\newenvironment{customthm}[1]
{\innercustomthm}
{\endinnercustomthm}
\newenvironment{manualtheorem}[1]{%
  \IfBlankTF{#1}
    {}
    {}%
  \manualtheoreminner
}{\endmanualtheoreminner}
\newtheorem{lemma}{Lemma}[section]
\newtheorem{assumption}{Assumption}[section]
\newtheorem{proposition}{Proposition}[section]
\newcommand*{\transpose}{\bgroup\@transpose}
\newcolumntype{?}{!{\vrule width 1pt}}
\newcommand{\wass}{\mathcal{W}}
\newcommand{\rd}{\mathrm{d}}
\newcommand{\vertiii}[1]{{\left\vert\kern-0.15ex\left\vert\kern-0.15ex\left\vert #1 
		\right\vert\kern-0.15ex\right\vert\kern-0.15ex\right\vert}}
\begin{document}
	
	\title{Normalizing Flow-based Differentiable Particle Filters}
	
	\author{Xiongjie Chen and Yunpeng Li
		\thanks{X. Chen and Y. Li are with the Center for Oral, Clinical \& Translational Sciences, Faculty of Dentistry, Oral \& Craniofacial Sciences, King's College London, London SE1 9RT, U.K. (email: xiongjie.chen@kcl.ac.uk; yunpeng.li@kcl.ac.uk). }
	}
	
	\markboth{IEEE TRANSACTIONS ON SIGNAL PROCESSING}%
	{Chen \MakeLowercase{\textit{et al.}}: Normalizing Flow-based Differentiable Particle Filters}
	
	
	\maketitle
	
	\begin{abstract}
		Recently, there has been a surge of interest in incorporating neural networks into particle filters, e.g. differentiable particle filters, to perform joint sequential state estimation and model learning for non-linear non-Gaussian state-space models in complex environments. Existing differentiable particle filters are mostly constructed with vanilla neural networks that do not allow density estimation. As a result, they are either restricted to a bootstrap particle filtering framework or employ predefined distribution families (e.g. Gaussian distributions), limiting their performance in more complex real-world scenarios. In this paper we present a differentiable particle filtering framework that uses (conditional) normalizing flows to build its dynamic model, proposal distribution, and measurement model. This not only enables valid probability densities but also allows the proposed method to adaptively learn these modules in a flexible way, without being restricted to predefined distribution families. We derive the theoretical properties of the proposed filters and evaluate the proposed normalizing flow-based differentiable particle filters' performance through a series of numerical experiments. 
	\end{abstract}
	
	\begin{IEEEkeywords}
		Sequential Monte Carlo, Differentiable Particle Filters, Normalizing Flows, Parameter Estimation, Machine Learning.
	\end{IEEEkeywords}
	\section{Introduction}
	
	Particle filters, also known as sequential Monte Carlo (SMC) methods, are a class of importance sampling-based methods developed for performing sequential state estimation tasks in state-space models~\cite{doucet2009tutorial,gordon1993novel,djuric2003particle}. Because particle filters do not assume the linearity or Gaussianity on the considered state-space model and produce consistent estimators~\cite{del2000branching,crisan2002survey,del1998measure,elvira2017adapting,elvira2021performance}, they are particularly suitable for solving non-linear non-Gaussian filtering problems and have been widely adopted in various domains~\cite{giremus2007particle,zhang2017multi,creal2012survey,ma2019discriminative}. 
 
    In cases where the state-space model of interest is known, particle filters can provide reliable approximations to posterior distributions of latent states. Since the celebrated bootstrap particle filter (BPF) was proposed~\cite{gordon1993novel}, a series of particle filtering algorithms have been developed. For instance, the auxiliary particle filter (APF) improves its sampling efficiency by employing an auxiliary variable, such that the particles that are more compatible with the next observation have higher chances of survival~\cite{pitt1999filtering,elvira2019elucidating,branchini2021optimized}. The variance of the Monte Carlo estimates is reduced in the Rao-Blackwellized particle filter (RBPF) by marginalizing out some latent states analytically~\cite{doucet2000rao,de2002rao}. We refer readers to~\cite{doucet2001sequential,godsill2019particle,doucet2009tutorial} for more detailed discussions of the above and several other variants of particle filtering methods.
	
    In many real-world applications, parameters in the state-space model of interest are often unknown. Several techniques have been proposed to estimate the parameters in the state-space model~\cite{kantas2015particle,kantas2009overview}, among which maximum likelihood estimation methods and Bayesian estimation methods are the two main approaches in this direction. In maximum likelihood methods~\cite{hurzeler2001approximating,ionides2006inference,malik2011particle}, estimates of the unknown parameters are obtained by searching for parameters that maximize the likelihood of observations given the estimate. In contrast, Bayesian estimation methods aim to estimate the posterior of parameters given observations through a specified prior distribution on the estimated parameters and the conditional likelihood of observations given parameters~\cite{olsson2008sequential,poyiadjis2011particle}. In addition, depending on whether the observations are from fixed datasets or streaming dataflows, both Bayesian estimation and maximum likelihood estimation methods can be further divided into off-line and on-line methods~\cite{andrieu2010particle,chopin2013smc2,perez2018probabilistic,crisan2002survey,lindsten2014particle}. 
	
    These parameter estimation methods have shown their effectiveness in certain scenarios, e.g. where the structure or a part of the parameters of the state-space model is known. Complex real-world cases require the learning of a full, complex state-space model from data. Recently, an emerging class of particle filters, often named differentiable particle filters (DPFs)~\cite{chen2023overview,jonschkowski18,karkus2018particle,corenflos2021differentiable,scibior2021differentiable,kloss2021train,zhu2020towards}, has received a surge of interest. Compared with classical parameter estimation techniques developed for particle filters, differentiable particle filters often make much less restrictive assumptions about the considered state-space model. Such flexibility makes them a promising tool for solving filtering tasks with complex high-dimensional environments, where the observations can be high-dimensional unstructured data such as images~\cite{jonschkowski18,karkus2018particle,corenflos2021differentiable}.
	
    Components of differentiable particle filters, including dynamic models, measurement models, and proposal distributions, are mostly constructed with neural networks and optimized by minimizing a loss function via gradient descent~\cite{jonschkowski18,karkus2018particle,le2018auto,naesseth2018variational,maddison2017filtering}. 
    Due to the discrete nature of the standard multinomial resampling, different resampling strategies have been developed to estimate the gradient of neural network parameters w.r.t. different loss functions~\cite{jonschkowski18, karkus2018particle,scibior2021differentiable,zhu2020towards,gama2023unsupervised,revach2022unsupervised}.  It was investigated in~\cite{kloss2021train} the impact of different design choices of dynamic models, measurement models, noise models, loss functions, and resampling schemes on the performance of differentiable particle filters. A detailed discussion of previous work that is most relevant to the proposed work is presented in Section~\ref{subsec:related_work}.
	
    One limitation of existing differentiable particle filters is that most of them only employ vanilla neural networks to construct their components~\cite{jonschkowski18,karkus2018particle,corenflos2021differentiable}. However, as vanilla neural networks do not allow density estimation, i.e. we do not know the probability density of their outputs, differentiable particle filters built with vanilla neural networks often include multiple levels of approximations such that desired statistical properties of standard particle filtering frameworks do not apply to these methods. For example, the transition density of particles is either modeled by simple distributions such as Gaussian distributions~\cite{karkus2018particle,corenflos2021differentiable} or ignored~\cite{jonschkowski18}. As a result, for all but a few trivial low-dimensional examples, existing differentiable particle filters are often restricted to the bootstrap particle filtering framework, making them susceptible to the weight degeneracy issue~\cite{bickel2008sharp}. 
	
	To address these issues, in this paper, we present a normalizing flow-based differentiable particle filtering framework. By leveraging (conditional) normalizing flows, the proposed method provides a flexible mechanism to model complex dynamics of latent states and design valid and effective proposal distributions. In addition, we use conditional normalizing flows to construct measurement models that admit valid probability densities.
	The contributions of this paper are as follows:
	\begin{itemize}
		\item We propose a normalizing flow-based differentiable particle filter (NF-DPF), which provides a flexible mechanism for modeling complex state-space models and admits valid probability densities for each component of the proposed method; 
		\item We establish convergence results for the proposed method, proving that the approximation error of the resulting Monte Carlo estimates vanishes when the number of particles approaches infinity; To the best of our knowledge, this is the first work that established such convergence properties for both predictive and posterior approximations in differentiable particle filters.
		\item We report that the proposed method leads to improved performance over state-of-the-art differentiable particle filters on a variety of benchmark datasets in this field.
	\end{itemize}
    Compared with existing differentiable particle filtering methods, the proposed method offers three main advantages. Firstly, since normalizing flows are universal approximators~\cite{papamakarios2021normalizing}, the proposed NF-DPF can theoretically approximate any dynamic models, proposal distributions, and measurement models arbitrarily well. Secondly, the proposed NF-DPF is not restricted to the bootstrap particle filtering framework and thus can alleviate the weight degeneracy issue. Lastly, in contrast to differentiable particle filters built with vanilla neural networks, all the components of the NF-DPF yield valid probability density. This enables maximum likelihood training of the NF-DPF, making it suitable for scenarios where ground-truth latent states are not accessible.
    
    Some of our initial explorations on building differentiable particle filters with normalizing flows were reported in abbreviated forms in conference papers~\cite{chen2021differentiable,chen2022conditional}. In this paper, more details of the proposed method are presented and discussed. Additionally, in this work, we establish convergence results for the proposed method and validate the effectiveness of the proposed method on a more extensive set of numerical experiments.
	
	The rest of this paper is organized as follows. A detailed review of previous work relevant to our work is presented in Section~\ref{subsec:related_work}. The problem statement is presented in Section~\ref{sec:problem_statement}. In Section~\ref{sec:preliminaries}, we provide the necessary background knowledge for introducing the proposed method. The proposed normalizing flow-based differentiable particle filter is detailed in Section~\ref{sec:nfdpf}. Convergence results of the proposed method are established in Section~\ref{sec:theoretical_results}. The performance of the proposed method is evaluated and compared with the other differentiable particle filters in Section~\ref{sec:experiments}. We conclude this paper in Section~\ref{sec:conclusion}.

	\subsection{Related Work}
	\label{subsec:related_work}
	Parameter estimation for particle filters has long been an active research area, and various techniques have been proposed to address this task in several different directions~\cite{kantas2009overview,kantas2015particle}. One type of such parameter estimation methods is the maximum likelihood (ML) methods~\cite{malik2011particle,hurzeler2001approximating,ionides2006inference,ionides2011iterated,zhao2013parameter,wills2008parameter,olsson2008sequential,andrieu2005line,legland1997recursive}. In~\cite{malik2011particle} and~\cite{hurzeler2001approximating}, importance sampling and common random numbers methods are respectively used to create a continuous version of the resampling step and learn parameters of particle filters by maximizing the marginal observation likelihood with gradient descent. To obtain a low-variance estimate of the gradient of log-likelihood, different variance reduction techniques have been proposed~\cite{ionides2006inference,ionides2011iterated}. An alternative approach that can maximize the log-likelihood in a numerically more stable way is the expectation-maximization (EM) algorithm~\cite{zhao2013parameter,wills2008parameter,olsson2008sequential}. 
    Another main category of techniques developed for estimating parameters of state-space model is the Bayesian parameter estimation methods, where the parameters to be estimated are assigned with a prior distribution and the estimate is characterized by the posterior distribution of the parameters given the observations~\cite{perez2018probabilistic,crisan2018nested,perez2021nested,andrieu2010particle,lindsten2014particle,chopin2013smc2,crisan2002survey,polson2008practical,fernandez2007estimating}. One typical example of Bayesian parameter estimation methods is the particle Markov chain Monte Carlo (PMCMC) method and its variants~\cite{andrieu2010particle,fernandez2007estimating,lindsten2014particle}, which use Markov chain Monte Carlo (MCMC) samplers to generate Monte Carlo estimates of the parameter posterior. 
    
    One common assumption in the aforementioned methods is that the structures or part of parameters of the dynamic and measurement models are known, which often cannot be satisfied in real-world applications. To alleviate this limitation, several methods resort to combining particle filtering methods with machine learning tools such as neural networks and gradient descent. We refer to these methods as differentiable particle filters~\cite{chen2023overview,rosato2022efficient,jonschkowski18,karkus2018particle,corenflos2021differentiable,scibior2021differentiable,kloss2021train,ma2020particle}.
    
    For dynamic models in differentiable particle filters, it was proposed in~\cite{jonschkowski18,karkus2018particle} to parametrize dynamic models using fully-connected neural networks with previous states and given actions as inputs. In~\cite{corenflos2021differentiable, scibior2021differentiable,ma2020particle}, recurrent neural networks such as long short-term memory (LSTM) networks and gated recurrent unit (GRU) networks were applied in differentiable particle filters to model the transition of latent states. Dynamic models with known functional forms are considered in~\cite{kloss2021train}, while the covariance matrices for dynamic noise variables need to be learned. To the best of our knowledge, existing dynamic models of differentiable particle filters are limited to pre-specified distribution families, such as Gaussian distributions. This limitation highlights the need for a more general and flexible framework to enable differentiable particle filters to construct complex latent dynamics.

    For measurement models in existing differentiable particle filters, they are either only able to produce unnormalized probability densities or limited to pre-defined simple distribution families like Gaussian distributions. For example, in robot localization tasks reported in~\cite{jonschkowski18,karkus2018particle,ma2020particle}, the unnormalized conditional likelihood of observations given states is estimated by vanilla neural networks with observations and particles as inputs. The conditional likelihood function in~\cite{corenflos2021differentiable,kloss2021train,scibior2021differentiable} is defined as probability density functions (PDFs) of known distributions with parameters determined by state features.
    
    In several trivially simple cases, hand-crafted proposal distributions have shown to be effective~\cite{naesseth2018variational,le2018auto}. However, most existing differentiable particle filters are built with vanilla neural networks, while in general, it is not feasible to compute the density of vanilla neural networks' output. There is a lack of more general mechanisms for constructing proposal distributions in these differentiable particle filters.
	
    A key ingredient to achieving fully differentiable particle filters is a differentiable resampling scheme~\cite{rosato2022efficient}. Several approaches have been developed towards this direction~\cite{karkus2018particle,zhu2020towards,corenflos2021differentiable,rosato2022efficient,scibior2021differentiable}. One class of differentiable resampling schemes designs the particle weights after resampling as a differentiable function of particle weights before resampling such that the gradients backpropagated through resampling steps are non-zero~\cite{karkus2018particle,rosato2022efficient}. A truly differentiable resampling scheme was proposed in~\cite{corenflos2021differentiable}, where the deterministic and differentiable resampling output is obtained by solving an entropy-regularized optimal transport problem with the Sinkhorn algorithm~\cite{cuturi2013sinkhorn,feydy2019interpolating,peyre2019computational}. It was shown that the resampling scheme proposed in~\cite{corenflos2021differentiable} leads to biased but asymptotically consistent estimates of the log-likelihood. A particle transformer was introduced in~\cite{zhu2020towards} based on a set transformer architecture~\cite{lee2019set,vaswani2017attention}, which needs to be trained from collected data beforehand and therefore can hardly be adapted to new tasks. Because these resamplers do not conform to the standard multinomial resampling framework, establishing convergence results for differentiable particle filters is challenging.  In this paper, we adopted the entropy-regularized optimal transport resampler and, for the first time, established convergence results for both predictive and posterior approximations in the proposed method.
	
    Loss functions that are often used in training differentiable particle filters can be grouped into two main classes. The first class is the likelihood-based loss functions~\cite{naesseth2018variational,le2018auto,maddison2017filtering}. In~\cite{naesseth2018variational,le2018auto,maddison2017filtering}, an evidence lower bound (ELBO) of observation log-likelihood was derived within a general particle filtering framework and maximized to learn system models and proposal distributions~\cite{corenflos2021differentiable,le2018auto}. The other type of loss functions involves task-specific objectives, e.g. root mean square error (RMSE) between estimates of states and ground-truth states~\cite{jonschkowski18,karkus2018particle,corenflos2021differentiable,ma2020particle}, among others~\cite{ma2019discriminative,ma2020particle,dupty2021pf,chen2021deep}. It was reported in~\cite{ma2020particle} that combining task-specific loss functions with log-likelihood objectives gives the best empirical performance in numerical simulations.

    \section{Problem Statement}
    \label{sec:problem_statement}
    We consider filtering problems in state-space models (SSMs). State-space models refer to a class of sequential models that consist of two discrete-time variables, the latent state variable ${x}_t$, ${t\geq0}$ defined on $\mathcal{X}\subseteq\mathbb{R}^{d_\mathcal{X}}$, and the observed measurement variable ${y}_t$, ${t\geq0}$ defined on $\mathcal{Y}\subseteq\mathbb{R}^{d_\mathcal{Y}}$~\cite{doucet2001introduction}. The latent state ${x}_t$, ${t\geq0}$  is characterized by a Markov process with an initial distribution $\pi(x_0)$ and a transition kernel $p(x_{t} | {x}_{t-1};\theta)$ for $t\geq 1$. The observation ${y}_t$ is conditionally independent given the current latent state $x_t$:
	\begin{align}
		\label{eq:initial_dist}
		&{x}_{0} \sim \pi(x_0; \theta)\,, \\
            \label{eq:ssm_dynamic_model}
		&{x}_{t}| x_{t-1} \sim p(x_{t}| {x}_{t-1}; \theta) \text { for } t \geq 1 \,,\\
		&{y}_{t}| x_t \sim p(y_t | {x}_{t} ; \theta) \text { for } t \geq 0\,,
	\end{align}
    where $\theta\in\Theta$ is the parameter set of interest. 
    Denoted by ${x}_{0:t}:=\{{x}_0,\,\,\cdots,\,\,{x}_t\}$ and ${y}_{0:t}:=\{{y}_0,\,\,\cdots,\,\,{y}_t\}$ the sequences of latent states and observations up to time step $t$ respectively. In this work, our goal is to jointly estimate the joint posterior distribution $p\left(x_{0: t}| y_{0: t};\theta\right)$ or the marginal posterior distribution $p\left(x_{t}| y_{0: t};\theta\right)$ and the parameter set $\theta$. 

	\section{Preliminaries}
	\label{sec:preliminaries}
    
    \subsection{Particle Filtering}
    Except for a limited class of state-space models such as linear Gaussian models~\cite{kalman1960new}, analytical solutions for the posterior distribution $p(x_{0:t}| y_{0:t};\theta)$ are unavailable since they involve complex high-dimensional integrations over $\mathcal{X}^{t+1}$. Particle filters are an alternative solution to the above problem. In particular, particle filters approximate intractable joint posteriors with empirical distributions consisting of sets of weighted samples $\{\tilde{w}_t^i,x_{0:t}^i\}_{i\in [N]}$:
	\begin{equation}
		\label{eq:pf_approx}
		p(x_{0:t}| y_{0:t};\theta)\approx\sum_{i=1}^{N}\tilde{w}_t^i\, \delta_{x_{0:t}^i}(x_{0:t})\,,
	\end{equation} 
	where $[N]:=\{1,\cdots,N\}$, $N$ is the number of particles, $\delta_{x_{0:t}^i}(\cdot)$ denotes the Dirac delta measure located in $x_{0:t}^i$, and $\tilde{w}_t^i\geq 0$ with $\sum_{i=1}^{N}\tilde{w}_t^i=1$ is the normalized importance weight of the $i$-th particle at the $t$-th time step. Particles with higher importance weights are believed to be closer to the true state than those with lower importance weights. The particles $\{x_{0:t}^i\}_{i\in [N]}$ are sampled from proposal distributions $q(x_{0}| y_0;\phi)$ when $t=0$ and $q(x_{t}| y_t, x_{t-1};\phi)$ for $t\geq1$. Denote by $w_t^i$ unnormalized importance weights of particles, importance weights of particles are updated recursively through:
	\begin{equation}
		\label{eq:weight_update}
		w_t^i= w_{t-1}^i \frac{p(y_t| x_t^i;\theta)p(x_t^i| x_{t-1}^i;\theta)}{q(x_{t}^i| y_t, x_{t-1}^i;\phi)}\,,
	\end{equation}
	with $w_{0}^i=\frac{p(y_0| x_0^i;\theta)\pi(x_0^i;\theta)}{q(x_0^i| y_0;\phi)}$ and normalized as $\tilde{w}_t^i=\frac{w_t^i}{\sum_{j=1}^{N}w_t^j}$.
    Particle resampling is triggered when a predefined condition is satisfied~\cite{douc2005comparison,li2015resampling}.
	
	\subsection{Differentiable particle filters}
	\label{subsec:dpfs}
	In differentiable particle filters, both the evolution of latent state $x_t$ and the relationship between the observation $y_t$ and the latent state $x_t$ are modeled by neural networks. Particularly, differentiable particle filters describe the transition of the state $x_t$ using a parametrized function $g_\theta(\cdot): \mathcal{X}\times \mathbb{R}^{d_\varsigma}\rightarrow \mathcal{X}$:
	\begin{gather}
		\label{eq:dpf_dynamic_model}
		{x}_t=g_\theta( {x}_{t-1}, \varsigma_t)\sim p({x}_t| {x}_{t-1};\theta)\,,
	\end{gather}
	where $\varsigma_t\in\mathbb{R}^{d_\varsigma}$ is the noise sample used to simulate the dynamic noise, and $g_\theta(\cdot)$ is differentiable w.r.t. $x_{t}$ and $\varsigma_t$.
	For measurement models, one commonly adopted construction is through a parametrized function $l_\theta(\cdot):\mathcal{Y}\times\mathcal{X}\rightarrow\mathbb{R}$:
	\begin{gather}
		\label{eq:dpf_measurement_model}
		p( {y}_t| {x}_t;\theta)\propto l_\theta( {y}_t,  {x}_t)\,,
	\end{gather}
    $l_\theta(\cdot)$ measures compatibilities between $y_t$ and $x_t$ and needs to be differentiable w.r.t. both $y_t$ and $x_t$.
	Similarly, the proposal distribution can also be constructed through a parametrized function $f_\phi(\cdot):\mathcal{X}\times \mathcal{Y}\times\mathbb{R}^{d_\upsilon}\rightarrow\mathcal{X}$:
	\begin{equation}
		\label{eq:dpf_proposal}
		{x}_t=f_\phi( {x}_{t-1},  {y}_{t}, \upsilon_t)\sim q( {x}_t| {x}_{t-1},  {y}_{t};\phi)\,,
	\end{equation}
	where $\upsilon_t\in\mathbb{R}^{d_\upsilon}$ refers to the sampling noise.
	
	While it has been widely documented that the resampling step is non-differentiable~\cite{karkus2018particle,rosato2022efficient}, several approaches have been developed to solve this problem~\cite{karkus2018particle,zhu2020towards,corenflos2021differentiable,rosato2022efficient,scibior2021differentiable}. With the differentiable components discussed above, differentiable particle filters are optimized by minimizing a loss function through gradient descent.
	
	\subsection{Normalizing flows}
 	\label{sec:nfs}
    Consider a $D$-dimensional variable $z\sim p_Z(z)$, where $p_Z(\cdot)$ is a known simple distribution, e.g. Gaussian, defined on $\mathcal{Z}\subseteq\mathbb{R}^{D}$. We define a variable $s$ on $\mathcal{S}\subseteq\mathbb{R}^{D}$ through a transformation $s=\mathcal{T}_\vartheta(z)$, $\mathcal{T}_\vartheta(\cdot):\mathcal{Z}\rightarrow\mathcal{S}$, where $\vartheta$ is the parameter of the transformation. The transformation $\mathcal{T}_\vartheta(\cdot)$ is called a normalizing flow if it is invertible w.r.t. $z$ and differentiable w.r.t. $\vartheta$ and $z$~\cite{kobyzev2020normalizing,papamakarios2021normalizing}. Under some mild assumptions, $s=\mathcal{T}_\vartheta(z)$ can represent arbitrarily complex distributions, even if the distribution of $z$ is as simple as a standard Gaussian~\cite{papamakarios2021normalizing}.
	
	Recent developments in normalizing flows focus on constructing invertible transformations with neural networks~\cite{dinh2017density,balunovic2021fair,kingma2018glow,rezende2015variational,karami2019invertible,huang2018neural}. Compared with vanilla neural networks which cannot produce valid probability densities, the density of normalizing flows' output $s=\mathcal{T}_\vartheta(z)$ can be obtained by applying the change of variable formula. Since the composition of a series of invertible and differentiable transformations is still invertible and differentiable, we can stack $K$ simple invertible transformations $\{\mathcal{T}_{\vartheta_k}(\cdot)\}_{k=1}^K$ together and yield a more expressive normalizing flow $s=\mathcal{T}_{\vartheta_K}\circ\mathcal{T}_{\vartheta_{K-1}}\circ\cdots\circ\mathcal{T}_{\vartheta_1}(z)$. Correspondingly, the density of $s$ can be computed by successively applying the change of variable formula.
 
    One simple example of normalizing flows is the planar flow~\cite{rezende2015variational}. Denote by $z\in \mathbb{R}^D$ the input of a planar flow, the output of the planar flow is computed as follows:
	\begin{gather}
		\label{eq:planar_flow}
		\mathcal{T}_\vartheta(z)=z+v{h}(w^\intercal z+b)\,,
	\end{gather}
	where $\mathcal{T}_\vartheta(\cdot)$ is parametrized by $\vartheta:=\{w\in\mathbb{R}^D, v\in\mathbb{R}^D, b\in\mathbb{R}\}$, and $h(\cdot):\mathbb{R}\rightarrow\mathbb{R}$ is a smooth non-linear function. It has been proved in~\cite{rezende2015variational} that Eq.~\eqref{eq:planar_flow} is invertible when some mild conditions on $w$, $u$, and $h(\cdot)$ are satisfied. The Jacobian determinant of planar flow can be computed in $\mathcal{O}(D)$ time.
	
	Another variant of normalizing flows, the Real-NVP model~\cite{dinh2017density}, constructs invertible transformations through coupling layers. In standard coupling layers the input $z$ is split into two parts $z=[z_1, z_2]$, where $z_1=\underset{1:d}{z}$ refers to the first $d$ dimensions of $z$, and $z_2=\underset{d+1:D}{z}$ refers to the last $D-d$ dimensions of $z$. The partition is uniquely determined by an index $d<D$, and the output $s\in\mathbb{R}^D$ of the coupling layer is given by:
	\begin{gather}
		\label{eq:coupling_layers}
		\underset{1:d}{s}=\underset{1:d}{z}\,,\\
		\underset{d+1:D}{s}=\underset{d+1:D}{z}\odot e^ {\gamma_\vartheta(\underset{1:d}{z})}+\eta_\vartheta(\underset{1:d}{z})\,,
	\end{gather}
	where $\gamma_\vartheta(\cdot): \mathbb{R}^{d}\rightarrow \mathbb{R}^{D-d}$ and $\eta_\vartheta(\cdot): \mathbb{R}^{d}\rightarrow \mathbb{R}^{D-d}$ stand for the scale function and the translation function, respectively, $\odot$ refers to element-wise products, and the exponential of the vector $\gamma_\vartheta(\underset{1:d}{z})$ is also applied element-wise. With the special structure defined by Eq.~\eqref{eq:coupling_layers}, coupling layers are invertible by design, and Jacobian matrices of coupling layers are lower or upper triangular matrices, such that the Jacobian determinants can be efficiently computed. In addition to coupling layers, more tricks to build expressive invertible transformations such as multi-scale structure, masked convolution, and batch normalization can be found in~\cite{dinh2017density}. 
    
    To model the conditional probability density of $s$ conditioned on $u\in\mathbb{R}^{d_u}$, i.e. $p(s|u;\varphi)$, one can use another type of normalizing flows, called conditional normalizing flow~\cite{winkler2019learning,lu2020structured}. Both planar flows and Real-NVP models have their conditional counterparts. Given a condition $u\in \mathbb{R}^{d_u}$, a planar flow can be made conditional with the following modification:
	\begin{gather}
		\label{eq:conditional_planar_flow}
		\mathcal{F}_\varphi(z; u)=z+v{h}(w^\intercal z+b\odot s(u))\,,
	\end{gather}
	where $s(\cdot):\mathbb{R}^{d_u}\rightarrow\mathbb{R}^D$ can be any linear or non-linear functions and does not impair the invertibility of the flow. Conditional Real-NVP models were proposed in~\cite{winkler2019learning} by replacing standard coupling layers with conditional coupling layers. Compared to standard coupling layers, the scale and translation functions in conditional coupling layers are functions of concatenations of the base variable $z_1=\underset{1:d}{z}$ and a given condition $u\in \mathbb{R}^{d_u}$, i.e. the input of the translation and scale functions now becomes $[\underset{1:d}{z}, u]$. Specifically, a conditional coupling layer can be formulated as:
	\begin{gather}
		\label{eq:conditional_coupling_layers_1}
		\underset{1:d}{s}=\underset{1:d}{z}\,,\\
		\label{eq:conditional_coupling_layers_2}
		\underset{d+1:D}{s}=\underset{d+1:D}{z}\odot e^{\tilde{\gamma}_\varphi(\underset{1:d}{z},u)}+\tilde{\eta}_\varphi(\underset{1:d}{z},u)\,,
	\end{gather}
	where $\tilde{\gamma}_\varphi(\cdot):\mathbb{R}^{d+d_u}\rightarrow\mathbb{R}^{D-d}$ and $\tilde{\eta}_\varphi(\cdot):\mathbb{R}^{d+d_u}\rightarrow\mathbb{R}^{D-d}$ stand for the conditional scale function and the conditional translation function, respectively. 
	
	\section{Normalizing flow-based differentiable particle filters}
	\label{sec:nfdpf}
\begin{center}
    \begin{figure}
        \centering
        \includegraphics[width=\linewidth]{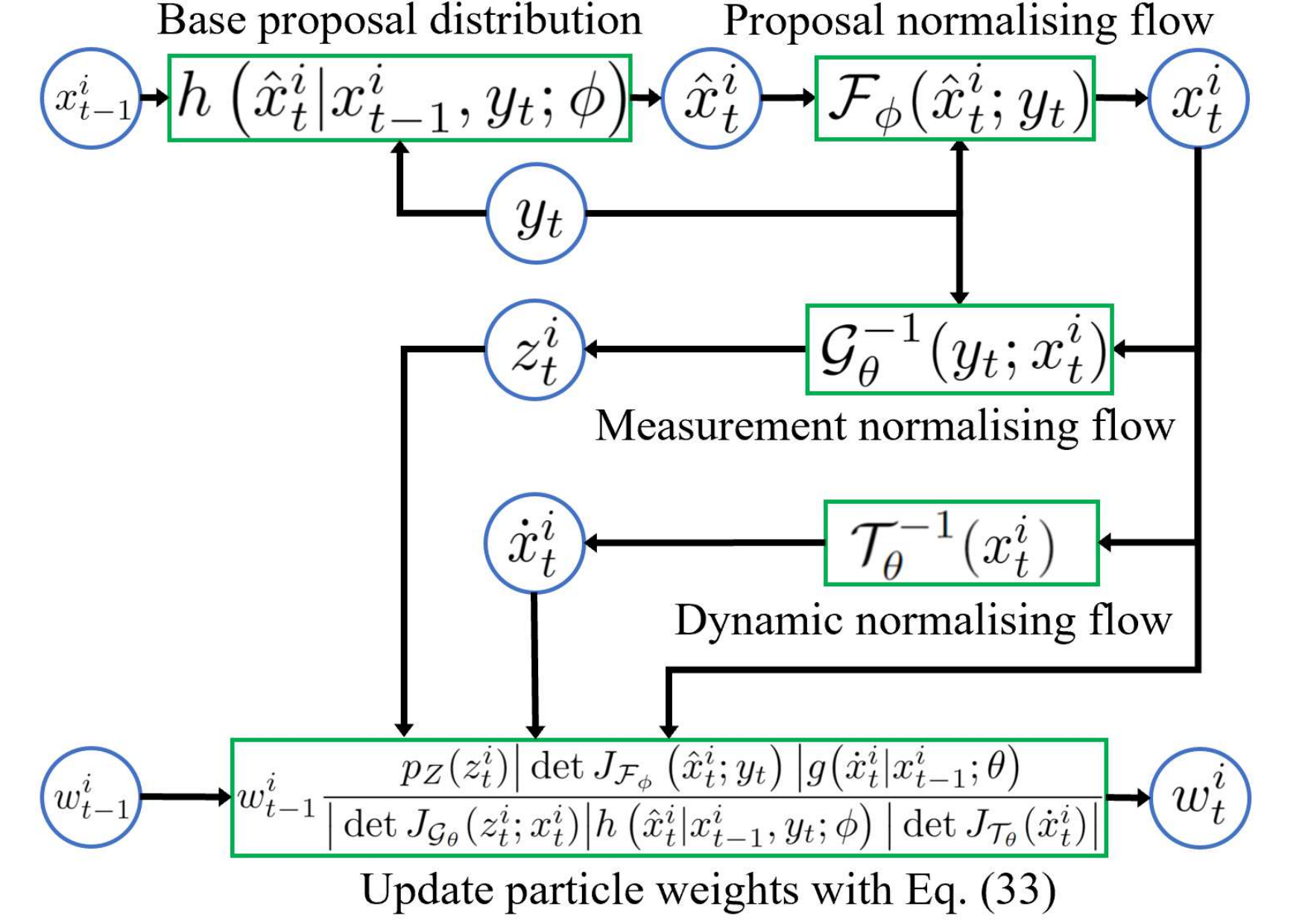}
        \caption{A diagram that shows the overall structure of the proposed NF-DPF, illustrating how to generate new particles and update particle weights in NF-DPFs. Blue circles refer to random variables. Green rectangles refer to operations such as drawing samples or evaluating certain functions.}
        \label{fig:NFDPF_diagram}
    \end{figure}
\end{center}
	In this section, we present details of the proposed normalizing flow-based differentiable particle filter (NF-DPF), including its dynamic model, measurement model, and proposal distribution. Specifically, we first show that normalizing flows can provide a flexible mechanism for learning complex dynamics of latent states. In addition, the construction of proposal distributions with tractable proposal densities can also be achieved using conditional normalizing flows. Lastly, we elaborate on how to construct measurement models with valid probability densities using conditional normalizing flows. An illustration of the structure of the proposed NF-DPF is presented in Fig.~\ref{fig:NFDPF_diagram}. Note that the parameter sets $\vartheta$ and $\varphi$ in Section~\ref{sec:nfs} are respectively subsets of $\theta$ and $\phi$ and the normalizing flows in this section are used to construct components in state-space models, so we use $\mathcal{T}_\theta(\cdot)$ and $\mathcal{G}_\phi(\cdot)$ instead of $\mathcal{T}_\vartheta(\cdot)$ and $\mathcal{G}_\varphi(\cdot)$ to denote the normalizing flows in the following contents. 
	\subsection{Dynamic models with normalizing flows}
	\label{subsec:dynamic_models_nf}
	We first show how to use normalizing flows to construct flexible dynamic models. Here we consider a base distribution $g(\cdot|x_{t-1}; \theta)$, e.g. Gaussian distribution, from which we can draw samples and obtain tractable probability density, and a normalizing flow $\mathcal{T}_\theta(\cdot): \mathcal{X} \rightarrow \mathcal{X}$ parametrized by $\theta$. To draw samples from the proposed dynamic model, a set of particles $\{\dot{x}_t^i\}_{i=1}^{N}$ are first drawn from $g(\cdot|x_{t-1}; \theta)$. 
	Thereafter, $\{\dot{x}_t^i\}_{i=1}^{N}$ are further transformed by the normalizing flow $\mathcal{T}_\theta(\cdot)$ and considered as samples from $p(x_t|x_{t-1};\theta)$:
	\begin{gather}
		\dot{x}_t^i \sim g(\dot{x}_t|x_{t-1}; \theta)\,,\\
		{x}_t^i=\mathcal{T}_\theta(\dot{x}_t^i)\sim p({x}_t|x_{t-1};\theta)\,.
	\end{gather}
	By applying the change of variable formula, the probability density function of the proposed dynamic model can be formulated as:
	\begin{gather}
		\label{eq:dynamic_density}
		p({x}_t|x_{t-1};\theta)=g\big(\dot{x}_t|x_{t-1};\theta\big)\Big|\operatorname{det}J_{\mathcal{T}_\theta}(\dot{x}_t)\Big|^{-1}\,,\\
		\dot{x}_t=\mathcal{T}_\theta^{-1}({x}_t)\sim g\big(\dot{x}_t|x_{t-1};\theta\big)\,,
	\end{gather}
	where $\operatorname{det}J_{\mathcal{T}_\theta}(\dot{x}_t)$ is the Jacobian determinant of $\mathcal{T}_\theta(\cdot)$ evaluated at $\dot{x}_t=\mathcal{T}_\theta^{-1}({x}_t)$.
	\subsection{Proposal distributions with conditional normalizing flows}
	\label{subsec:proposal_cnf}
	We propose to incorporate information from observations to construct proposal distributions by using conditional normalizing flows. We use $\mathcal{F}_\phi(\cdot):\mathcal{X}\times \mathcal{Y}\rightarrow \mathcal{X}$ to denote a conditional normalizing flow defined on $\mathcal{X}\times \mathcal{Y}$, where $\mathcal{X}$ and $\mathcal{Y}$ are the ranges of state $x_t$ and observation $y_t$, respectively. In the proposed method, particles sampled from the proposal distributions are obtained by transforming samples from a base proposal distribution $h(\cdot|x_{t-1}, y_t;\phi)$, $t\geq 1$, and $h_0(\cdot|y_0;\phi)$, $t=0$, with the conditional normalizing flow $\mathcal{F}_\phi(\cdot)$:
	\begin{gather}
		\label{eq:base_proposal} 
		\hat{x}_{0}^i\sim h_0(\hat{x}_{0}|y_0;\phi)\,,\\
            \hat{x}_{t}^i\sim h(\hat{x}_{t}|x_{t-1}, y_t;\phi)\,,\\
		\label{eq:proposal_particles_1}
		x_0^i=\mathcal{F}_\phi(\hat{x}_{0}^i;y_0)\sim q( {x}_0| {y}_{0};\phi)\,,\\
            \label{eq:proposal_particles_2}
		x_t^i=\mathcal{F}_\phi(\hat{x}_{t}^i;y_t)\sim q( {x}_t| {x}_{t-1},  {y}_{t};\phi)\,.
	\end{gather}
    The base proposal distribution $h(\cdot|x_{t-1}, y_t;\phi)$ is a distribution with a tractable probability density which we can draw samples from, e.g. a Gaussian distribution. The conditional normalizing flow $\mathcal{F}_\phi(\cdot)$ is an invertible function of particles $\hat{x}_t^i$ given the observation $y_t$. Since the information from observations is taken into account, the conditional normalizing flow $\mathcal{F}_\phi(\cdot)$ provides the capability to migrate particles to regions that are closer to the true posterior distributions. 
 
    The proposal density can be obtained by applying the change of variable formula:
	\begin{gather}
		\label{eq:proposal_density}
		q(x_{t}|x_{t-1}, y_{t};\phi)=h\left(\hat{x}_{t}| x_{t-1}, y_t; \phi\right)\bigg|\operatorname{det}J_{\mathcal{F}_{\phi}}\left(\hat{x}_{t}; y_t\right)\bigg|^{-1}\,,\\ \hat{x}_{t}=\mathcal{F}_\phi^{-1}(x_t; y_t)\sim h\left(\hat{x}_{t}| x_{t-1}, y_t; \phi\right)\,,
	\end{gather}
	where $\operatorname{det}J_{\mathcal{F_\phi}}(\hat{x}_{t}, y_t)$ refers to the determinant of the Jacobian matrix $J_{\mathcal{F}_\phi}(\hat{x}_t; y_t)=\frac{\partial \mathcal{F}_\phi(\hat{x}_t; y_t)}{\partial \hat{x}_t}$ evaluated at $\hat{x}_t$.
	
	\subsection{Conditional normalizing flow measurement models}
	\label{subsec:cnf_measurement}
	Given an observation $y_t$ and a state value $x_t$, we model the relationship between $y_t$ and $x_t$ through a conditional normalizing flow $\mathcal{G}_\theta(\cdot):\mathbb{R}^{d_\mathcal{Y}}\times\mathcal{X}:\rightarrow \mathcal{Y}$:
	\begin{gather}
		\label{eq:observation_generation}
		y_t=\mathcal{G}_\theta(z_t; x_t)\,,
	\end{gather}
	where $d_\mathcal{Y}$ is the dimension of the observation $y_t$, $z_t=\mathcal{G}^{-1}_\theta(y_t; x_t)$ is the base variable which follows an independent marginal distribution $p_Z(z_t)$ defined on $\mathbb{R}^{d_\mathcal{Y}}$, and the state $x_t$ is the condition variable. Note that while the base distribution $p_Z(z_t)$ can be an arbitrary distribution with tractable densities, standard Gaussian distributions are employed in this work for the sake of simplicity. The main reason that conditional normalizing flows are used for our measurement models is that standard normalizing flows require their input and output to have the same dimensionality, while $y_t$ and $x_t$ often differ in their dimensionalities. Note that the invertible transformation $\mathcal{G}_\theta(\cdot)$ used here is a new construction of conditional normalizing flow that is different from $\mathcal{F}_\phi(\cdot)$, as they are used to model different conditional probabilities. 
 
    With the conditional generative process of $y_t$ defined by Eq.~\eqref{eq:observation_generation}, we can evaluate the likelihood of the observation $y_t$ given $x_t$ through:
	\begin{gather}
		\label{eq:base}
		z_t=\mathcal{G}^{-1}_\theta(y_t; x_t)\,,\\
		p(y_t|x_t;\theta)=p_Z(z_t)\bigg|\operatorname{det} J_{\mathcal{G}_\theta}(z_t; x_t)\bigg|^{-1}\,.
        \label{eq:measurement_raw}
	\end{gather}
	
	In scenarios where observations are high-dimensional such as images, evaluating $p(y_t|x_t;\theta)$ with Eq.~\eqref{eq:measurement_raw} using raw observations ${y}_t$ can be computationally expensive. As an alternative solution, we propose to map the observation ${y}_t$ to a lower-dimensional space via ${e}_t=U_\theta({y}_t)\in\mathbb{R}^{d_{e}}$, where $U_\theta$ is a parametrized encoder function $U_\theta(\cdot):\mathbb{R}^{d_\mathcal{Y}}\rightarrow\mathbb{R}^{d_{e}}$. To ensure that the feature $e_t$ maintains key features contained in $y_t$, we introduce a decoder $D_\theta(\cdot):\mathbb{R}^{d_{e}}\rightarrow\mathbb{R}^{d_\mathcal{Y}}$ to reconstruct $y_t$, and include the following autoencoder (AE) loss into the training objective:
	\begin{gather}
		\label{eq:loss_ae}
		\mathcal{L}_\text{AE}(\theta)=\frac{1}{T}\sum_{t=0}^{T}||D_\theta(U_\theta({y}_t))-{y}_t||_2^2\,,
	\end{gather}
	where $T$ is the trajectory length. We then assume that the conditional probability density $p({e}_t|{x}_t;\theta)$ of observation features given state is an approximation of the actual measurement likelihood $p({y}_t|{x}_t;\theta)$~\cite{corenflos2021differentiable}:
	\begin{align}
            \label{eq:obs_feature}
		{e}_t&=U_\theta({y}_t)\,,\\
		\label{eq:feature_base}
		z_t&=\mathcal{G}^{-1}_\theta(e_t; x_t)\,,\\
		p(y_t|x_t;\theta)&\approx p(e_t|x_t;\theta)\\
		&=p_Z(z_t)\bigg|\operatorname{det} J_{\mathcal{G}_\theta}(z_t; x_t)\bigg|^{-1}\,.
		\label{eq:measurement_cnf}
	\end{align}

    \subsection{Importance weights update}
    Combining Eqs.~\eqref{eq:dynamic_density},~\eqref{eq:proposal_density}, and~\eqref{eq:measurement_cnf}, in the proposed normalizing flow-based differentiable particle filters, importance weights of $x_t^i\sim q(x_t^i|x_{t-1}^i,y_t;\phi)$ for $t\geq 1$ are updated as follows:
	\begin{align}
		&w_t^i\propto w_{t-1}^i \frac{p(y_t|x_t^i;\theta)p(x_t^i|x_{t-1}^i;\theta)}{q(x_t^i|x_{t-1}^i,y_t;\phi)}\\
		\label{eq:weights_nf_dpfs}
		&=w_{t-1}^i \frac{p_Z(z_t^i)\big|\operatorname{det}J_{\mathcal{F}_{\phi}}\left(\hat{x}_{t}^i; y_t\right)\big|g\big(\dot{x}_t^i|x_{t-1}^i;\theta\big)}{\big|\operatorname{det} J_{\mathcal{G}_\theta}(z_t^i;x_t^i)\big| h\left(\hat{x}_{t}^i| x_{t-1}^i, y_t; \phi\right)\big|\operatorname{det}J_{\mathcal{T}_\theta}(\dot{x}_t^i)\big|}\,,
	\end{align}
	where $z_t^i$ is computed by either Eq.~\eqref{eq:feature_base} or Eq.~\eqref{eq:base}, $\hat{x}_{t}^i=\mathcal{F}_\phi^{-1}(x_t; y_t)$, $\dot{x}_{t}^i=\mathcal{T}_\theta^{-1}({x}_t^i)$, and $w_0^i$ is obtained by:
	\begin{gather}
		w_0^i=\frac{p_Z(z_0^i)\big|\operatorname{det}J_{\mathcal{F}_{\phi}}\left(\hat{x}_{0}^i; y_0\right)\big|\pi({x}_0^i;\theta)}{h_0(\hat{x}_{0}^i|y_0;\phi)\big|\operatorname{det} J_{\mathcal{G}_\theta}(z_0^i;x_0^i)\big|}\,.
		\label{eq:weights_t0}
	\end{gather}

    We provide in Algorithm~\ref{alg:nf_dpfs} a detailed description of the proposed normalizing flow-based differentiable particle filters (NF-DPFs), where we use the entropy-regularized optimal transport resampler $\mathcal{R}_\epsilon(\{x_{t}^i\}_{i\in[N]},\{\tilde{w}_{t}^i\}_{i\in[N]}):\mathbb{R}^{N\times d_\mathcal X}\times \mathbb{R}^{N}\rightarrow\mathbb{R}^{N\times d_\mathcal X}$ in resampling steps~\cite{corenflos2021differentiable}, and denote the entropy regularization coefficient by $\epsilon$, the base distribution $p_Z(\cdot)$ is set to be a standard Gaussian distribution.
	\begin{algorithm}[ht!]
		\begin{algorithmic}[1]
			\caption{Normalizing flow-based differentiable particle filters}\label{alg:nf_dpfs}
			\STATE \textbf{Notations}:\\ 	
			\begin{footnotesize}	
				\hspace{-2em}\begin{tabular}[t]{l @{\hspace{.1em}} l}%
					$T$ & Trajectory length\\
					$y_{0:T}$ & Observations \\
					$g(\cdot)$ & Base dynamic model\\
					$\mathcal{F}_\phi(\cdot)$ & Proposal normalizing flow\\
					$N$ & Number of particles\\
					$\xi$ & Learning rate \\
					$U_\theta(\cdot)$ & Observation encoder \\
					$\epsilon$ & regularization coefficient\\
				\end{tabular}\hspace{-0.5em}%
				\begin{tabular}[t]{l @{\hspace{.1em}} l}
					$x_{0:T}$ & Latent states \\
					$\pi(\cdot;\theta)$ & Initial distribution of latent states\\
					$\mathcal{T}_\theta(\cdot)$ & Dynamic model normalizing flow\\
					$\mathcal{G}_\theta(\cdot)$ & Measurement normalizing flow\\
					$\text{ESS}_{\text{min}}$ & Resampling threshold\\
					$\mathcal{L}(\theta, \phi)$ & Overall loss function\\
					$\mathcal{R}_\epsilon(\cdot)$ & regularized OT resampler~\cite{corenflos2021differentiable}\\
					$p_Z(\cdot)$ & Standard Gaussian distribution\\
				\end{tabular}
			\end{footnotesize}
			
			\STATE \textbf{initialization}: Randomly initialize $\theta$ and $\phi$;\\
			Sample $\hat{x}_0^i\overset{\text{i.i.d}}{\sim} h_0(\hat{x}_{0}|y_0;\phi)$, $\forall i\in[N]:=\{1,\cdots,N\}$ (Eq.~\eqref{eq:base_proposal});
			\STATE Generate proposed particles (Eq.~\eqref{eq:proposal_particles_1}): \\$x_0^i=\mathcal{F}_\phi(\hat{x}_0^i; y_0)\sim q(x_0|y_0;\phi)$, $\forall i\in[N]$;
			\STATE [optional] Encode observation (Eq.~\eqref{eq:obs_feature}): ${y}_0 :=U_\theta({y}_0)$;
			\STATE Compute the base variable (Eq.~\eqref{eq:feature_base}):\\ $z_0^i=\mathcal{G}_\theta^{-1}(y_0;x_0^i)$, $\forall i \in[N]$;
			\STATE Compute weights $w_0^i$ using Eq.~\eqref{eq:weights_t0}, $\forall i\in[N]$;
                \STATE [optional] $\hat{p}(y_{0};\theta)=\sum_{i=1}^{N}w_{0}^i/N$; 
			\FOR {$t=1$ to $T$}
			\STATE Normalize weights $\tilde{w}_{t-1}^i\propto{w_{t-1}^i}$, $\sum_{i=1}^{N}\tilde{w}_{t-1}^i=1$;
			\STATE Compute the effective sample size: \\$\text{ESS}_{t-1}(\{\tilde{w}_{t-1}^i\}_{i\in [N]})=\frac{1}{\sum_{i=1}^{N}(\tilde{w}_{t-1}^i)^2}$;
			\IF {$\text{ESS}_t(\{\tilde{w}_{t-1}^i\}_{i\in [N]}) < \text{ESS}_{\text{min}}$} 
			\STATE $\{\tilde{x}_{t-1}^{i}\}_{i=1}^{N}\leftarrow$ $\mathcal{R}_\epsilon(\{x_{t-1}^i\}_{i\in[N]},\{\tilde{w}_{t-1}^i\}_{i\in[N]})$;
			\STATE ${{w}}_{t-1}^i\leftarrow {1}$, $\forall i \in[N]$;
			\ELSE
			\STATE $\tilde{x}_{t-1}^{i}\leftarrow {x}_{t-1}^{i}$, $\forall i \in[N]$;
			\ENDIF
			\STATE ${x}_{t-1}^{i}\leftarrow \tilde{x}_{t-1}^{i}$;
			\STATE Sample $\hat{x}_t^i\overset{\text{i.i.d}}{\sim}h\left(\hat{x}_{t}| x_{t-1}, y_t; \phi\right)$, $\forall i\in[N]$ (Eq.~\eqref{eq:base_proposal});
			\STATE Generate proposed particles (Eq.~\eqref{eq:proposal_particles_2}):\\
			$x_t^i=\mathcal{F}_\phi(\hat{x}_{t}^i;y_t)\sim q( {x}_t| {x}_{t-1},  {y}_{t};\phi)$, $\forall i\in[N]$;
			\STATE [optional] Encode observation (Eq.~\eqref{eq:obs_feature}): ${y}_t :=U_\theta({y}_t)$;
			\STATE Compute the base variable (Eq.~\eqref{eq:feature_base}):\\$z_t^i=\mathcal{G}_\theta^{-1}(y_t;x_t^i)$, $\forall i \in[N]$;
			\STATE Update weights using Eq.~\eqref{eq:weights_nf_dpfs}:\\
            $w_t^i={w}_{t-1}^i \frac{p(y_{t}| x_{t}^{i}; \theta) p(x_t^i| {x}_{t-1}^i;\theta)}{q(x_t^i| y_{t},{x}_{t-1}^i;\phi)}$, $\forall i\in[N]$;
                \STATE [optional] $\hat{p}(y_{t}|y_{0:t-1};\theta)=\sum_{i=1}^{N}w_{t}^i/\sum_{i=1}^{N}w_{t-1}^i$, $\hat{p}(y_{0:t};\theta)=\hat{p}(y_{t}|y_{0:t-1};\theta) \hat{p}(y_{0:t-1};\theta)$; 
			\ENDFOR
			\STATE Compute the overall loss $\mathcal{L}(\theta, \phi)$ (Examples of $\mathcal{L}(\theta, \phi)$ can be found in Eq.~\eqref{eq:ELBO} and Eq.~\eqref{eq:loss_disk});
			\STATE Update $\theta$ and $\phi$ through gradient descent:
			\begin{gather}
				\theta\leftarrow \theta-\xi\nabla_\theta\mathcal{L}(\theta, \phi)\,,
				\phi\leftarrow \phi-\xi\nabla_\phi\mathcal{L}(\theta, \phi)\,.\nonumber
			\end{gather}
		\end{algorithmic}
	\end{algorithm}
	
	\section{Theoretical analysis}
	\label{sec:theoretical_results}
	In this section, we establish convergence results for particle approximations in normalizing flow-based differentiable particle filters. We assume resampling is performed at each time step using the entropy-regularized optimal transport resampler~\cite{corenflos2021differentiable}. We use the notations below for the following contents:
	\begin{gather}
		\label{eq:notations_main}
		\alpha^{(t)}:=p(x_t|y_{0:t-1};\theta)\,,\;\alpha^{(t)}_{N}(\psi):=\frac{1}{N}\sum_{i=1}^{N}\psi(x_t^i)\,,\\
		\label{eq:notations_empirical_main}
            \beta^{(t)}:=p(x_t|y_{0:t};\theta)
		\,,\; \beta^{(t)}_{N}(\psi):=\sum_{i=1}^{N}\tilde{w}_t^i \psi(x_t^i)\,,
	\end{gather}
        \begin{gather}
            \omega^{(t)}(x_t)=p(y_t|x_t;\theta)\,,
		\label{eq:weight_function}
        \end{gather}
	with $\alpha^{(0)}:=\pi(x_0;\theta)$, $\psi(\cdot): \mathcal{X}\rightarrow \mathbb{R}$ is a function defined on $\mathcal{X}$, $\alpha^{(t)}_{N}$ is an approximation of the predictive distribution $\alpha^{t}$ with $N$ uniformly weighted particles, and $\beta^{(t)}_{N}$ is an approximation of the posterior distribution $\beta^{(t)}$ with $N$ particles weighted by $\tilde{w}_t^i$. For a measure $\alpha$ defined on $\mathcal{X}$ we use $\alpha(\psi)=\int_{\mathcal{X}}\psi(x)\alpha(\mathrm{d}x)$ to denote the expectation of $\psi(\cdot)$ w.r.t. $\alpha$. For the sake of simplicity, we restrict ourselves to the bootstrap particle filtering framework, i.e. particles $x_t^i$ are sampled from $p(x_t|x_{t-1};\theta)$. However, our proof can be modified to adapt to particle filters employing proposal distributions that are distinct from their dynamic models by taking into account the estimation error caused by sampling from $q( {x}_t| {x}_{t-1},  {y}_{t};\phi)$ instead of $p( {x}_t| {x}_{t-1};\theta)$.
	
	To prove the consistency of particle approximations provided by NF-DPFs, we introduce the following assumptions:
	\begin{assumption}\label{ass:compact}
		$\mathcal{X}$ is a compact subset of $\mathbb{R}^{d_\mathcal X}$ with diameter $\mathfrak{d}:=\underset{x,x'\in\mathcal{X}}{\sup}{||x-x'||_2}$, where $||\cdot||_2$ denotes the Euclidean distance.
	\end{assumption}
	\begin{assumption}\label{ass:transport_plan}
		For $\forall t \geq 0$, there exists a unique optimal transport plan between $\alpha^{(t)}$ and $\beta^{(t)}$ featured by a deterministic transport map $\textbf{T}_t(\cdot):\mathcal{X}\rightarrow\mathcal{X}$, and the transport map $\textbf{T}_t(\cdot)$ is $\lambda$-Lipschitz for $\forall t \geq 0$ with $\lambda>0$.
	\end{assumption}
	
	\begin{assumption}\label{ass:transition_lipcontraction}
		Denote by $f(\cdot)$ the transition kernel $p( {x}_t| {x}_{t-1};\theta)$ of NF-DPFs defined in Eq.~\eqref{eq:dynamic_density} and $\psi(\cdot):\mathcal{X}\rightarrow\mathbb{R}$ the considered bounded $k$-Lipschitz function, there exists an $\eta\in \mathbb{R}$ such that for any two probability measures $\mu, \rho$ on $\mathcal{X}$
		$$|\mu f(\psi) - \rho f(\psi)| \leq \eta|\mu(\psi) - \rho(\psi)|\,, s.t.\; \mu(\psi) \neq \rho(\psi)\,.$$
	\end{assumption}
	
	\begin{assumption}\label{ass:weight_lipcontraction}
		There exists a constant $\zeta\in \mathbb{R}$ such that for any continuous probability measure $\mu$ on $\mathcal{X}$ and its empirical approximation $\mu_N$, for weighted probability measures $\mu_{\omega_t}={\omega_t\mu}/{\mu(\omega_t)}$ and $\mu_{N,\omega_t}={\omega\mu_N}/{\mu_N(\omega_t)}$, we have
		$$\wass_2(\mu_{N,\omega_t}, \mu_{\omega_t}) \leq \zeta\wass_2(\mu_N, \mu)\,,$$
		where $\omega_t(\cdot):\mathcal{X}\rightarrow\mathbb{R}$ is defined in Eq.~\eqref{eq:weight_function}, and $\wass_2(\cdot, \cdot)$ refers to the $2$-Wasserstein distance~\cite{villani2008optimal,peyre2019computational}.
	\end{assumption}
	With the above assumptions, we provide the following proposition for the consistency of NF-DPFs:
	\begin{proposition}
		\label{prop:consistency}
		For a bounded weight function $\omega_t(x_t)=p(y_t|x_t;\theta):\mathcal{X}\rightarrow\mathbb{R}$ and a measurable bounded $k$-Lipschitz function $\psi(\cdot):\mathcal{X}\rightarrow\mathbb{R}$, when the regularization coefficient in entropy-regularized optimal transport resampler $\epsilon_N=o(1/\log{N})$, there exist constants $c_t$ and ${c'}_t$ such that for $t\geq0$
		\begin{align}
			\label{eq:predictive_true_exp}
			\mathbb{E}\Bigg[\bigg({\alpha}_N^{(t)}(\psi)-{\beta}^{(t-1)}f(\psi)\bigg)^2\Bigg]\leq c_t\frac{||\psi||_\infty^2}{N^{1/2d_\mathcal{X}}}
		\end{align}
		(replacing ${\beta}^{(t-1)}f$ by the initial distribution $\pi(x_0,\theta)$ at time $t=0$ defined in Eq.~\eqref{eq:initial_dist}) and
		\begin{align}
			\label{eq:posterior_true_exp}
			\mathbb{E}\Biggl[\bigg(\beta_{N}^{(t)}(\psi)-\beta^{(t)}(\psi)\bigg)^2\Biggr]\leq {c'}_t\frac{||\psi||_\infty^2}{N^{1/2d_\mathcal{X}}}\,,
		\end{align}
		where $||\psi||_\infty:=\underset{x\in\mathcal{X}}{\sup}|\psi(x)|$ is the infinity norm of  $\psi(\cdot)$, ${\beta}^{(t)}$ and ${\alpha}_N^{(t)}$ are respectively defined by~Eqs.~\eqref{eq:notations_main} and~\eqref{eq:notations_empirical_main}, and $f(\cdot)$ is the transition kernel defined by Eq.~\eqref{eq:ssm_dynamic_model}.
	\end{proposition}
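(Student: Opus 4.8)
The plan is to run an induction on the time index $t$, carrying as inductive quantities the mean squared $2$-Wasserstein errors $D_t^\alpha:=\mathbb{E}[\wass_2^2(\alpha_N^{(t)},\alpha^{(t)})]$ and $D_t^\beta:=\mathbb{E}[\wass_2^2(\beta_N^{(t)},\beta^{(t)})]$, and only at the end to translate these into the integral errors of \eqref{eq:predictive_true_exp}--\eqref{eq:posterior_true_exp}. The translation uses that any bounded $k$-Lipschitz $\psi$ satisfies $|\mu(\psi)-\nu(\psi)|\le k\wass_1(\mu,\nu)\le k\wass_2(\mu,\nu)$ and $|\mu(\psi)-\nu(\psi)|\le 2\|\psi\|_\infty$, so that $(\mu(\psi)-\nu(\psi))^2\le 2\|\psi\|_\infty|\mu(\psi)-\nu(\psi)|\le 2k\|\psi\|_\infty\wass_2(\mu,\nu)$ and, by Jensen's inequality, $\mathbb{E}[(\alpha_N^{(t)}(\psi)-\alpha^{(t)}(\psi))^2]\le 2k\|\psi\|_\infty\sqrt{D_t^\alpha}$ (and likewise for $\beta$). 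Since $\alpha^{(t)}=\beta^{(t-1)}f$ by the Chapman--Kolmogorov identity, this is exactly the left-hand side of \eqref{eq:predictive_true_exp}, with the convention $\alpha^{(0)}=\pi(x_0;\theta)$ from \eqref{eq:initial_dist} covering $t=0$.

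First I would establish the base case: the particles $x_0^i$ are i.i.d. draws from $\pi=\alpha^{(0)}$, so $\alpha_N^{(0)}$ is an ordinary empirical measure and, by the standard empirical-measure rate on the compact set $\mathcal X$ (Assumption~\ref{ass:compact}, Fournier--Guillin), $D_0^\alpha\le C\,\mathfrak d\,N^{-1/d_\mathcal X}$, after which Assumption~\ref{ass:weight_lipcontraction} gives $D_0^\beta\le\zeta^2 D_0^\alpha$. For the inductive step I would pass from $t-1$ to $t$ through the three operations of Algorithm~\ref{alg:nf_dpfs} in order: resampling maps the weighted $\beta_N^{(t-1)}$ to the uniformly weighted $\hat\beta_N^{(t-1)}$; propagation samples $x_t^i\sim f(\cdot|\tilde x_{t-1}^i)$ producing $\alpha_N^{(t)}$; reweighting by $\omega_t$ produces $\beta_N^{(t)}$. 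The reweighting is handled directly by Assumption~\ref{ass:weight_lipcontraction}: since $\beta^{(t)}=\omega_t\alpha^{(t)}/\alpha^{(t)}(\omega_t)$ and $\beta_N^{(t)}=\omega_t\alpha_N^{(t)}/\alpha_N^{(t)}(\omega_t)$ with $\omega_t$ from \eqref{eq:weight_function}, it yields $\wass_2(\beta_N^{(t)},\beta^{(t)})\le\zeta\wass_2(\alpha_N^{(t)},\alpha^{(t)})$, hence $D_t^\beta\le\zeta^2 D_t^\alpha$. The propagation is handled by Assumption~\ref{ass:transition_lipcontraction}: applied with $1$-Lipschitz test functions and Kantorovich--Rubinstein duality it gives $\wass_1(\mu f,\nu f)\le\eta\,\wass_1(\mu,\nu)$, which, combined with the compactness relation $\wass_2^2\le\mathfrak d\,\wass_1$, controls the pushforward $\hat\beta_N^{(t-1)}f$ against $\beta^{(t-1)}f=\alpha^{(t)}$; the additional Monte Carlo fluctuation of $\alpha_N^{(t)}$ about $\hat\beta_N^{(t-1)}f$ contributes a term of order $N^{-1/d_\mathcal X}$ (dominating the faster $\|\psi\|_\infty^2/N$ term).

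The crux, and the step I expect to be the main obstacle, is the resampling analysis, because the entropy-regularized optimal transport resampler of~\cite{corenflos2021differentiable} does not fit the classical multinomial framework in which each resampled particle is an exact copy. Here I would use Assumption~\ref{ass:transport_plan}: as $\epsilon_N\to 0$ the Sinkhorn plan between the source $\alpha_N^{(t-1)}$ and the target $\beta_N^{(t-1)}$ converges to the unregularized optimal plan, and since the limiting plan between $\alpha^{(t-1)}$ and $\beta^{(t-1)}$ is induced by the unique $\lambda$-Lipschitz map $\mathbf T_{t-1}$, the barycentric-projection output satisfies $\hat\beta_N^{(t-1)}\approx(\mathbf T_{t-1})_\#\alpha_N^{(t-1)}$. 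The Lipschitz property then propagates the input error without blow-up, $\wass_2((\mathbf T_{t-1})_\#\alpha_N^{(t-1)},\beta^{(t-1)})=\wass_2((\mathbf T_{t-1})_\#\alpha_N^{(t-1)},(\mathbf T_{t-1})_\#\alpha^{(t-1)})\le\lambda\,\wass_2(\alpha_N^{(t-1)},\alpha^{(t-1)})$, so that $\mathbb{E}[\wass_2^2(\hat\beta_N^{(t-1)},\beta^{(t-1)})]\lesssim\lambda^2 D_{t-1}^\alpha+\rho_N$. The remainder $\rho_N$ collects the entropic (Sinkhorn) bias and the finite-sample optimal-transport estimation error; showing $\rho_N=O(N^{-1/d_\mathcal X})$ is exactly where $\epsilon_N=o(1/\log N)$ is needed, since the entropic bias on $N$-point supports scales like $\epsilon_N\log N$ and must vanish. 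Controlling $\rho_N$ and verifying that the OT-map approximation is stable under simultaneous perturbation of both marginals is the delicate part of the argument.

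Combining the three steps yields a one-step recursion of the form $\sqrt{D_t^\alpha}\le \eta\lambda\sqrt{D_{t-1}^\alpha}+C\,N^{-1/(2d_\mathcal X)}$ (after square roots, using $\wass_1\le\wass_2$ and $\wass_2^2\le\mathfrak d\,\wass_1$ to reconcile the $\wass_1$-type control from Assumption~\ref{ass:transition_lipcontraction} with the $\wass_2$-type control from Assumptions~\ref{ass:transport_plan} and~\ref{ass:weight_lipcontraction}), together with $D_t^\beta\le\zeta^2 D_t^\alpha$. Unrolling over the finite horizon gives $D_t^\alpha\le \tilde c_t N^{-1/d_\mathcal X}$ and $D_t^\beta\le\tilde c_t' N^{-1/d_\mathcal X}$, where $\tilde c_t,\tilde c_t'$ depend on $\eta,\lambda,\zeta,\mathfrak d$ and grow at most geometrically in $t$. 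Feeding these into the translation of the first paragraph, and absorbing $k$ and the diameter into the constants, produces \eqref{eq:predictive_true_exp}--\eqref{eq:posterior_true_exp} with rate $N^{-1/(2d_\mathcal X)}$; the slow rate is the unavoidable curse-of-dimensionality cost of measuring resampling error through Wasserstein distance rather than exact particle duplication.
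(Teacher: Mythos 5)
Your overall architecture (base case from i.i.d.\ sampling, then a three-stage resample/propagate/reweight induction, with the entropic bias killed by $\epsilon_N=o(1/\log N)$) matches the paper's in outline, and your translation $\mathbb{E}[(\mu(\psi)-\nu(\psi))^2]\le 2k\|\psi\|_\infty\mathbb{E}[\wass_2(\mu,\nu)]$ is exactly how the paper converts Wasserstein control into the stated bounds \emph{inside each one-step lemma}. But there is a genuine gap in your choice of inductive quantity. You carry $D_t^\alpha=\mathbb{E}[\wass_2^2(\alpha_N^{(t)},\alpha^{(t)})]$ across time steps and claim the linear recursion $\sqrt{D_t^\alpha}\le\eta\lambda\sqrt{D_{t-1}^\alpha}+CN^{-1/(2d_\mathcal{X})}$. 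That recursion does not follow from the available resampling estimate: Lemma~\ref{lem:linochetto} bounds the barycentric-projection error by a term of the form $\lambda^{1/2}\mathcal{E}^{1/2}[\mathfrak{d}+\mathcal{E}]^{1/2}$, where $\mathcal{E}$ contains the Wasserstein distances of the discrete marginals to the continuous reference measures. If, as in your plan, those reference measures are the true $\alpha^{(t-1)},\beta^{(t-1)}$, then $\mathcal{E}\asymp\sqrt{D_{t-1}^\alpha}$ and the resampling contribution to $\sqrt{D_t^\beta}$ is of order $(D_{t-1}^\alpha)^{1/4}$, not $\sqrt{D_{t-1}^\alpha}$. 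This square-root loss compounds at every resampling step, so the exponent in the rate halves with each $t$ and you recover only a Corenflos-et-al-type bound (the paper explicitly attributes the $\mathcal{O}_P(N^{-1/8d_\mathcal{X}})$ rate of~\cite{corenflos2021differentiable} to precisely this ``induct on $\wass_2(\beta^{(t)},\beta_N^{(t)})$'' strategy), not the claimed $t$-uniform $N^{-1/2d_\mathcal{X}}$ rate.

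The paper's proof avoids this by making the \emph{weak} errors $\mathbb{E}[(\alpha_N^{(t)}(\psi)-\beta^{(t-1)}f(\psi))^2]$ and $\mathbb{E}[(\beta_N^{(t)}(\psi)-\beta^{(t)}(\psi))^2]$ themselves the inductive quantities, and invoking the optimal-transport resampling bound (Proposition~\ref{prop:my_CVDETDetailed}, built from Lemma~\ref{lem:linochetto} plus a Pythagorean inequality for barycentric projections) only with the \emph{one-step-local} continuous measures ${\alpha'}_N^{(t-1)}=\tilde\beta_N^{(t-2)}f$ and its $\omega$-reweighting as references. With that choice, $\mathcal{E}$ is always a pure one-step Monte Carlo quantity of order $N^{-1/d_\mathcal{X}}$ (Fournier--Guillin applied conditionally on the previous particles), so the square-root loss is incurred once per step on an $N^{-1/d_\mathcal{X}}$ term and never compounds; the accumulated error across time is carried as a weak error and contracted through Assumption~\ref{ass:transition_lipcontraction} (Lemma~\ref{lemma:predictive_bound}), with the reweighting handled by Lemma~\ref{lemma:posterior_weight_approximation} applied to $\bar\omega$ and $\bar\omega\psi$ rather than by a Wasserstein contraction. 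Two smaller points: your use of Assumption~\ref{ass:transition_lipcontraction} as a $\wass_1$-contraction via Kantorovich--Rubinstein requires it to hold uniformly over the $1$-Lipschitz ball, which is stronger than the stated single-test-function form the paper actually uses; and Assumption~\ref{ass:weight_lipcontraction} is stated for a continuous measure and \emph{its} empirical approximation, which fits the paper's local pairing $({\alpha'}_N^{(t)},\alpha_N^{(t)})$ but not your global pairing $(\alpha^{(t)},\alpha_N^{(t)})$.
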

	The proof of Proposition~\ref{prop:consistency} can be found in Appendix~\ref{appendix:proof}. The results in Proposition~\ref{prop:consistency} show that the particle estimates given by the NF-DPF are consistent estimators if $\epsilon_N=o(1/\log{N})$, i.e. the estimation error vanishes when $N\rightarrow \infty$. The error bounds we derived converge at $\mathcal{O}_P(\frac{1}{N^{1/2d_\mathcal{X}}})$ and are tighter than those derived in~\cite{corenflos2021differentiable}, which converge at $\mathcal{O}_P(\frac{1}{N^{1/8d_\mathcal{X}}})$, where $\mathcal{O}_P$ denotes the "big O in probability" notation. Specifically, in~\cite{corenflos2021differentiable} the upper bound on $\wass_2(\beta^{(t)}, \beta_N^{(t)})$ is adopted as the upper bound on the approximation error. As a result, the upper bound derived in~\cite{corenflos2021differentiable} is loose since the gap between $\wass_2(\beta^{(t)}, \beta_N^{(t)})$ and the approximation error is included in the upper bound. The tighter bound presented in Proposition~\ref{prop:consistency} is achieved by directly deriving an upper bound for $\mathbb{E}\Bigg[\bigg({\alpha}_N^{(t)}(\psi)-{\beta}^{(t-1)}f(\psi)\bigg)^2\Bigg]$ and $\mathbb{E}\Biggl[\bigg(\beta_{N}^{(t)}(\psi)-\beta^{(t)}(\psi)\bigg)^2\Biggr]$ without relying on the upper bound on the Wasserstein distances. However, compared with traditional particle filters~\cite{chopin2020introduction}, the error bound is loose due to the use of entropy-regularized optimal transport resampling, as we revealed in the proof.  We provide background knowledge about optimal transport and related notations in Appendix~\ref{appendix:ot}. 

	\section{Experiments}
	\label{sec:experiments}
	In this section, we present experimental results to compare the performance of the proposed normalizing flow-based differentiable particle filters (NF-DPFs) with other DPF variants\footnote{Code to reproduce experiment results is available at \url{https://github.com/xiongjiechen/Normalizing-Flows-DPFs}.}. We consider in Section~\ref{subsec:lgssm} a one-dimensional linear Gaussian state-space model similar to an example used in~\cite{naesseth2018variational}. In Section~\ref{subsec:multi_lgssm} we evaluate the performance of the proposed method on a multivariate linear Gaussian state-space model similar to the one used in~\cite{corenflos2021differentiable} with varying dimensionalities. In Section~\ref{subsec:disk} we compare the performance of NF-DPFs with other state-of-the-art DPFs in a synthetic visual tracking task following the setup in~\cite{haarnoja2016backprop,kloss2021train,chen2022conditional}. The experimental results on a simulated robot localization task from~\cite{jonschkowski18,corenflos2021differentiable,beattie2016deepmind} are reported in Section~\ref{subsec:exp_robot}. For all experiments presented in this section, the entropy-regularized optimal transport resampler~\cite{corenflos2021differentiable} is applied in the resampling step. We use the Real-NVP and the conditional Real-NVP models as the default normalizing flows and conditional normalizing flows in the NF-DPF, except in the one-dimensional example, because Real-NVP models have to split the latent state dimension-wise.

    We compare NF-DPFs with five baseline methods in our experiments, the deep state-space model (Deep SSM), the autoencoder sequential Monte Carlo (AESMC) and the AESMC-bootstrap proposed in~\cite{naesseth2018variational}, the particle filter recurrent neural networks (PFRNNs)~\cite{ma2020particle}, and the particle filter networks (PFNets)~\cite{karkus2018particle}. The deep state-space model (Deep SSM) employs recurrent neural networks to infer the parameters of state-space models~\cite{rangapuram2018deep}. The AESMC-bootstrap uses Gaussian distributions to construct its dynamic model and measurement model and new particles are generated from the learned dynamic model. The proposal distributions of the AESMC are constructed by Gaussian distributions with parameters determined by observations. The PFRNN uses recurrent neural networks with observations and latent states as inputs to generate new particles. We use PFNets to denote the method proposed in~\cite{karkus2018particle} and its concurrent work~\cite{jonschkowski18}, which are bootstrap differentiable particle filters with dynamic models and measurement models built by vanilla neural networks. 
	\subsection{One-dimensional Linear Gaussian State-Space Models}
	\label{subsec:lgssm}
	\subsubsection{Experiment setup} We first consider a one-dimensional example as in~\cite{naesseth2018variational}, for which the goal is to learn the parameters $\theta^*:=[\theta_1^*, \theta_2^*]$ and a proposal distribution $q(x_{t}|x_{t-1}, y_{t};\phi)$ for the following linear Gaussian state-space model:
	\begin{gather}
		\label{eq:1_d_initial}
		x_0 \sim \mathcal{N}(0, 1)\,,\\
		\label{eq:1_d_transition}
		x_t|x_{t-1}\sim \mathcal{N}(\theta_1^*x_{t-1}, 1) \text{ for } t\geq1\,,\\
		\label{eq:1_d_measurement}
		{y}_{t}| x_t \sim \mathcal{N}(\theta_2^*x_{t}, 0.1) \text{ for } t\geq0\,.
	\end{gather}
	
	We adopt the evidence lower bound (ELBO) $\mathbb{E}[\log{p}(y_{0:T};\theta)]$ of the log marginal likelihood as the training objective as in~\cite{naesseth2018variational,le2018auto,maddison2017filtering}, and we approximate the ELBO through:
    \begin{gather}
    \label{eq:ELBO}
        \mathbb{E}[\log{p}(y_{0:T};\theta)] \approx \frac{1}{K}\sum^{K}_{k=1}\log \hat{p}(y^k_{T};\theta)\,,
    \end{gather}
    where $K=10$ is the number of training sequences, and $\log \hat{p}(y^k_{T};\theta)$ is computed as in Line 23 of Alg.~\ref{alg:nf_dpfs}. 
	We use a fixed learning rate of 0.002 and optimize the model for 500 iterations. At each iteration, we feed the model $K=10$ sequences of observations $y_{0:T}$ generated with $T=50$ and $\theta^*:=[\theta_1^*, \theta_2^*]=[0.9, 0.5]$. We also use 1000 sequences of observations as our validation set. The trained model is then tested with another 1000 observation sequences. We set the number of particles as $N=100$ for training, validation, and testing stages. Since the goal in this experiment is to simultaneously learn the model parameters $\theta:=[\theta_1, \theta_2]$ (initialized as [0.1, 0.1]) and proposal distributions $q(x_{t}|x_{t-1}, y_{t};\phi)$, the state-space model to be optimized is in the same form as the true model:
	\begin{gather}
		\label{eq:1_d_initial_learned}
		x_0 \sim \mathcal{N}(0, 1)\,,\\
		\label{eq:1_d_transition_learned}
		x_t|x_{t-1}\sim \mathcal{N}(\theta_1x_{t-1}, 1) \text{ for } t\geq1\,\\
		\label{eq:1_d_measurement_learned}
		{y}_{t}| x_t \sim \mathcal{N}(\theta_2x_{t}, 0.1) \text{ for } t\geq0\,,
	\end{gather}
	such that we can evaluate the difference between the true model parameters and the learned model parameters.
 
    The performance of trained models is evaluated based on four metrics: 
	\begin{itemize}
		\item The $L^2$-norm $||\theta-\theta^*||_2$ between the learned parameters $\theta:=[\theta_1, \theta_2]$ and true parameters $\theta^*:=[\theta_1^*, \theta_2^*]=[0.9, 0.5]$;
		\item The $L^2$-norm $||\bar{\chi}_T-\bar{\chi}_T^*||_2$ between the estimated posterior means $\bar{\chi}_T$ and the true posterior means $\bar{\chi}^*_T$ computed by Kalman filter, where $\bar{\chi}_T:=[\bar{x}_0, \bar{x}_1, \cdots, \bar{x}_T]$ and $\bar{\chi}_T^*:=[\bar{x}_0^*, \bar{x}_1^*, \cdots, \bar{x}_T^*]$;
		\item The ELBO defined by Eq.~\eqref{eq:ELBO}.
		\item The effective sample size.
	\end{itemize} 
	Lower $||\theta-\theta^*||_2$, $||\bar{\chi}_T-\bar{\chi}_T^*||_2$, and higher ELBO and effective sample size indicate better performance of evaluated models.
	
	\subsubsection{Experimental results}
	We report in Fig.~\ref{fig:1_d} and Table~\ref{tab:1_d} the evaluated metrics for different methods on the validation set and the test set, respectively. We observe that the PFNet can achieve arbitrarily large ELBOs but produces poor tracking performance, so we do not report the performance of the PFNet in this experiment. The main reason for this is that the measurement model of the PFNet is constructed with vanilla neural networks, and the ELBO can be increased by simply amplifying the magnitude of the neural network's output without learning the relationship between observations and states. 
	
	For the ELBO, all methods converge to almost the same validation ELBO as demonstrated in Fig.~\ref{fig:1_d_c}. The AESMC-bootstrap converges the fastest. We speculate that this is because it does not have proposal parameters to learn, so it can focus on learning model parameters. This is also reflected in Fig.~\ref{fig:1_d_a} and the second column of Table~\ref{tab:1_d}, where we can observe that all approaches produce similar parameter estimation errors after their convergence, and the AESMC-bootstrap has the highest convergence rate but exhibits slightly larger parameter estimation error than other methods. The NF-DPF converges faster than the AESMC and the PFRNN has the highest ELBO and the lowest parameter estimation error. 
 
    We report posterior mean errors $||\bar{\chi}_T-\bar{\chi}_T^*||_2$ evaluated on the validation set and the test set for different methods in Fig.~\ref{fig:1_d_b} and Table~\ref{tab:1_d} respectively. The AESMC-bootstrap leads to the highest estimation error as expected. The AESMC produces better results compared to the AESMC-bootstrap. The PFRNN reports the second lowest test posterior mean error after the convergence. Deep SSM has similar performance to PFRNN on both validation and test datasets. The NF-DPF outperforms all the compared baselines regarding both the convergence rate and the validation error when training has converged. The NF-DPF leads to the highest effective sample size among all the evaluated methods.
	
	\begin{figure}[t!]
		\centering
		\begin{subfigure}{0.475\linewidth}
			\centering
			\includegraphics[width=\linewidth]{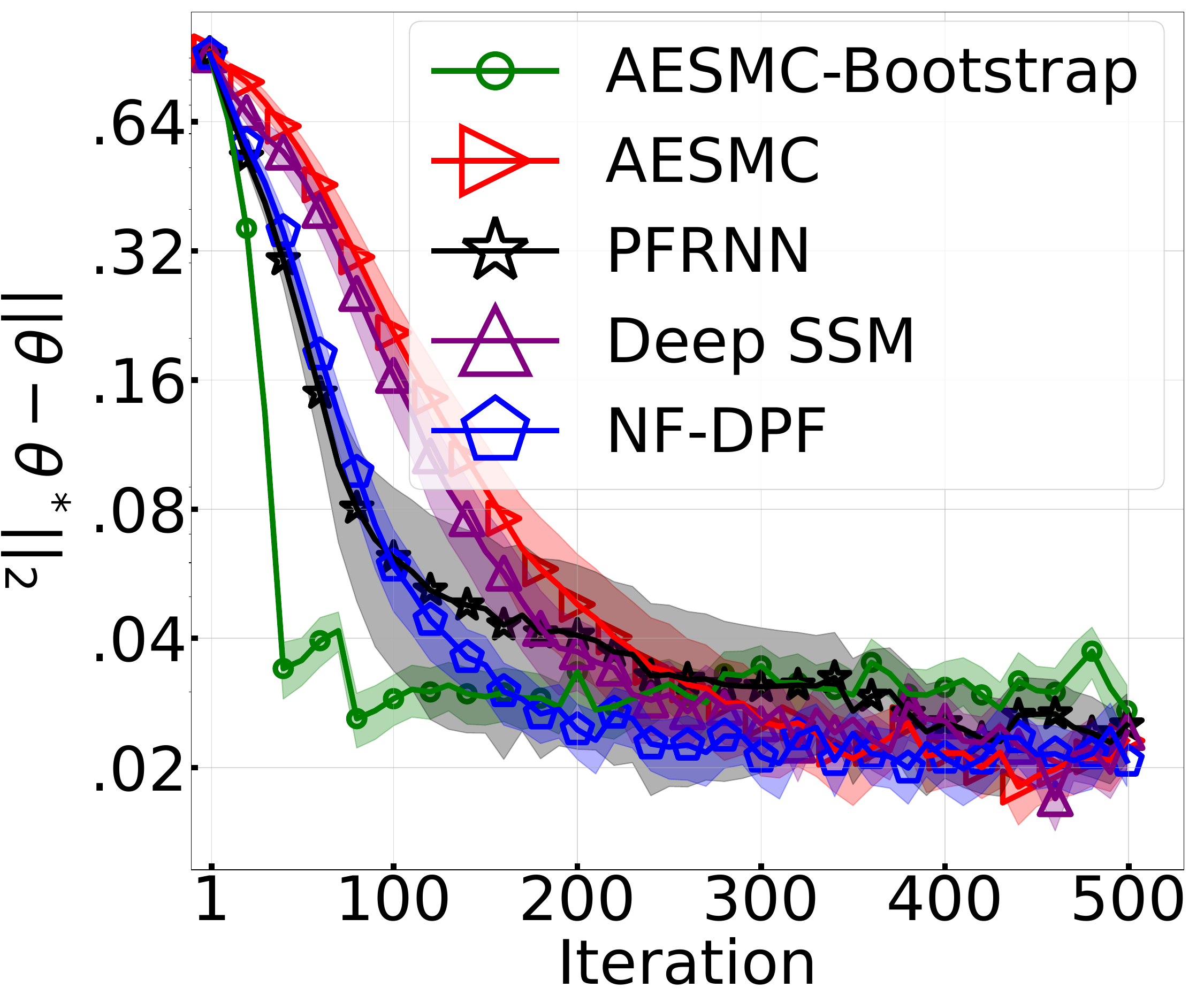}
			\caption{}
			\label{fig:1_d_a}
		\end{subfigure}
		\begin{subfigure}{0.475\linewidth}
			\centering
			\includegraphics[width=\linewidth]{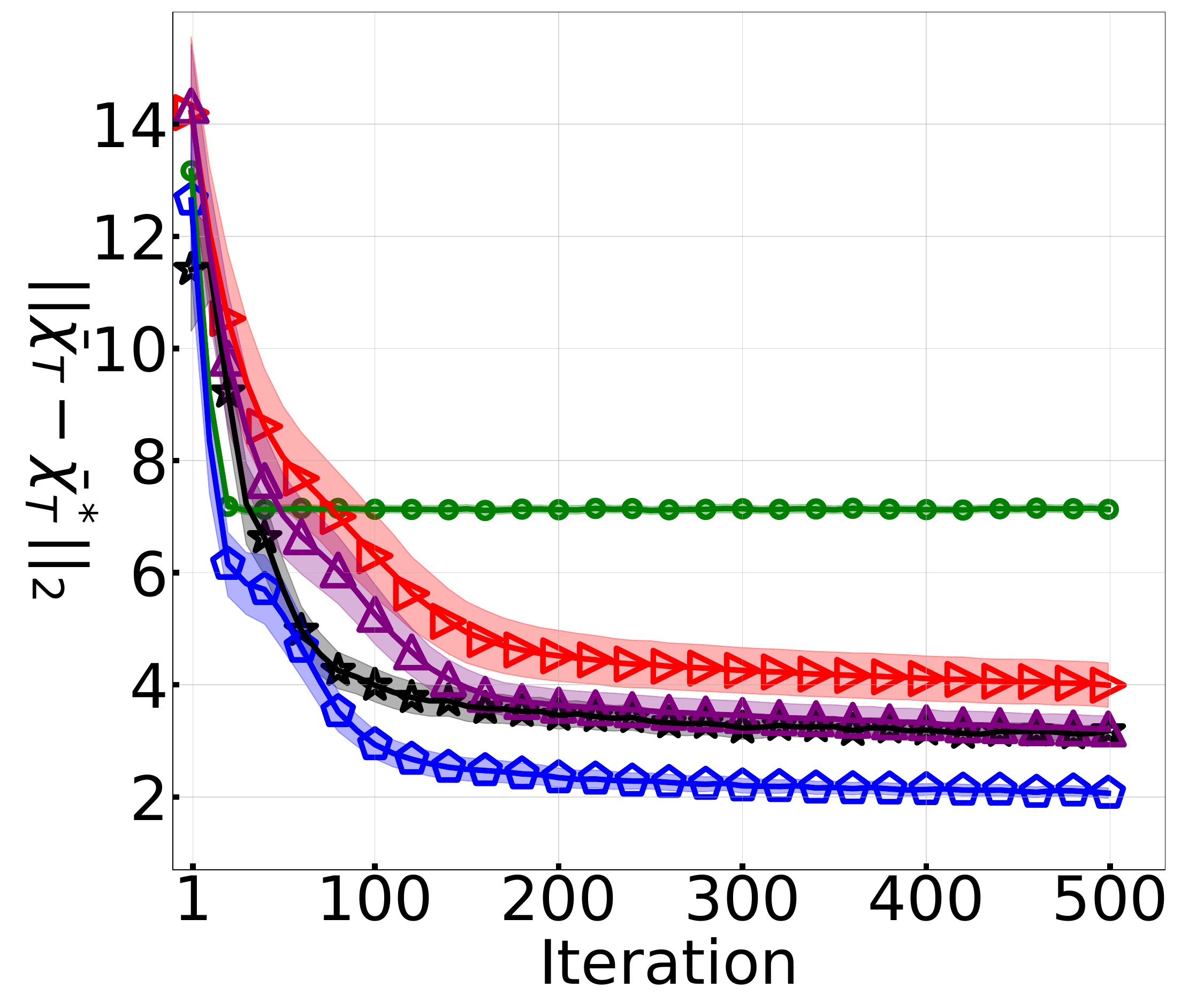}
			\caption{}
			\label{fig:1_d_b}
		\end{subfigure}
		\begin{subfigure}{0.475\linewidth}
			\centering
			\includegraphics[width=\linewidth]{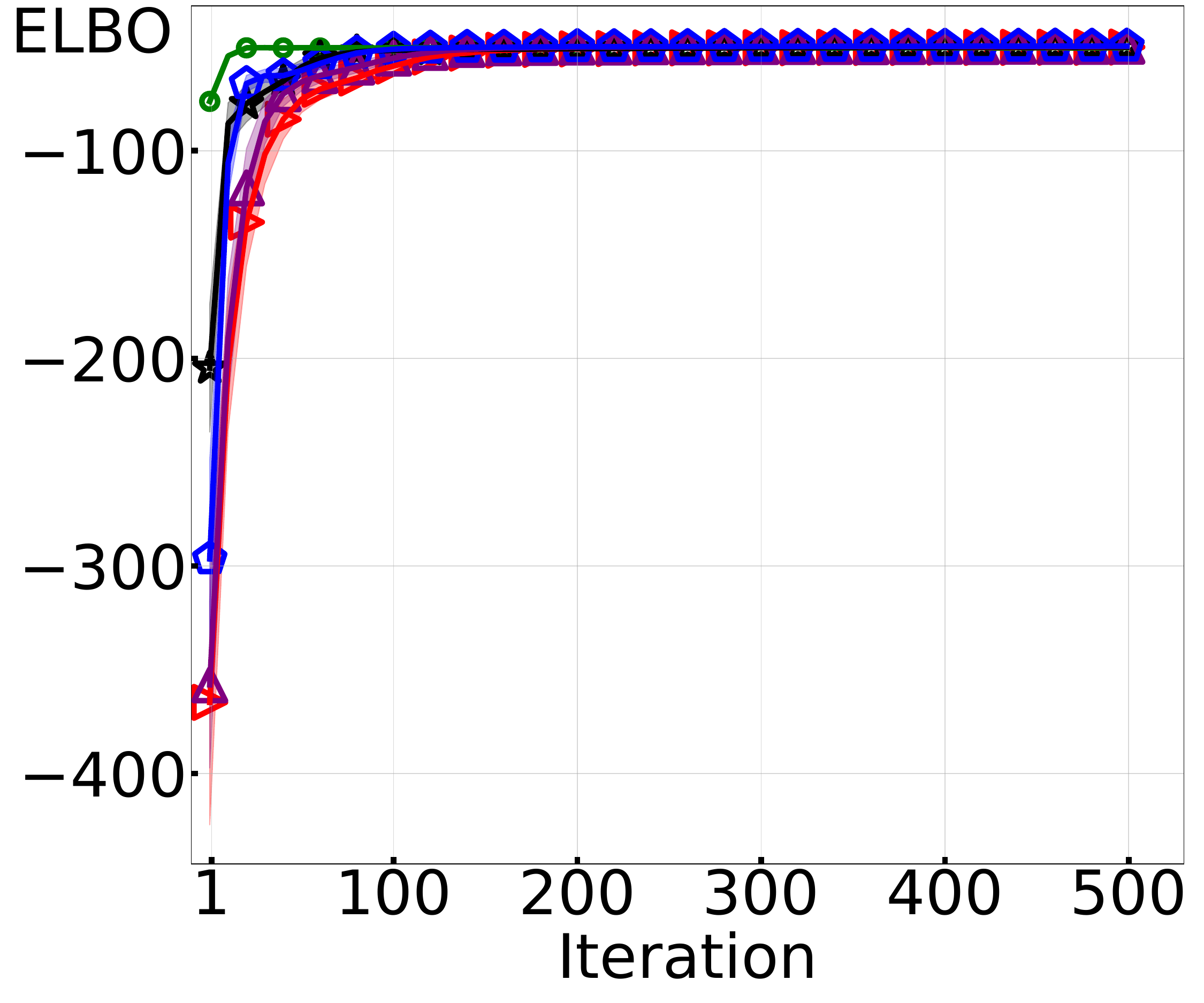}
			\caption{}
			\label{fig:1_d_c}
		\end{subfigure}
		\begin{subfigure}{0.475\linewidth}
			\centering
			\includegraphics[width=\linewidth]{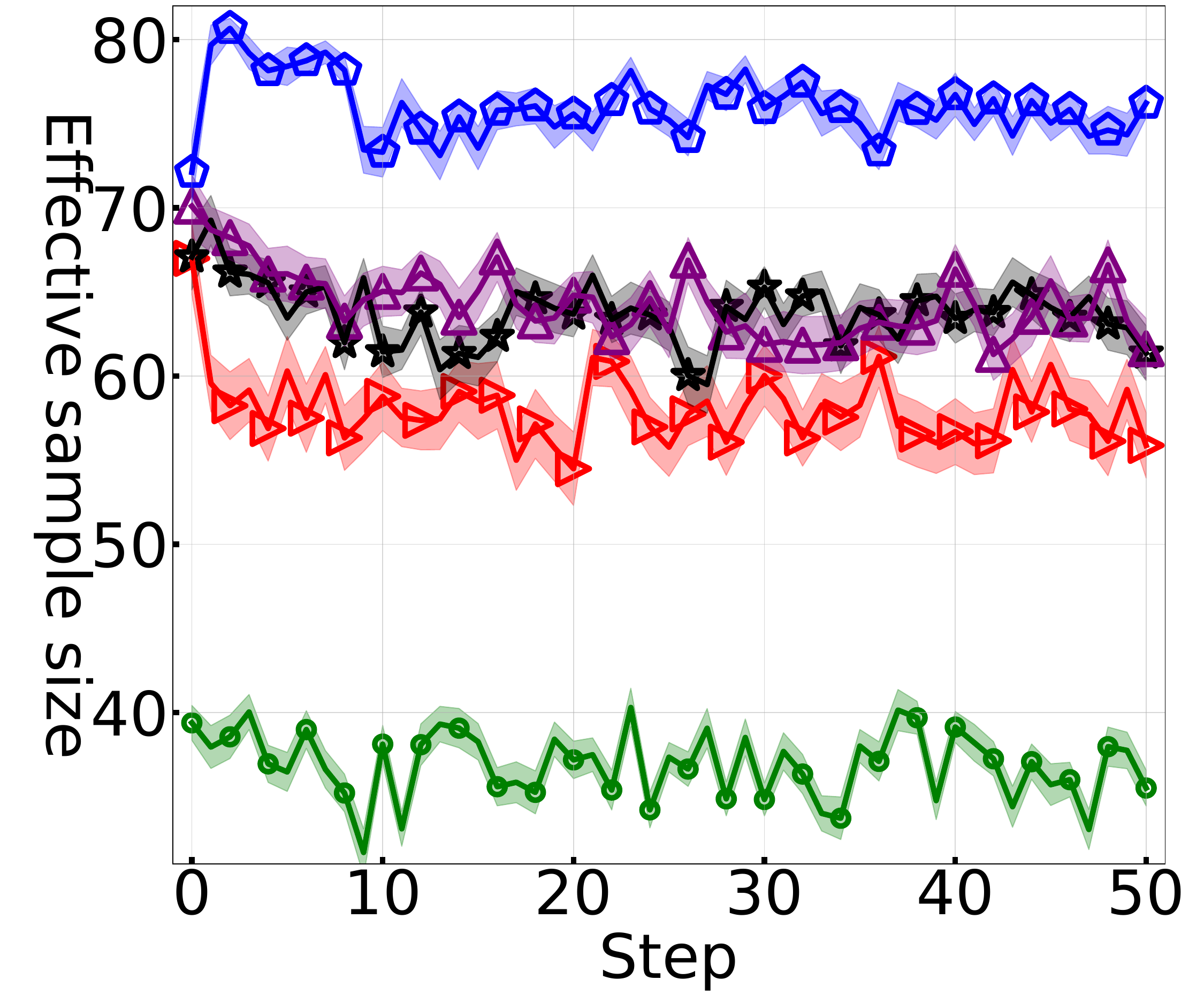}
			\caption{}
			\label{fig:1_d_d}
		\end{subfigure}
		\caption{Evaluation metrics of different methods evaluated on the validation set with 1000 sequences. (a) $L^2$-norm between the true parameter set and the estimated parameter sets. (b) $L^2$-norm of posterior mean error evaluated on validation set. (c) ELBO evaluated on validation set. (d) Effective sample size on validation set. Lower parameter estimation error, posterior mean error, higher effective sample size, and ELBO indicate better performance. The shaded area represents the standard deviation of the presented evaluation metrics among 50 random simulations.}
		\label{fig:1_d}
	\end{figure}
	
	\begin{table*}[htbp!]
		\caption{Evaluation metrics of different methods evaluated on the test set with 1000 sequences. Lower parameter estimation error, posterior mean error, higher effective sample size, and ELBO indicate better performance. The reported parameter estimation error, posterior mean error, and ELBO are computed with the model saved at the last iteration. The average effective sample size is the mean of effective sample sizes at each time step. The reported mean and standard deviation are computed with 50 random runs. The running time per training iteration is computed based on a computer with an 13th Gen Intel(R) Core(TM) i9-13900KF CPU 3000Mhz and 64GB of RAM, and a RTX 4090 graphic card with 24GB memories.}

		\centering
		\begin{small}
			\begin{tabular}{?c??c?c?c?c?c?}
				\Xhline{3\arrayrulewidth}
				{Method} & $||\theta-\theta^*||_2$ $\downarrow$& $||\bar{\chi}_T-\bar{\chi}_T^*||_2$$\downarrow$& \makecell{Average effective\\ sample size} $\uparrow$& ELBO $\uparrow$ & \makecell{Training time\\ (s/it)} \\ 
                \hline \hline
				Deep SSM~\cite{le2018auto} &  $0.0234 \pm 0.0121$ &$3.20 \pm 0.389$ &  $63.7 \pm 8.52$ & $-50.3 \pm 1.68$&0.189\\
                \hline
				AESMC-Bootstrap~\cite{le2018auto} &  $0.0271 \pm 0.0133$&$7.13 \pm 0.233$ &  $36.9 \pm 1.52$ & $-50.4 \pm 1.72$&0.182\\ \hline
				AESMC~\cite{le2018auto} & $0.0231\pm 0.0108$&$3.99\pm 1.316$ &  $58.1 \pm 14.24$ &$-50.1 \pm 1.51$&0.192\\ \hline
				PFRNN~\cite{ma2020particle}  & ${0.0251}\pm {0.015}$&${3.18}\pm {0.456}$ &  ${63.8} \pm {11.11}$ &${-50.1} \pm {1.94}$&0.204\\ \hline
				NF-DPF  & $\textbf{0.0207}\pm \textbf{0.0083}$&$\textbf{2.07}\pm \textbf{0.304}$ &  $\textbf{76.0} \pm \textbf{6.26}$ &$\textbf{-49.6} \pm \textbf{1.61}$&0.231\\  \Xhline{3\arrayrulewidth}
			\end{tabular}
		\end{small}
		\label{tab:1_d}
	\end{table*}

	\subsection{Multivariate Linear Gaussian State-Space Models}
	\label{subsec:multi_lgssm}
	\subsubsection{Experiment setup}
	In this experiment, we extend the one-dimensional example in Section~\ref{subsec:lgssm} to higher dimensional-spaces to evaluate the performance of the NF-DPF. Following the setup in~\cite{corenflos2021differentiable}, we consider a similar multivariate linear Gaussian state-space model as below:
	\begin{gather}
		\label{eq:multi_initial}
		x_0 \sim \mathcal{N}(\mathbf{0}_{d_\mathcal{X}}, \mathbf{I}_{d_\mathcal{X}})\,,\\
		\label{eq:multi_transition}
		x_t|x_{t-1}\sim \mathcal{N}(\boldsymbol{\theta}_1^*x_{t-1},  \mathbf{I}_{d_\mathcal{X}})\text{ for } t\geq1\,,\\
		\label{eq:multi_measurement}
		{y}_{t}| x_t \sim \mathcal{N}(\boldsymbol{\theta}_2^*x_{t}, 0.1\mathbf{I}_{d_\mathcal{X}}) \text{ for } t\geq0\,,
	\end{gather}
	where $\mathbf{0}_{d_\mathcal{X}}$ is a $d_\mathcal{X}\times d_\mathcal{X}$ null matrix, $\mathbf{I}_{d_\mathcal{X}}$ is a $d_\mathcal{X}\times d_\mathcal{X}$ identity matrix, the element of $\boldsymbol{\theta}_1^*$ at the intersection of its $i$-th row and $j$-th column $\boldsymbol{\theta}_1^*(i,j)=(0.42^{|i-j|+1})_{1\leq i,j \leq d_\mathcal{X}}$, $\boldsymbol{\theta}_2^*$ is a $d_\mathcal{Y}\times d_\mathcal{X}$ matrix with 0.5 on the diagonal for the first $d_\mathcal{Y}$ rows and zeros elsewhere. We set $d_\mathcal{X}=d_\mathcal{Y}$ in this experiment. We again want to learn model parameters $\theta^*:=[\theta_1^*, \theta_2^*]$ and proposal distributions by maximizing the evidence lower bound (ELBO) as in Section~\ref{subsec:lgssm}. We also use the same hyperparameter setting in Section~\ref{subsec:lgssm}, and train the compared models with 5000 sequences for 500 iterations (10 sequences for each iteration). Validation and test sets contain 1000 i.i.d sequences each. Model parameters $\theta:=[\theta_1, \theta_2]$ to be optimized are initialized as $[0.1\times\mathbf{I}_{d_\mathcal{X}}, 0.1\times\mathbf{I}_{d_\mathcal{X}}]$. 

	\subsubsection{Experimental results}
	In Fig.~\ref{fig:multi_lgssm} and Table~\ref{tab:multivariate}, we show the test performance of NF-DPFs, the Deep SSM, the PFRNN, and the AESMC in $d_\mathcal{X}$-dimensional spaces for $d_\mathcal{X}\in\{2, 5, 10, 25, 50, 100\}$. The AESMC-bootstrap particle filter is excluded in this experiment because its estimation error is too large to be compared with the other methods in the same figure. The evaluation metrics reported in Section~\ref{subsec:lgssm} are used in this experiment as well. 
 
    For model parameters learning, from Fig.~\ref{fig:multi_a}, Fig.~\ref{fig:multi_c}, and Table~\ref{tab:multivariate}, we found that NF-DPFs produced the highest ELBOs in 4 out of 6 setups, and all the evaluated methods achieved similar parameter estimation errors. We also observed that higher ELBOs and lower parameter estimation errors do not necessarily correspond to better posterior approximation errors as we can see from Fig.~\ref{fig:multi_lgssm} and Table~\ref{tab:multivariate}. 
    Specifically, from Fig.~\ref{fig:multi_b} and Table~\ref{tab:multivariate}, NF-DPFs outperform the compared baselines in terms of posterior approximation errors by consistently leading to the lowest posterior mean errors and the highest effective sample sizes for $d_\mathcal{X}\in\{2, 5, 10, 25, 50, 100\}$. 
	
	\begin{figure}[htbp]
		\centering
		\begin{subfigure}{0.475\linewidth}
			\centering
			\includegraphics[width=\linewidth]{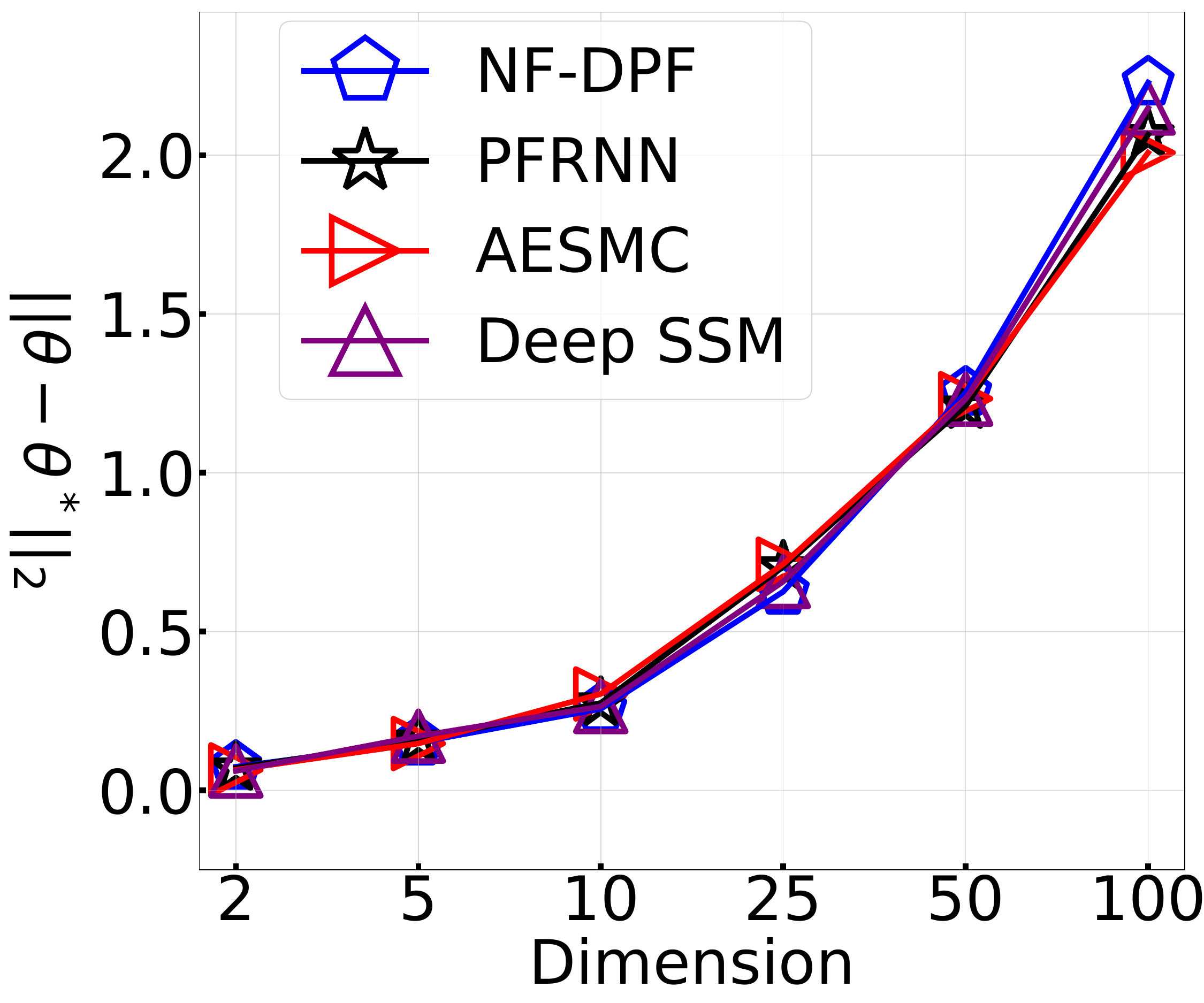}
			\caption{}
			\label{fig:multi_a}
		\end{subfigure}
		\begin{subfigure}{0.475\linewidth}
			\centering
			\includegraphics[width=\linewidth]{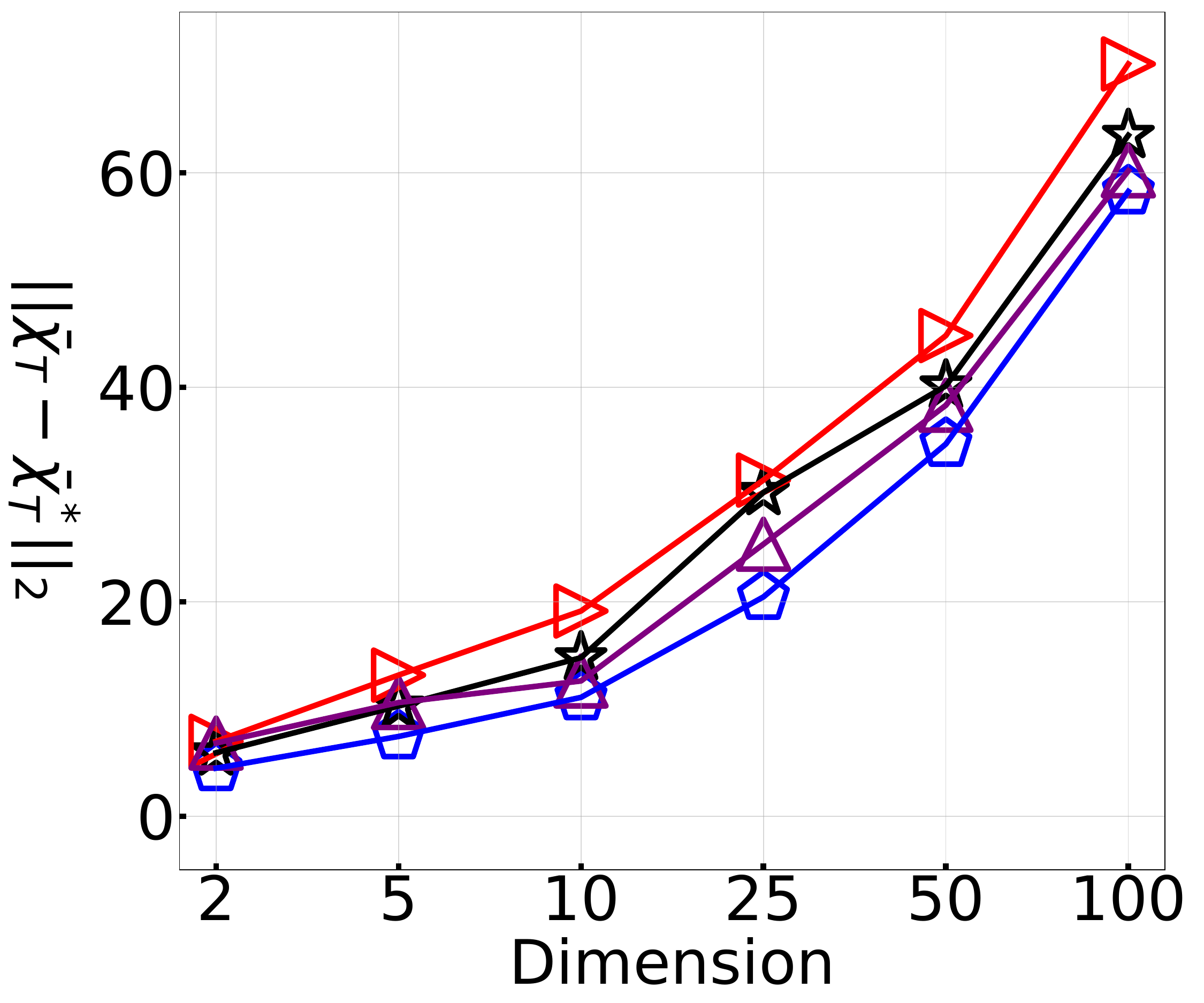}
			\caption{}
			\label{fig:multi_b}
		\end{subfigure}
		\begin{subfigure}{0.475\linewidth}
			\centering
			\includegraphics[width=\linewidth]{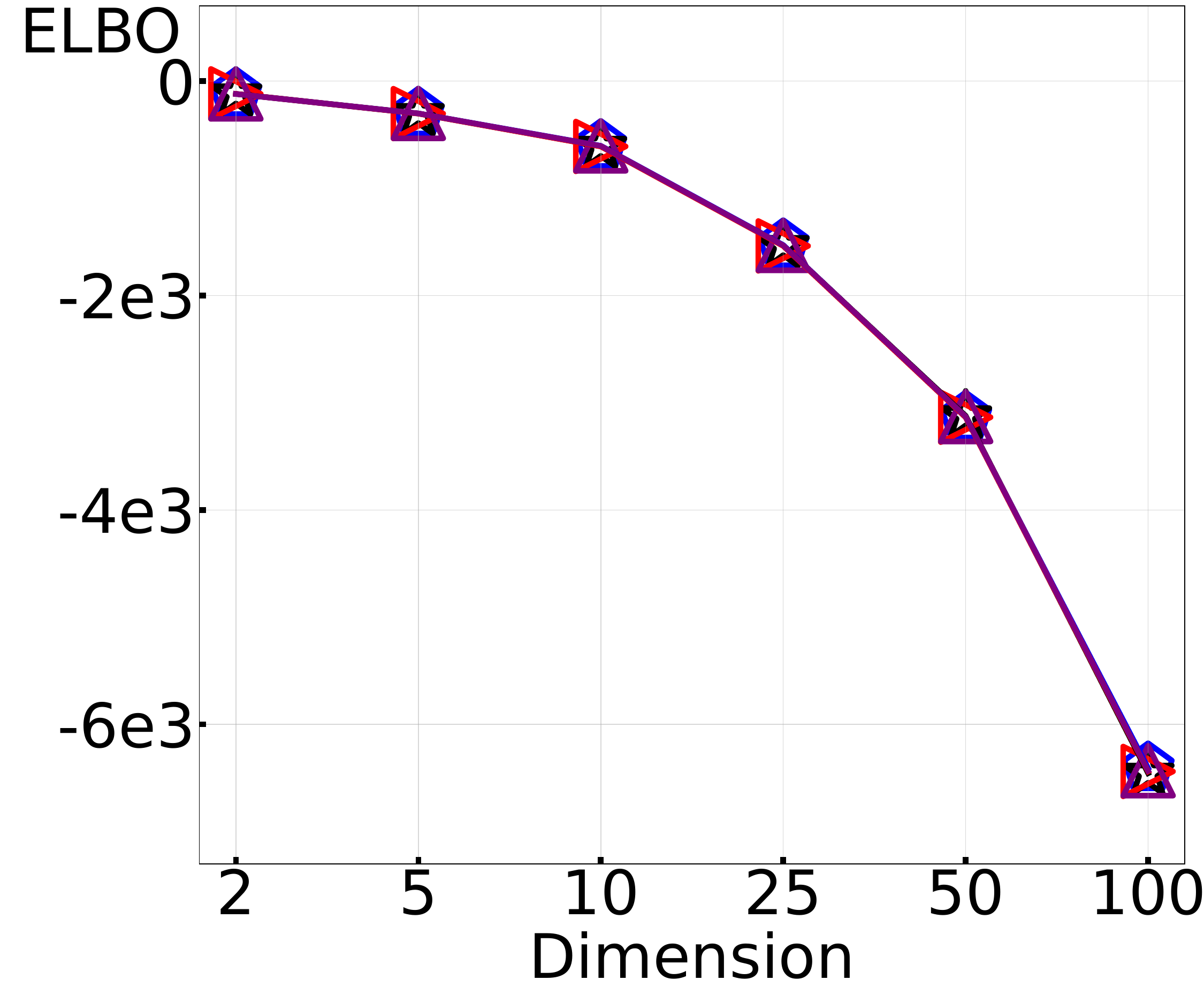}
			\caption{}
			\label{fig:multi_c}
		\end{subfigure}
		\begin{subfigure}{0.475\linewidth}
			\centering
			\includegraphics[width=\linewidth]{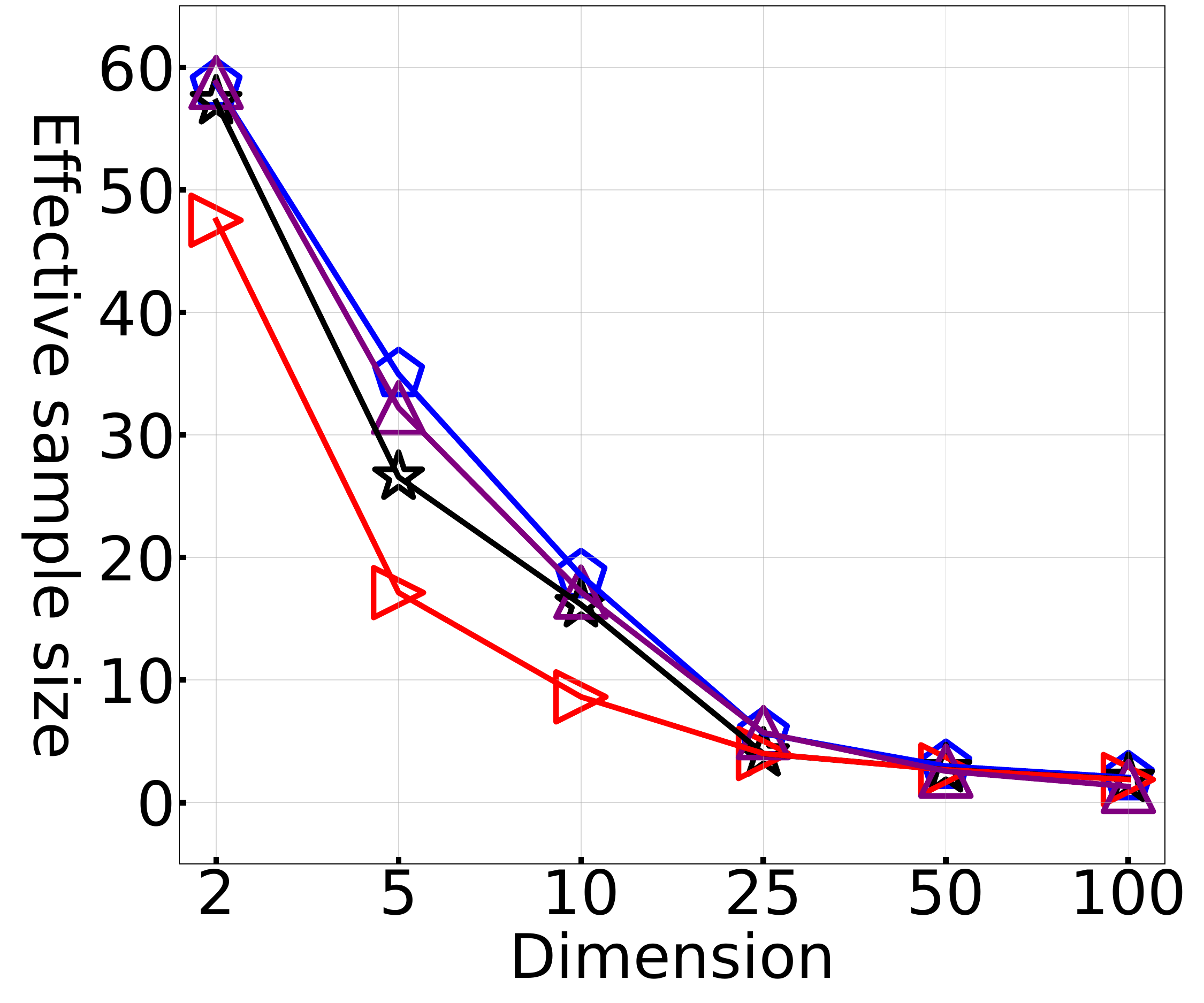}
			\caption{}
			\label{fig:multi_d}
		\end{subfigure}
		\caption{Evaluation metrics of different methods evaluated on a test set with 1000 sequences. (a) $L^2$-norm between the true parameter set and the estimated parameter sets. (b) $L^2$-norm of posterior mean error evaluated on test set. (c) ELBO evaluated on test set. (d) Effective sample size on test set. Lower parameter estimation error, posterior mean error, higher effective sample size, and ELBO indicate better performance. The reported results are the mean of evaluation metrics computed over 50 random simulations.}
		\label{fig:multi_lgssm}
	\end{figure}
	
	\begin{table*}[htbp]
		\caption{Evaluation metrics of different methods evaluated on a test set with 1000 sequences. Lower parameter estimation error, posterior mean error, higher effective sample size, and ELBO indicate better performance. The reported parameter estimation error, posterior mean error, and ELBO are computed with the model saved at the last iteration. The average effective sample size is the mean of effective sample size at each time step. The reported mean and standard deviation is computed with 50 random runs.}
		\centering
		\begin{scriptsize}
			\begin{tabular}{?c?c??c?c?c?c?c?}
				\Xhline{3\arrayrulewidth}
				Dimension&{Method} & $||\theta-\theta^*||_2$ $\downarrow$& $||\bar{\chi}_T-\bar{\chi}_T^*||_2$$\downarrow$& \makecell{Average effective\\ sample size} $\uparrow$& ELBO $\uparrow$& \makecell{Training time\\ (s/it)} \\ \hline \hline 
				\multirow{3}{0.1\linewidth}{$D=2$}&Deep SSM~\cite{rangapuram2018deep}& $\textbf{0.061}\pm \textbf{0.0229}$&$6.83\pm 1.92$ & \textbf{58.8 $\pm$ 6.01}&$-120.3 \pm 1.30$& 0.279\\ \cline{2-7}
                    &AESMC~\cite{le2018auto}& ${0.065}\pm {0.0226}$&$6.85\pm 1.16$ &  $47.5 \pm 12.51$ &$-119.3 \pm 2.80$& 0.272\\ \cline{2-7}
                    &PFRNN~\cite{ma2020particle}& {${0.071}\pm {0.0181}$}& ${5.82} \pm {1.45}$ &  ${57.2} \pm {10.18}$ & ${-119.8} \pm {2.67}$& 0.317\\ \cline{2-7}
                    &NF-DPF & $0.075\pm 0.0120$&$\textbf{4.48}\pm \textbf{0.84}$ &  ${58.6} \pm {6.46}$ &$\textbf{-119.2} \pm \textbf{1.88}$&0.361\\ \Xhline{3\arrayrulewidth}
				
				\multirow{3}{0.1\linewidth}{$D=5$}&Deep SSM~\cite{rangapuram2018deep}& ${0.171}\pm {0.0172}$&$10.68\pm 1.48$ & {32.2 $\pm$ 4.87}&$-301.8 \pm 3.52$& 0.319\\  \cline{2-7}
                    &AESMC~\cite{le2018auto}& $\textbf{0.148}\pm \textbf{0.0120}$&$13.25\pm 1.62$ &  $17.1 \pm 2.83$ &$-303.3 \pm 2.64$&0.309\\ \cline{2-7}
                    &PFRNN~\cite{ma2020particle}& {${0.158}\pm {0.0156}$}& ${10.27} \pm {0.40}$ &  ${26.6} \pm {2.60}$ & ${-302.6} \pm {3.19}$&0.328\\ \cline{2-7}
                    &NF-DPF & $0.151\pm 0.0105$&$\textbf{7.44}\pm \textbf{1.55}$ &  $\textbf{34.9} \pm \textbf{5.29}$ &$\textbf{-301.6} \pm \textbf{3.25}$&0.371\\ \Xhline{3\arrayrulewidth}

				\multirow{3}{0.1\linewidth}{$D=10$}&Deep SSM~\cite{rangapuram2018deep}& ${0.264}\pm {0.0292}$&$12.64\pm 0.82$ & {17.1 $\pm$ 2.98}&$-604.8 \pm 4.85$& 0.351\\  \cline{2-7}
                    &AESMC~\cite{le2018auto}& $0.304\pm 0.0214$&$19.17\pm 1.88$ &  $8.6 \pm 1.73$ &$-609.7 \pm 6.24$&0.327\\ \cline{2-7}
                    &PFRNN~\cite{ma2020particle}& {${0.275}\pm {0.0148}$}& ${14.89}\pm {0.32}$ &  ${16.2} \pm {1.24}$ & ${-608.5} \pm {7.18}$&0.357\\ \cline{2-7}
                    &NF-DPF & $\textbf{0.257}\pm \textbf{0.0116}$&$\textbf{11.18}\pm \textbf{1.28}$ &  $\textbf{18.5} \pm \textbf{2.37}$ &$\textbf{-603.3} \pm \textbf{5.02}$&0.385\\ \Xhline{3\arrayrulewidth}

				\multirow{3}{0.1\linewidth}{$D=25$}&Deep SSM~\cite{rangapuram2018deep}& ${0.657}\pm {0.0372}$&$25.41\pm 1.35$ & {5.7 $\pm$ 0.79}&$-1532.7 \pm 8.62$& 0.351\\  \cline{2-7}
                    &AESMC~\cite{le2018auto}& $0.712\pm 0.0191$&$31.46\pm 2.12$ &  $4.0\pm 0.22$ &$-1537.5 \pm 12.14$&0.348\\ \cline{2-7}
                    &PFRNN~\cite{ma2020particle}& {${0.704}\pm {0.0312}$}& ${30.21} \pm {1.03}$ &  ${4.0} \pm {0.39}$ & ${-1534.2} \pm {6.46}$&0.378\\  \cline{2-7}
                    &NF-DPF & $\textbf{0.626}\pm \textbf{0.0188}$&$\textbf{20.51}\pm \textbf{1.21}$ &  $\textbf{5.6} \pm \textbf{0.45}$ &$\textbf{-1529.0} \pm \textbf{12.53}$&0.412\\
				\Xhline{3\arrayrulewidth}
				
				\multirow{3}{0.1\linewidth}{$D=50$}&Deep SSM~\cite{rangapuram2018deep}& ${1.23}\pm {0.0168}$&$38.33\pm 0.58$ & {2.6 $\pm$ 0.65}&$-3128.6 \pm 16.40$& 0.387\\  \cline{2-7}
                    &AESMC~\cite{le2018auto}& $1.233\pm 0.0260$&$44.83\pm 0.92$ &  $2.7 \pm 0.37$ &$-3135.0 \pm 19.56$&0.382\\ \cline{2-7}
                    &PFRNN~\cite{ma2020particle}& {$\textbf{1.210}\pm \textbf{0.0222}$}& ${40.14} \pm {0.61}$ &  ${2.7} \pm {0.13}$ & $\textbf{-3121.7} \pm \textbf{6.39}$&0.442\\ \cline{2-7}
                    &NF-DPF & $1.252\pm 0.0189$&$\textbf{34.73}\pm \textbf{1.81}$ &  $\textbf{3.0} \pm \textbf{0.11}$ &${-3135.4} \pm {28.53}$&0.482\\ \Xhline{3\arrayrulewidth}

				\multirow{3}{0.1\linewidth}{$D=100$}&Deep SSM~\cite{rangapuram2018deep}& ${2.158}\pm {0.0531}$&$60.19\pm 1.69$ & {1.3 $\pm$ 0.26}&$\textbf{-6431.8} \pm \textbf{69.67}$& 0.501\\  \cline{2-7}
                    &AESMC~\cite{le2018auto}& $\textbf{2.007}\pm \textbf{0.0255}$&$70.16\pm 2.23$ &  $1.9 \pm 0.28$ &${-6438.1} \pm {34.80}$&0.478\\ \cline{2-7}
                    &PFRNN~\cite{ma2020particle}& {${2.064}\pm {0.0272}$}& ${63.51} \pm {2.78}$ &  ${1.9} \pm {0.47}$ & ${-6454.8} \pm {52.32}$&0.501\\ \cline{2-7}
                    &NF-DPF & $2.228\pm 0.0419$&$\textbf{58.27}\pm \textbf{1.45}$ &  $\textbf{2.0} \pm \textbf{0.18}$ &${-6447.4} \pm {80.01}$&0.581\\ \Xhline{3\arrayrulewidth}

			\end{tabular}
		\end{scriptsize}
		\label{tab:multivariate}
	\end{table*}

	\subsection{Disk Localization}
	\label{subsec:disk}
	\subsubsection{Experiment setup}
	We consider in this experiment a disk localization task, where the goal is to locate a moving red disk based on observation images. Specifically, an observation image is a 128$\times$128 RGB image that contains 25 disks, including the red disk and 24 distracting disks with varying sizes and colors, and such an observation image is given at each time step. The colors of distracting disks are uniformly drawn with replacement from the set of \{green, blue, cyan, purple, yellow, white\}, and the radii of them are uniformly sampled with replacement from \{3, 4,$\cdots$, 10\}. The radius of the target, i.e. the red disk, is set to be 7. The initial locations of the 25 disks are uniformly distributed over the observation image as shown in Fig.~\ref{fig:disk_image}.

 Following the setup in~\cite{chen2022conditional,corenflos2021differentiable,chen2021differentiable}, we use a combination of two loss functions as our training objective:
 \begin{gather}
 \label{eq:loss_disk}
     \mathcal{L}(\theta,\phi):=\mathcal{L}_{\text{RMSE}}(\theta,\phi)+\mathcal{L}_{\text{AE}}(\theta)\,,
 \end{gather}
 where $\mathcal{L}_{\text{RMSE}}(\theta,\phi)$ is the root mean square error (RMSE) between the estimated location $\bar{{x}}_t$ and the ground truth location ${x}_t^*$ of the red disk
	\begin{gather}
		\label{eq:loss_rmse}
		\mathcal{L}_{\text{RMSE}}(\theta,\phi):=\sqrt{\frac{1}{T}\sum_{t=0}^{T}||\bar{{x}}_t-{x}_t^*||_2^2}\,,
	\end{gather}
	and $\mathcal{L}_{\text{AE}}(\theta)$ is the autoencoder reconstruction loss of observation images as defined in Eq.~\eqref{eq:loss_ae}. We use an Adam optimizer~\cite{kingma2015adam} with a learning rate of 0.001 to minimize the overall loss function $\mathcal{L}(\theta,\phi)$. 
	
	The dynamic system used for generating training, validation, and test sets for this experiment follows the setup in~\cite{kloss2021train}. The training set we use to optimize DPFs contains 500 trajectories, each with 50 time steps, and both the validation and test sets are composed of 50 trajectories with the same length as the training trajectories.
	
	The performance of different DPFs is evaluated by the RMSE between estimated locations and ground truth locations of the tracking objective. We report both the test RMSE and the validation RMSE to investigate the tracking performance of different DPFs during and after training.
	\subsubsection{Experimental results}
	\begin{figure}[htbp]
		\centering
		\begin{subfigure}{0.475\linewidth}
			\centering
			\includegraphics[width=0.925\linewidth, height=0.8\linewidth]{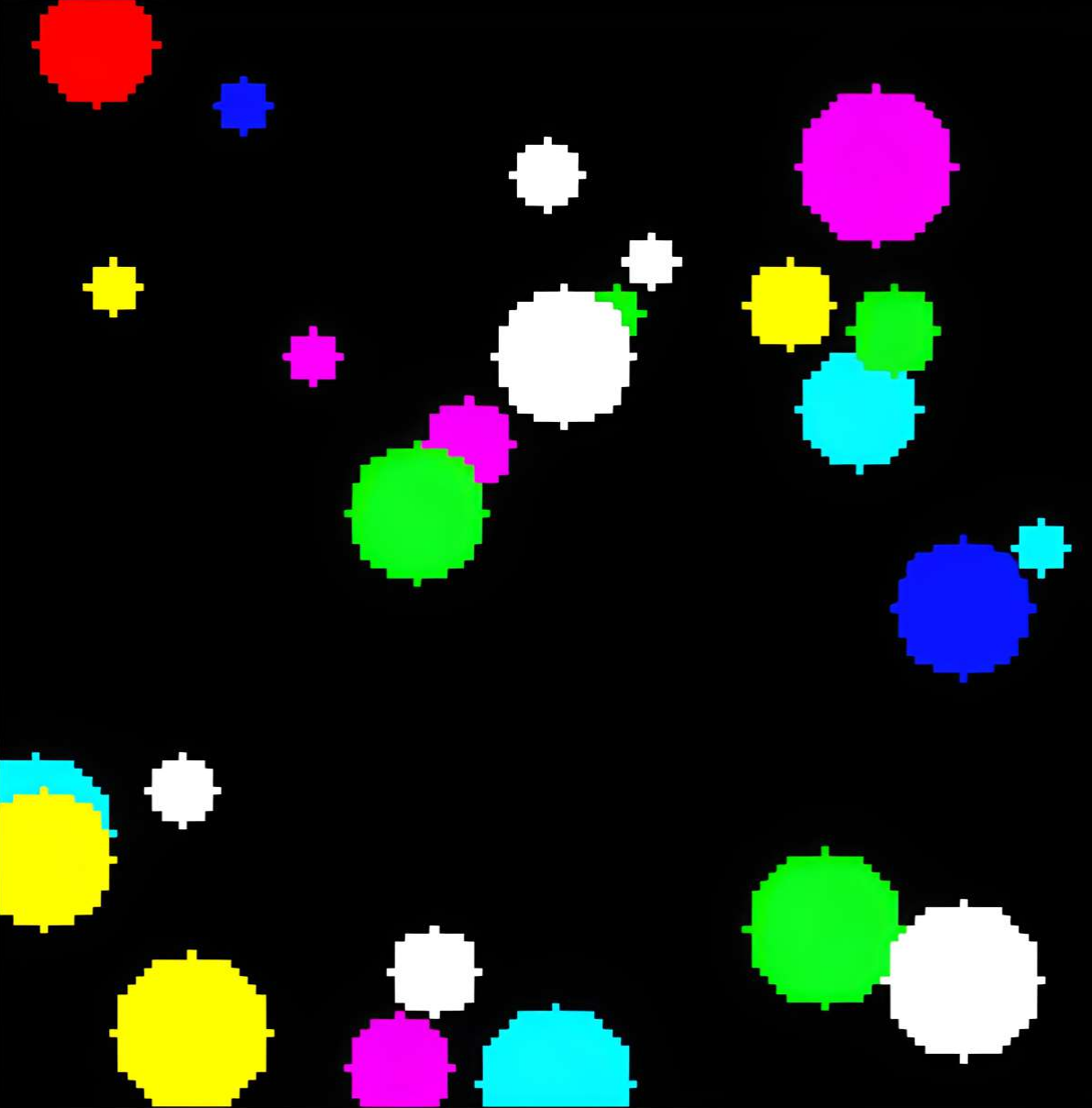}
			\caption{}
			\label{fig:disk_image}
		\end{subfigure}
		\begin{subfigure}{0.475\linewidth}
			\centering
			\includegraphics[width=\linewidth]{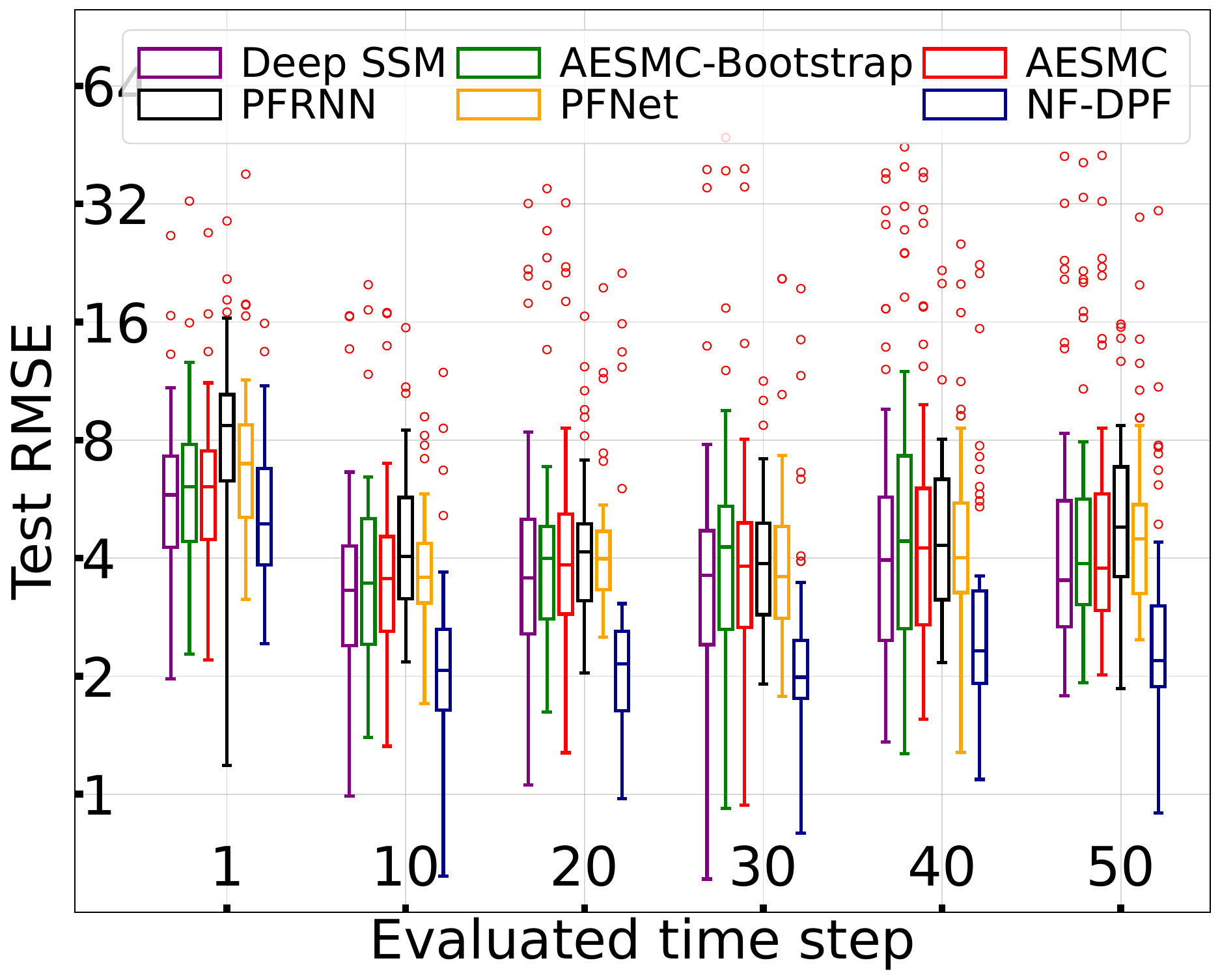}
			\caption{}
			\label{fig:disk_results_1}
		\end{subfigure}
		\begin{subfigure}{0.475\linewidth}
			\centering
			\includegraphics[width=\linewidth]{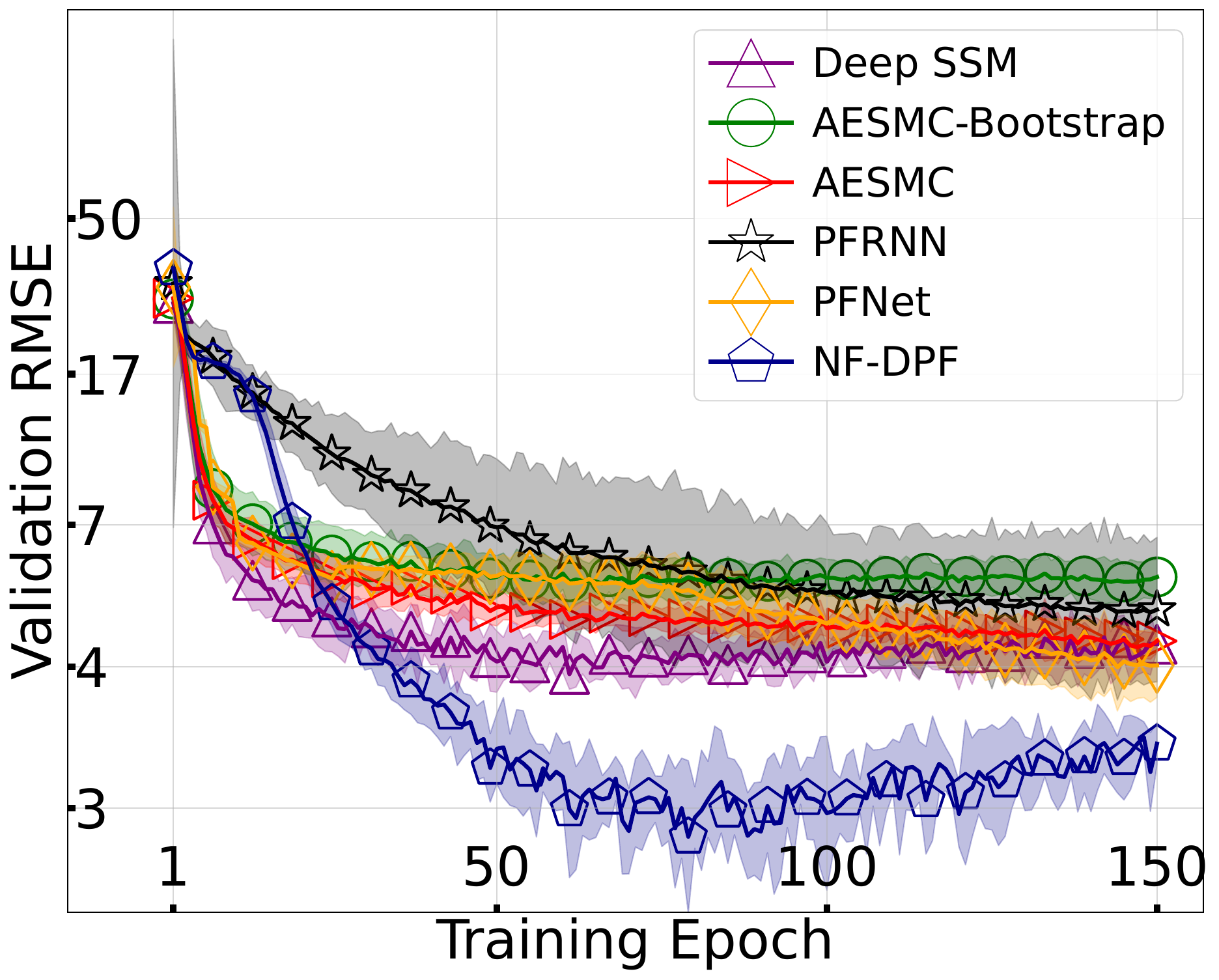}
			\caption{}
			\label{fig:disk_results_3}
		\end{subfigure}
		\caption{(A) An example of observation images. (B) RMSE of different methods evaluated at selected time steps on test set. (C) RMSE of different differentiable particle filters on the validation set during training. Shaded areas represent the standard deviation of the presented evaluation metrics among 5 random simulations.}
		\label{fig:disk_results}
	\end{figure}

    \begin{figure}
        \centering
        \begin{subfigure}{0.475\linewidth}
          \centering
          \includegraphics[width=0.925\linewidth, height=0.8\linewidth]{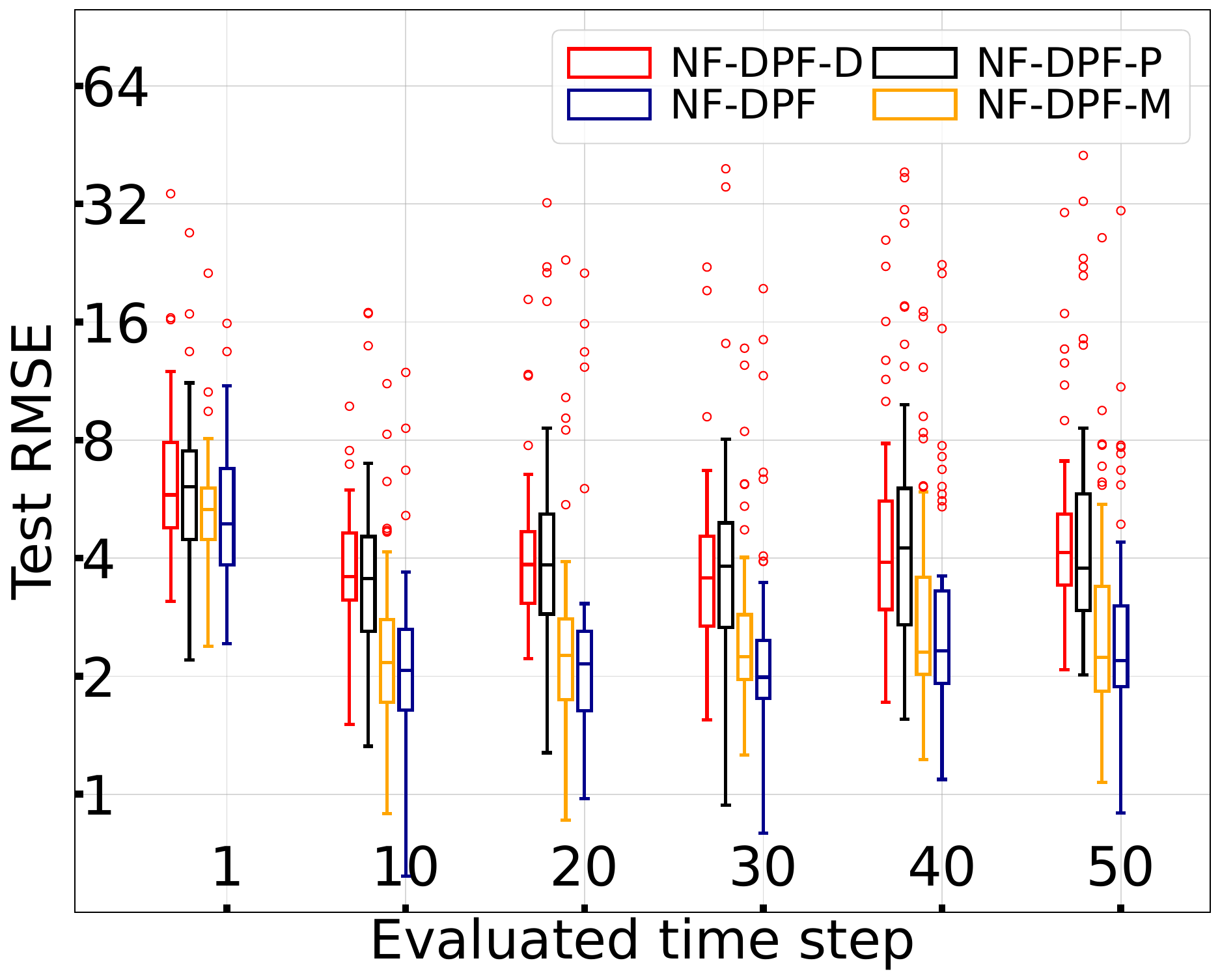}
          \caption{}
          \label{fig:ablation_test}
        \end{subfigure}%
        \begin{subfigure}{0.475\linewidth}
          \centering
          \includegraphics[width=0.925\linewidth, height=0.8\linewidth]{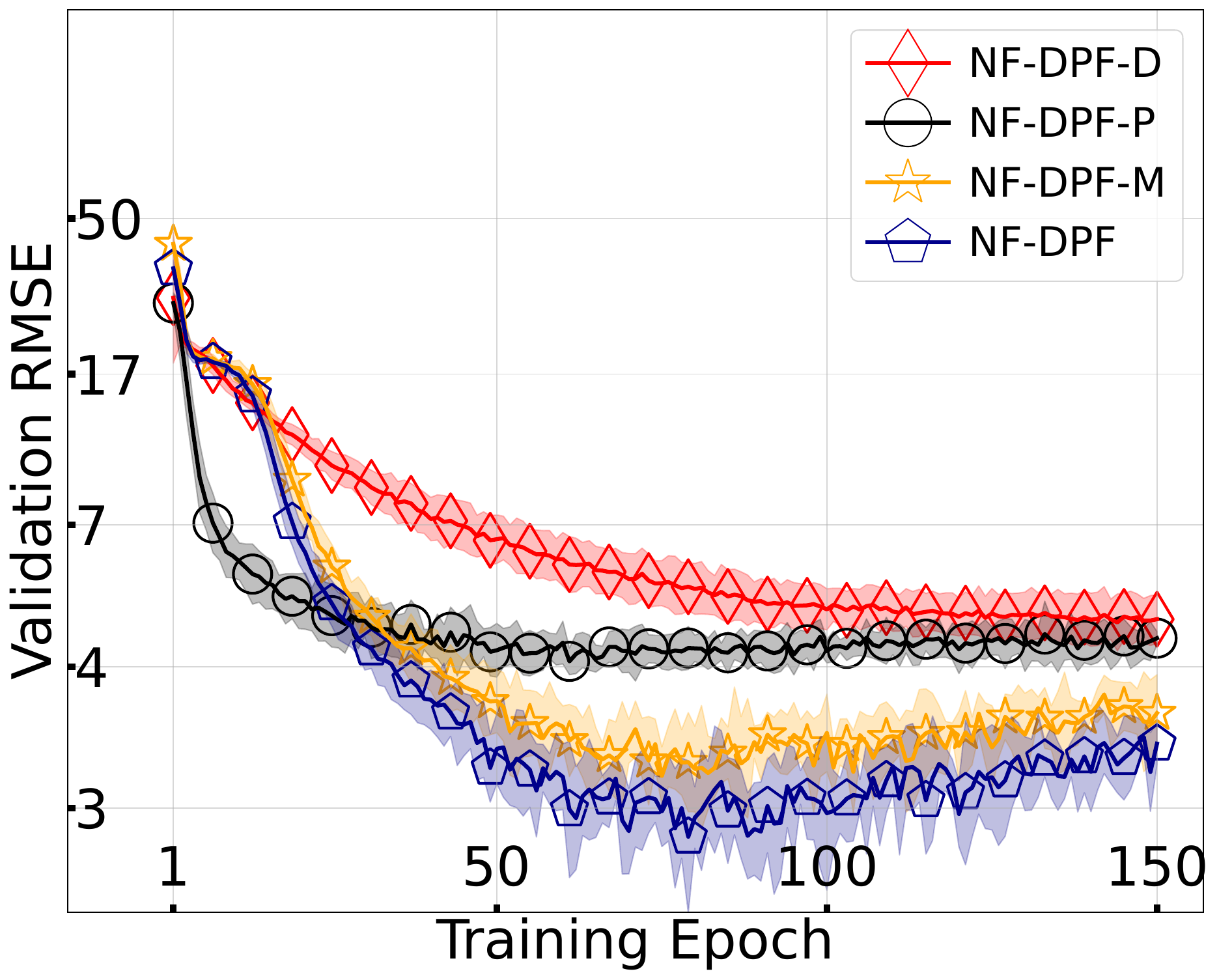}
          \caption{}
          \label{fig:ablation_validation}
        \end{subfigure}
        \caption{Ablation studies conducted in the disk localization experiment to investigate how each individual component of the NF-DPF affects its performance. (a) RMSE of variants of the NF-DPF evaluated at selected time steps on test
set. (b) RMSE of different variants of the NF-DPF on the validation set during training. Shaded areas represent the standard deviation of the presented evaluation metrics among 5 random simulations.}
        \label{fig:ablation}
    \end{figure}

	\setlength{\extrarowheight}{1pt}
	\setlength\tabcolsep{1pt}
	\begin{scriptsize}
		\begin{table}[htbp]
			\caption{Disk tracking RMSE of different differentiable particle filters. The reported RMSE is averaged over 50 time steps for 50 trajectories in the test set, and the standard deviation is computed with 5 simulation runs with different random seeds.}
			\label{tab:disk}
			\centering
			\begin{tabular}{?l?c?c?c?c?c?c?}
				\Xhline{3\arrayrulewidth}
                Method&\makecell{\\Deep\\SSM}&\makecell{\\AESMC\\Bootstrap}&\makecell{AESMC}&\makecell{PFRNN}&\makecell{PFNet}&\makecell{NF-DPF} \\\Xcline{1-7}{3\arrayrulewidth}
				RMSE&5.91$\pm$1.20&6.35$\pm$1.15&5.85$\pm$1.34 &6.12$\pm$1.23&5.34$\pm$1.27&\makecell{\textbf{3.62$\pm$0.98}}\\\Xcline{1-7}{3\arrayrulewidth}
                \makecell{Training \\time (s/it)}&0.578&0.572&0.618&0.687&0.572&1.041
                \\ \Xcline{1-7}{3\arrayrulewidth}
			\end{tabular}
		\end{table}
	\end{scriptsize}
 
    \setlength\tabcolsep{7pt}
	\begin{scriptsize}
		\begin{table}[htbp]
			\caption{Disk tracking RMSE of different variants of the NF-DPF. The reported RMSE is averaged over 50 time steps for 50 trajectories in the test set, and the standard deviation is computed with 5 simulation runs with different random seeds.}
			\label{tab:disk_ablation}
			\centering
			\begin{tabular}{?l?c?c?c?c?}
				\Xhline{3\arrayrulewidth}
                Method&\makecell{NF-DPF-D}&\makecell{NF-DPF-P}&\makecell{NF-DPF-M}&\makecell{NF-DPF}\\\Xcline{1-5}{3\arrayrulewidth}
				RMSE&5.65$\pm$0.95&5.21$\pm$1.12 &3.81$\pm$1.05&3.62$\pm$0.98\\\Xcline{1-5}{3\arrayrulewidth}
			\end{tabular}
		\end{table}
	\end{scriptsize}

        The experimental results shown in~Fig.~\ref{fig:disk_results_3} are the validation RMSEs of different methods evaluated during training. It can be observed that the NF-DPF requires fewer training epochs to converge but in the meantime achieves better tracking performance compared with the other evaluated approaches. For all methods, we saved the best models with the lowest validation error and used them to compute the tracking error on the test set.
        
        We report the test RMSEs of different differentiable particle filters in Table~\ref{tab:disk}. The experimental results in Table~\ref{tab:disk} again demonstrated the benefit of using (conditional) normalizing flows to construct differentiable particle filters. It can be observed from Table~\ref{tab:disk} that among all the tested methods, the proposed NF-DPF produces the lowest mean tracking error. Fig.~\ref{fig:disk_results_1} compares tracking RMSEs from different methods on the test set. From Fig.~\ref{fig:disk_results_1}, we found that, except for the first step $t=1$, the proposed NF-DPFs achieved the lowest tracking RMSE at all evaluated time steps compared with the other evaluated methods.

        To investigate how each component of the NF-DPF influences its performance, we conducted an ablation study in this experiment. The results of the ablation study are presented in Figure~\ref{fig:ablation} and Table~\ref{tab:disk_ablation}, where NF-DPF-D, NF-DPF-P, and NF-DPF-M respectively refer to the method that only uses normalizing flows to construct its dynamic model, proposal distribution, and measurement model. It can be observed that among the three components, the measurement model brought the most significant performance improvement to the NF-DPF. The NF-DPF-P produced lower localization error than all the other baseline methods. The NF-DPF-D outperformed 3 out of 4 baseline methods. We hypothesize that the primary factor contributing to the superior performance of NF-DPF-M compared to other variants is that, in this experiment, the location of the target disk can be directly observed from the observation image. Please note that the conclusions drawn from this ablation study may not be applicable to other environments. More extensive ablation studies are required to gain a better understanding of the role of different components in the NF-DPF.
	
	\subsection{Robot Localization in Maze Environments}
	\label{subsec:exp_robot}
	\subsubsection{Experiment setup}
	In this experiment, we evaluate the performance of NF-DPFs in three environments, namely Maze 1, Maze 2, and Maze 3, simulated in the DeepMind Lab~\cite{beattie2016deepmind} following the setup in~\cite{jonschkowski18,corenflos2021differentiable}. In each of the three maze environments, there exists a simulated robot moving through the maze, and its locations $l_t=(l^{(1)}_t, l^{(2)}_t)$, orientations $\varrho_t$, velocity $\Delta l_t=(\Delta l^{(1)}_t, \Delta l^{(2)}_t, \Delta \varrho_t)$, and camera images $y_t$ are available for model training. The collected dataset is split into training, validation, and test sets containing 900, 100, and 100 robot trajectories, each with a length of 100 time steps, respectively. We set the learning rate to be 0.001, and use the Adam optimizer to train DPFs.
	
	Based on image observations given by robot cameras, the goal in this task is to infer the location and the orientation of the robot at each time step, i.e. the latent state $x_t:=(l^{(1)}_t, l^{(2)}_t, \varrho_t)$. We give an example of observation images in Fig.~\ref{fig:maze_obs}. Particles are uniformly initialized over the maze in the first step. The dynamic model we use in DPFs is as follows:
	\begin{align}
		\label{eq:robot_dynamic}
		x_{t+1}:&=\left[\begin{array}{c}
			l_{t+1}^{(1)} \\
			l_{t+1}^{(2)} \\
			\varrho_{t+1}
		\end{array}\right]
		\nonumber\\&=\left[\begin{array}{c}
			l_t^{(1)}+\Delta l_t^{(1)} \cos \left(\varrho_t\right)+\Delta l_t^{(2)} \sin \left(\varrho_t\right) \\
			l_t^{(2)}+\Delta l_t^{(1)} \sin \left(\varrho_t\right)-\Delta l_t^{(2)} \cos \left(\varrho_t\right) \\
			\varrho_t+\Delta \varrho_t
		\end{array}\right]+\varsigma_t\,,
	\end{align}
	where $\varsigma_t\sim\mathcal{N}(\textbf{0},\Sigma^2)$ is the dynamic noise, and $\Sigma:=\text{diag}(\sigma_l, \sigma_l, \sigma_\varrho)$ with $\sigma_l=10$ and $\sigma_\varrho=0.1$. 
    
    The loss function $\mathcal{L}(\theta,\phi):=\mathcal{L}_\text{RMSE}(\theta,\phi)+\mathcal{L}_\text{AE}(\theta)$ used in this experiment consists of a root mean square error loss $\mathcal{L}_\text{RMSE}(\theta,\phi)$ and an autoencoder loss $\mathcal{L}_\text{AE}(\theta)$ as in Section~\ref{subsec:disk}. 
    The evaluation metric we use to compare the performances of different DPFs is the RMSE error between estimated robot locations and true robot locations on validation and test sets.
	
	\subsubsection{Experimental results}	
	\begin{figure}[htbp]
		\centering
		\begin{subfigure}{0.475\linewidth}
			\centering
			\includegraphics[width=0.925\linewidth, height=0.8\linewidth]{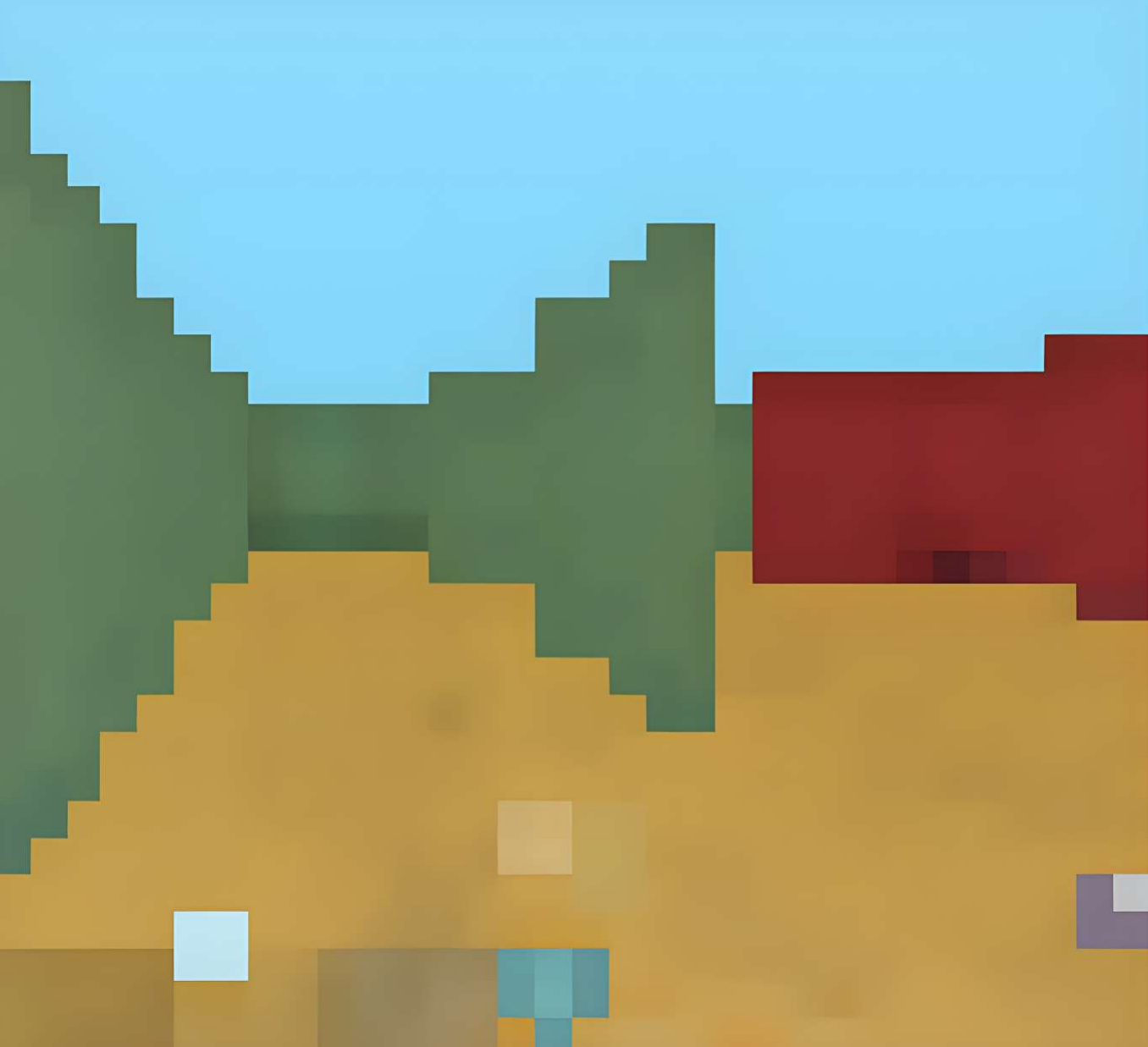}
			\caption{}
			\label{fig:maze_obs}
		\end{subfigure}
		\begin{subfigure}{0.475\linewidth}
			\centering
			\includegraphics[width=\linewidth]{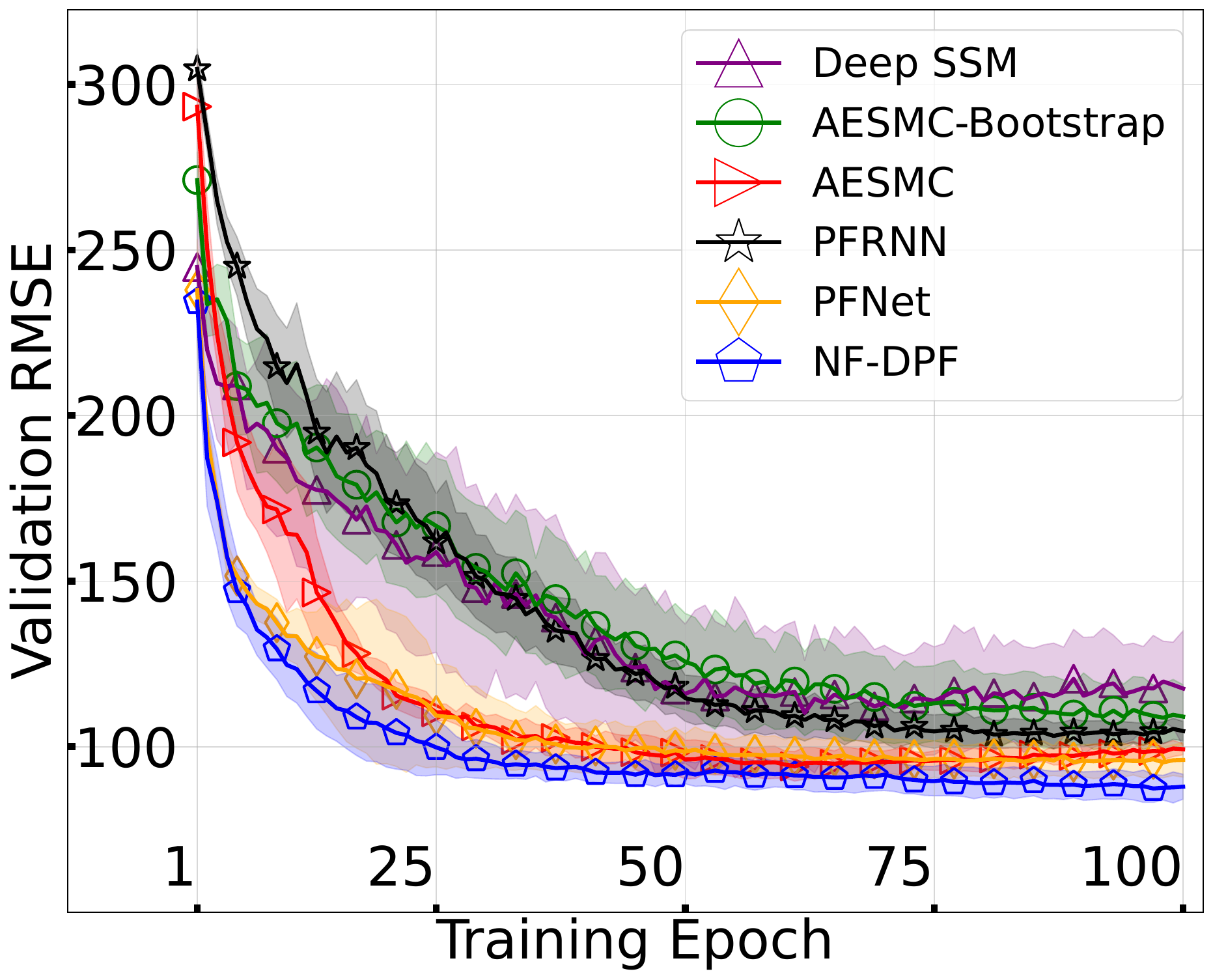}
			\caption{}
			\label{fig:maze_results_1}
		\end{subfigure}
		\begin{subfigure}{0.475\linewidth}
			\centering
			\includegraphics[width=\linewidth]{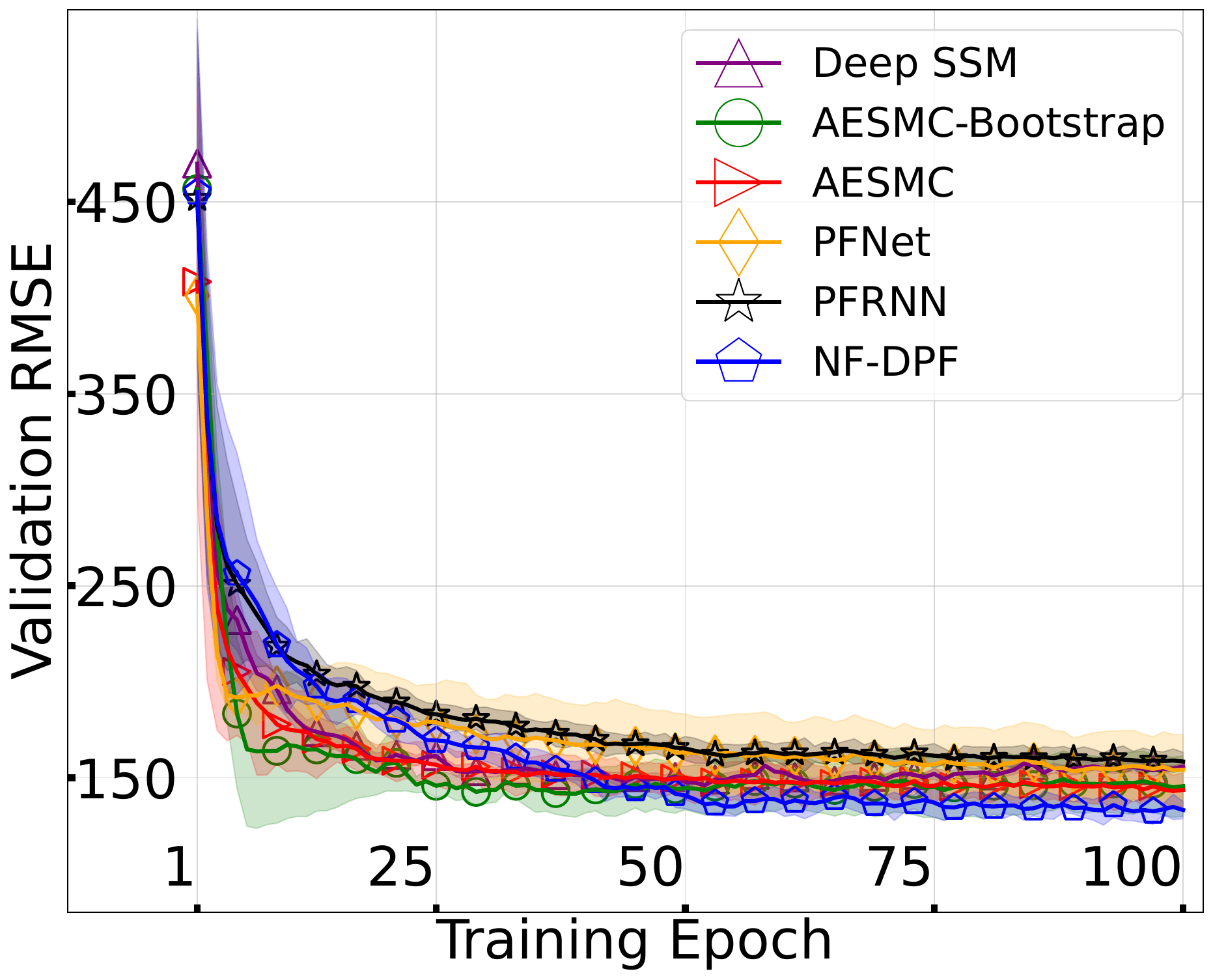}
			\caption{}
			\label{fig:maze_results_2}
		\end{subfigure}
		\begin{subfigure}{0.475\linewidth}
			\centering
			\includegraphics[width=\linewidth]{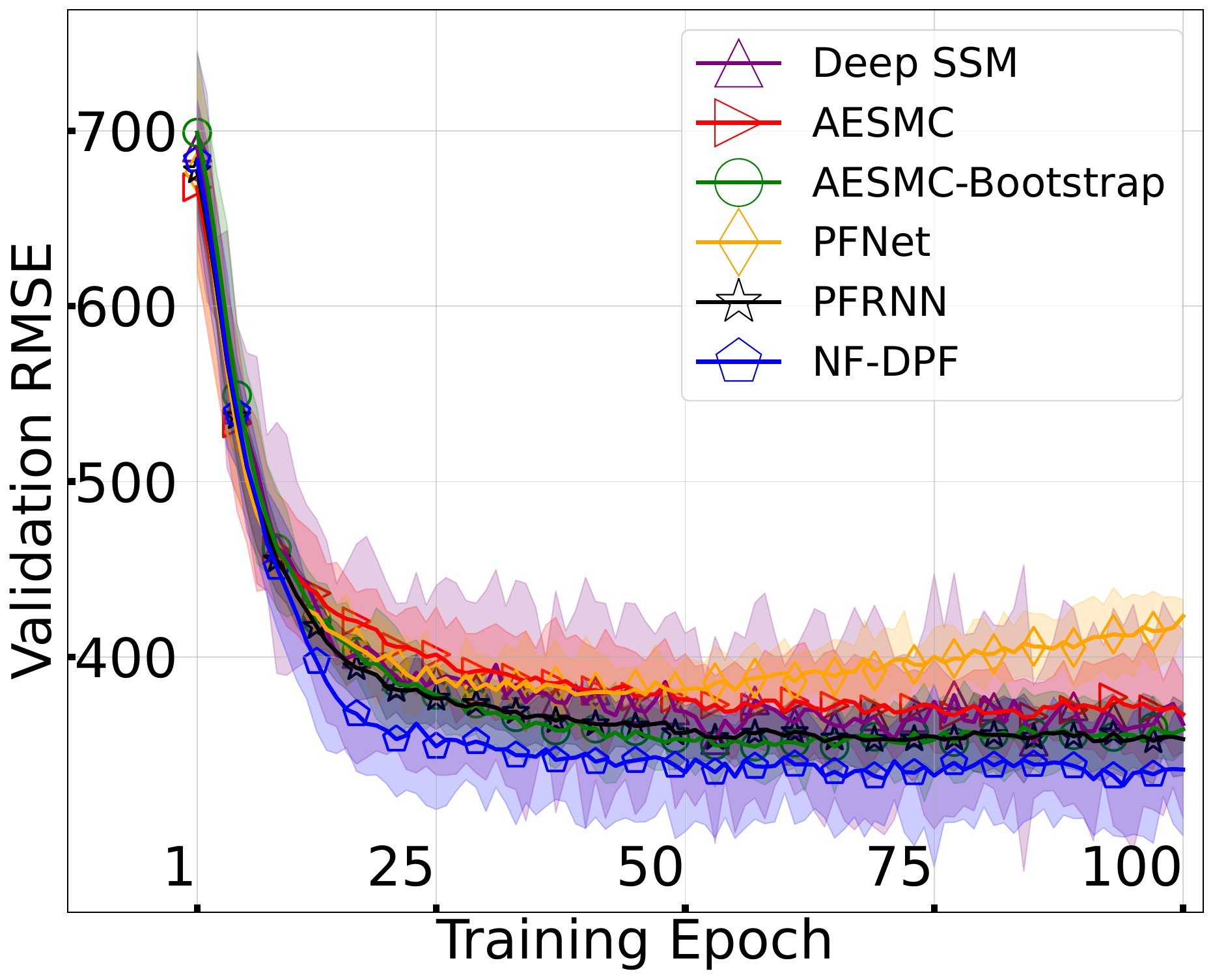}
			\caption{}
			\label{fig:maze_results_3}
		\end{subfigure}
		\caption{(A) An example of observation images in maze environments. RMSE of different DPFs on the validation set during training in (B) Maze 1. (C) Maze 2. (D) Maze 3. The shaded area represents the standard deviation of the presented evaluation metrics among 5 random simulations.}
		\label{fig:maze_results}
	\end{figure}
	
	\begin{figure}[htbp]
		\centering
		\begin{subfigure}{0.475\linewidth}
			\centering
			\includegraphics[width=\linewidth]{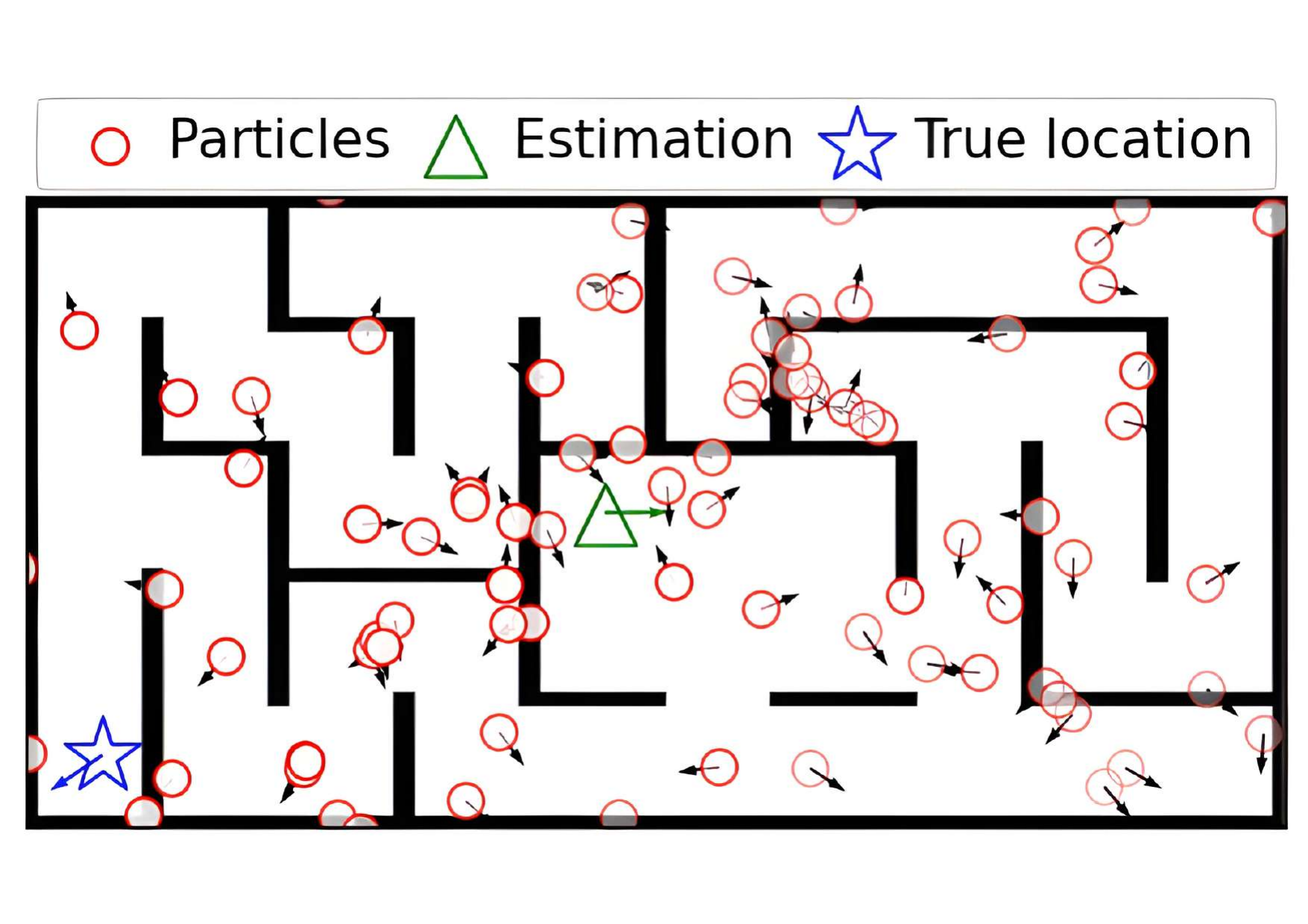}
			\caption{Time step 0}
			\label{fig:maze_vis_1}
		\end{subfigure}
		\begin{subfigure}{0.475\linewidth}
			\centering
			\includegraphics[width=\linewidth]{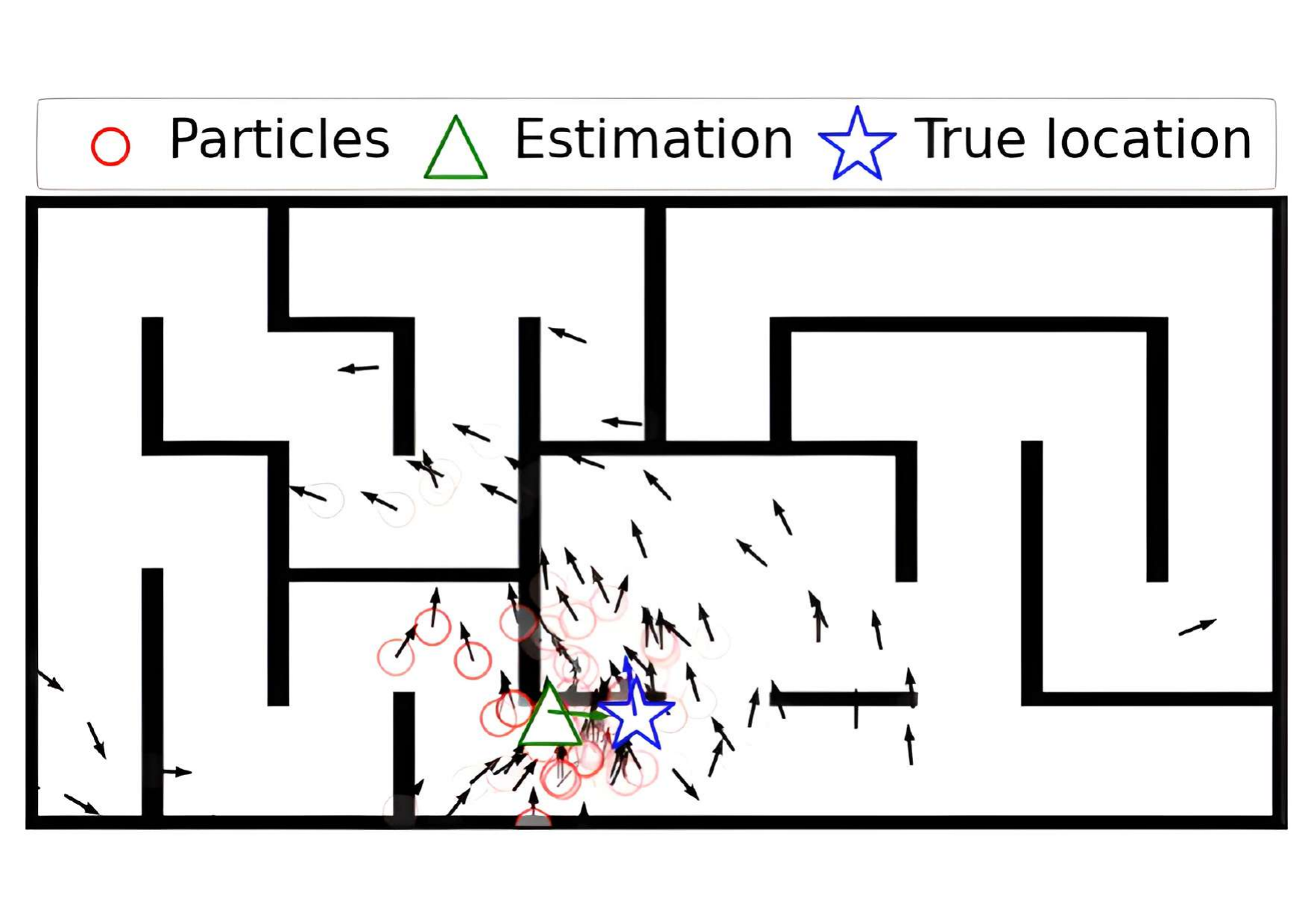}
			\caption{Time step 25}
			\label{fig:maze_vis_25}
		\end{subfigure}
		\begin{subfigure}{0.475\linewidth}
			\centering
			\includegraphics[width=\linewidth]{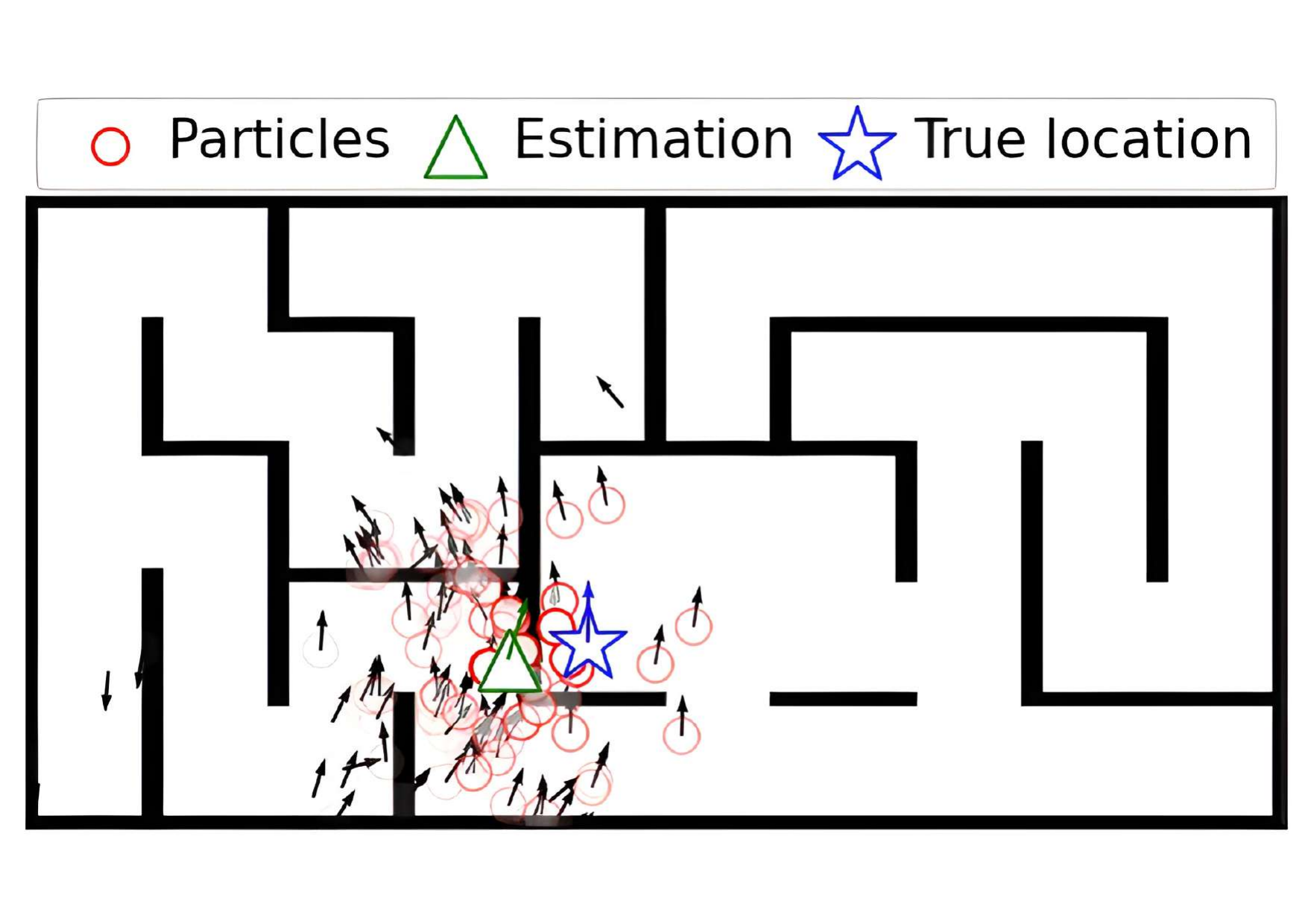}
			\caption{Time step 50}
			\label{fig:maze_vis_50}
		\end{subfigure}
		\begin{subfigure}{0.475\linewidth}
			\centering
			\includegraphics[width=\linewidth]{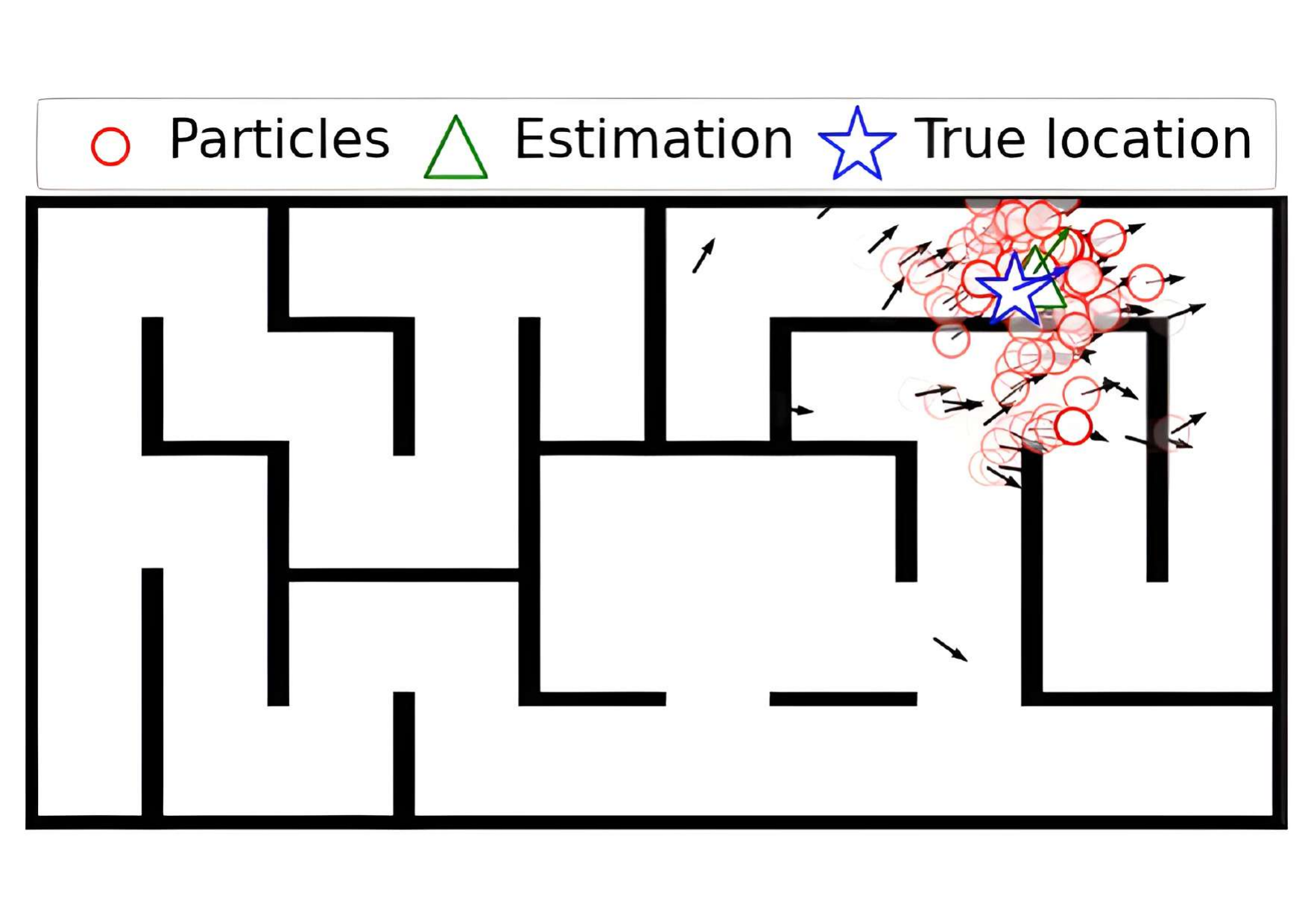}
			\caption{Time step 100}
			\label{fig:maze_vis_100}
		\end{subfigure}
		\caption{A visualization of the localization results of the NF-DPF at different time steps. Arrows represent the orientation $\varrho_t$ of particles and robots. The deeper the color of a particle, the higher its importance weight.}
		\label{fig:maze_vis}
	\end{figure}
	We first show validation RMSEs of tested methods in Figs.~\ref{fig:maze_results_1},~\ref{fig:maze_results_2}, and~\ref{fig:maze_results_3}. Since the size of environments varies from maze to maze (Maze 1: 500$\times$1000, Maze 2: 900$\times$1500, Maze 3: 1300$\times$2000), the reported RMSEs increase as the maze gets larger. As expected, the proposed NF-DPF outperformed the other differentiable particle filtering frameworks regarding validation RMSEs in all three maze environments when the training has finished. 
    In Table~\ref{tab:maze}, we report the RMSE of different methods at the last time step $t=100$ on the test set. Experimental results shown in Table~\ref{tab:maze} illustrate the superior performance of the proposed NF-DPF compared with the baseline methods. Specifically, the NF-DPF has the lowest RMSEs at the last time step in all three maze environments, implying that the NF-DPF can better localize the object for longer sequences.

	\begin{scriptsize}
 \setlength{\tabcolsep}{3pt}
		\begin{table}[htbp]
			\caption{Test RMSE of different DPFs in maze environments. The reported RMSE is computed at the last step $t=100$ for 100 trajectories in the test set. Standard deviations are computed with 5 simulation runs with different random seeds.}
			\label{tab:maze}
			\centering
			\begin{tabular}{?l?c?c?c?c?c?c?}
				\Xhline{3\arrayrulewidth}\rule{-2mm}{0mm}
				\multirow{1}{0.01\linewidth}{\rule{1.5mm}{0mm}\multirow{2}{0.01\linewidth}{Method}}&\multicolumn{2}{c?}{Maze 1}&\multicolumn{2}{c?}{Maze 2}&\multicolumn{2}{c?}{Maze 3}\\\Xcline{2-7}{3\arrayrulewidth}                    &RMSE&s/it&RMSE&s/it&RMSE&s/it\\ \Xcline{1-7}{3\arrayrulewidth}  
				\rule{-1.5mm}{0mm} Deep SSM& 61.0$\pm$10.8&0.571& 114.0$\pm$8.8&0.557& 203.0$\pm$10.1&0.580\\\Xcline{1-7}{3\arrayrulewidth}
				\rule{-1.5mm}{0mm} \makecell{AESMC\\ Bootstrap}&56.5$\pm$11.5&0.567&115.6$\pm$6.8&0.561&220.6$\pm$11.1&0.552\\\Xcline{1-7}{3\arrayrulewidth}
				\rule{-1mm}{0mm} AESMC &52.1$\pm$7.5&0.602&109.2$\pm$11.7&0.612&201.3$\pm$14.7&0.607\\\Xcline{1-7}{3\arrayrulewidth}
				\rule{-1mm}{0mm} PFNet &51.4$\pm$8.7&0.581&120.3$\pm$8.7&0.597&212.1$\pm$15.3&0.601\\\Xcline{1-7}{3\arrayrulewidth}
				\rule{-1mm}{0mm} PFRNN &54.1$\pm$8.9&0.644&125.1$\pm$8.2&0.631&210.5$\pm$10.8&0.650\\\Xcline{1-7}{3\arrayrulewidth}
				\rule{-1mm}{0mm} NF-DPF &\textbf{46.1$\pm$6.9}&0.667&\textbf{103.2$\pm$10.8}&0.661&\textbf{182.2$\pm$19.9}&0.672\\\Xcline{1-7}{3\arrayrulewidth}
			\end{tabular}
		\end{table}
	\end{scriptsize}
 
    We provide a visualization of the localization results in Fig.~\ref{fig:maze_vis}, where the particles, true robot locations, and estimated robot locations at different time steps are visualized. Fig.~\ref{fig:maze_vis_1} shows the localization result at the initialization step $t=0$. We can see that in Fig.~\ref{fig:maze_vis_1}, the estimation is located around the center of the maze and is far from the true location, which is expected because the particles are uniformly initialized at time step $t=0$. In Fig.~\ref{fig:maze_vis_25}, Fig.~\ref{fig:maze_vis_50}, and Fig.~\ref{fig:maze_vis_100}, it can be found that in later time steps, the NF-DPF can produce estimated locations that are close to the ground-truth locations, with particles centered at the ground-truth locations. In addition, we also observe that the learned measurement model can capture the relationship between observation images $y_t$ and robot locations $x_t$. In Fig.~\ref{fig:maze_vis}, especially Fig.~\ref{fig:maze_vis_100}, it is obvious that particles that are close to the true robot location are assigned greater importance weights, and vice versa.

\subsection{Robot tracking in Michigan NCLT dataset}
\label{subsec:exp_nclt}
\subsubsection{Experiment setup}
In this experiment, we evaluate the performance of baseline methods and the proposed NF-DPF in a real-world dataset, the Michigan North Campus Long-Term (NCLT) dataset~\cite{carlevaris2016university}. The NCLT dataset consists of robot trajectories collected at 26 sessions. The trajectories are labeled, meaning that the ground-truth location of the robot is accessible. The latent state is the robot location and its velocity  $x_{t}=[l_{t}^{(1)}, l_{t}^{(2)}, \Delta l_{t}^{(1)}, \Delta l_t^{(2)}]^\intercal$. Observations in this dataset are noisy sensor readings such as GPS and odometer. In this experiment, we only use the odometer reading as the observation, following the settings in~\cite{revach2022kalmannet}. The goal in this experiment is to track the location of the moving robot, and we initialize the latent state $x_{t}$ with the ground-truth latent state.
Following the setup in~\cite{revach2022kalmannet}, we use the trajectory collected on 2012-01-22 and sample the trajectory with a frequency of 1 Hz, resulting in 5,850 time steps. We split the 5,850 time steps into training, validation, and test datasets. The training, validation, and test datasets respectively comprise 20 trajectories of length 234, 3 trajectories of length 195, and 3 trajectories of length 195. We use the RMSE between the ground-truth location of the robot and the estimated location of the robot as both the training objective and the evaluation metric. Since observations in this experiment are 2-dimensional vectors and there is no need to encode the observations into feature vectors, the autoencoder reconstruction loss is not included in the loss function for this experiment.

From the experimental results shown in Table~\ref{tab:nclt}, we observed that the PFNet achieved the lowest RMSE, and our method produced slightly higher RMSE and lower standard deviation than the PFNet in this experiment. We speculate that the reason why proposed NF-DPF did not produce the best tracking performance is that the latent dynamics of the robot are relatively simple and the observation in this experiment is uninformative, so that using a very expressive model like NF-DPF only brings very marginal benefit and may result in overfitting.

\setlength{\extrarowheight}{1pt}
\setlength\tabcolsep{2pt}
\begin{scriptsize}
    \begin{table}[htbp]
        \caption{Robot tracking RMSE of different differentiable particle filters in the NCLT dataset. The reported RMSE is averaged over 195 time steps for 3 trajectories in the test set, and the standard deviation is computed with 5 simulation runs with different random seeds.}
        \label{tab:nclt}
        \centering
        \begin{tabular}{?l?c?c?c?c?c?c?}
            \Xhline{3\arrayrulewidth}
            Method&Deep SSM&\makecell{AESMC\\Bootstrap}&\makecell{AESMC}&\makecell{PFRNN}&\makecell{PFNet}&\makecell{NF-DPF} \\\Xcline{1-7}{3\arrayrulewidth}
            RMSE&56.3$\pm$4.5&60.3$\pm$3.8&57.5$\pm$5.2 &59.2$\pm$5.1&\textbf{53.5$\pm$5.7}&54.8$\pm$4.8\\\Xcline{1-7}{3\arrayrulewidth}
        \end{tabular}
    \end{table}
\end{scriptsize}
	
	\section{Conclusion}
	\label{sec:conclusion}
	This paper introduced a novel variant of differentiable particle filters (DPFs), the normalizing flow-based differentiable particle filter (NF-DPF), which is built based on normalizing flows and conditional normalizing flows. The proposed NF-DPF first provides a general mechanism to construct data-adaptive dynamic models, proposal distributions, and measurement models, three of the core components of particle filters. The theoretical analysis shows the consistency of the proposed NF-DPF and derives an upper bound for its estimation error. We empirically showed the superior performance of the NF-DPF over the other DPF methods on a wide range of simulated tasks, including parameter learning and posterior approximation in linear Gaussian state-space models, image-based disk localization, and robot localization in maze environments. Experimental results show that the NF-DPF can achieve the lowest tracking and localization errors in all considered experiments, indicating that (conditional) normalizing flows can indeed improve the performance of DPFs in various settings.
 Interesting future research directions include the development of differentiable resampling techniques with better statistical properties. 
	
	\bibliographystyle{IEEEtran}
	\bibliography{ref.bib}

	\newpage
	
	
    \begin{appendices}

    \section{Optimal Transport Background}
	\label{appendix:ot}
	In this section, we provide a brief review of concepts related to the proposed work, including the definition of Wasserstein distances, the optimal transport plan (coupling), and the optimal transport map. 
	Note that notations introduced in Section~\ref{sec:theoretical_results} also apply in this section.
	
	\noindent\textbf{Wasserstein Distance:} 
	
	Let $P_p(\mathcal{X})$ be a set of Borel probability measures with a finite $p$-th moment on a Polish metric space $(\mathcal{X}, d)$~\cite{villani2008optimal}. Given two probability measures $\alpha \in P_p(\mathcal{X})$, $\beta \in P_p(\mathcal{X}')$, the Wasserstein distance of order $p\in[1, +\infty)$ between $\alpha$ and $\beta$ is defined as: 
	\begin{gather}
		\label{equation_WD}
		\wass_p(\alpha, \beta)=\bigg(\underset{\mathcal{P}\in\mathcal{U}(\alpha,\beta)}{\inf} \int_{\mathcal{X}\times\mathcal{X}'} d(x,x')^p \mathcal{P}(\rd x,\rd x')\bigg)^{\frac{1}{p}}\,,
	\end{gather}
	where $d(\cdot,\cdot)^p$ is the cost function, $\mathcal{U}(\alpha,\beta)$ represents the set of all transportation plans $\mathcal{P}(\rd x,\rd x')$, i.e. joint distributions whose marginals are $\alpha$ and $\beta$, respectively. Formally, the set of all transportation plans between $\alpha$ and $\beta$ is defined as:
    \begin{gather}
        \mathcal{U}(\alpha,\beta):=\Big\{\mathcal{P}\in P(\mathcal{X}\times\mathcal{X}'):\mathcal{X}_{\#}\mathcal{P}=\alpha,\, \mathcal{X}'_{\#}\mathcal{P}=\beta\Big\}\,,
    \end{gather}
    where $\mathcal{X}_{\#}\mathcal{P}$ and $\mathcal{X}'_{\#}\mathcal{P}$ respectively represent the projection of a joint measure $\mathcal{P}\in P(\mathcal{X}\times\mathcal{X}')$ on $\mathcal{X}$ and $\mathcal{X}'$. Every transport plan $\mathcal{P}(\rd x,\rd x')$ corresponds to a transport map, also called the barycentric projection map, which is defined as:
    \begin{gather}
        \mathbf{T}(x)=\int x' \mathcal{P}(\rd x'|x)\,,\\
        \mathcal{P}(\rd x'|x)=\frac{\mathcal{P}(\mathrm{d}x, \mathrm{d}x')}{\int_{\mathcal{X}'}\mathcal{P}(\mathrm{d}x, \mathrm{d}x')}\,.
    \end{gather}
	
	\noindent\textbf{Optimal Transport Notations:}
	
	Solving the original optimal transport problem is computationally expensive and non-differentiable, an alternative is to rely on entropy-regularized optimal transport~\cite{peyre2019computational,cuturi2013sinkhorn}. In the DPF setting, $\epsilon$ denotes the regularization coefficient in the entropy-regularized optimal transport problem, $\mathcal{P}_{N,\epsilon}^{\mathrm{OT}}$ denotes the regularized transport plan between $\alpha^{(t)}_{N}$ and $\beta^{(t)}_{N}$, $\mathbf{T}(\cdot):\mathcal{X}\rightarrow\mathcal{X}$ denotes the optimal transport map between $\alpha^{(t)}$ and $\beta^{(t)}$, and $\mathbf{T}_{N,\epsilon}(\cdot):\mathcal{X}\rightarrow\mathcal{X}$, $\mathbf{T}_{N,\epsilon}(x)\coloneqq\int {x'} \mathcal{P}^{{\mathrm{OT}}}_{N,\epsilon}(\mathrm{d}{x}'|x)$ the transport map induced by the transport plan $\mathcal{P}_{N,\epsilon}^{\mathrm{OT}}$.

    \section{Proof of Proposition~\ref{prop:consistency}}
    \label{appendix:proof}
    \phantom{123}
    \newline
    We use notations below for the following proofs:
	\begin{gather}
		\label{eq:notations}
		\alpha^{(t)}:=p(x_t|y_{0:t-1};\theta)\,,\; \beta^{(t)}:=p(x_t|y_{0:t};\theta)\,,\\
		\label{eq:notations_empirical}
		\alpha^{(t)}_{N}(\psi):=\frac{1}{N}\sum_{i=1}^{N}\psi(x_t^i)\,,\; \beta^{(t)}_{N}(\psi):=\sum_{i=1}^{N}\tilde{w}_t^i \psi(x_t^i)\,,\;\\
		\tilde{\beta}^{(t)}_{N}(\psi):=\frac{1}{N}\sum_{i=1}^{N} \psi(\tilde{x}_t^i)\,,\;
		\omega^{(t)}(x_t)=p(y_t|x_t;\theta)\,,
		\label{eq:proof_weight_function}
	\end{gather}
	with $\alpha^{(0)}:=\pi(x_0;\theta)$, $\psi(\cdot): \mathcal{X}\rightarrow \mathbb{R}$ is a function defined on $\mathcal{X}$, $\alpha^{(t)}_{N}$ is an approximation of the predictive distribution $\alpha^{t}$ with $N$ uniformly weighted particles, and $\beta^{(t)}_{N}$ is an approximation of the posterior distribution $\beta^{(t)}$ with $N$ particles weighted by $\tilde{w}_t^i$. $\tilde{\beta}^{(t)}_{N}$ is an approximation of the posterior distribution $\beta^{(t)}$ with uniformly weighted particles $\tilde{x}_t^i$ obtained by applying the entropy-regularized optimal transport resampler in~\cite{corenflos2021differentiable} as shown in the line 12 of Algorithm~\ref{alg:nf_dpfs}. For a measure $\alpha$ defined on $\mathcal{X}$ we use $\alpha(\psi)=\int_{\mathcal{X}}\psi(x)\alpha(\mathrm{d}x)$ to denote the expectation of $\psi(\cdot)$ w.r.t. $\alpha$.

	To prove Proposition~\ref{prop:consistency}, we introduce the following Assumptions~\ref{ass:compact}-\ref{ass:weight_lipcontraction}:
	\begin{manualtheorem}{V.1}\label{ass:compact_appendices}
		$\mathcal{X}$ is a compact subset of $\mathbb{R}^{d_\mathcal X}$ with diameter $\mathfrak{d}:=\underset{x,x'\in\mathcal{X}}{\sup}{||x-x'||_2}$, where $||\cdot||_2$ denotes the Euclidean distance.
	\end{manualtheorem}
        Assumption V.1 requires that the domain where the latent state $x_t$ is defined on is a compact subset of the $d_\mathcal X$-dimensional Euclidean space $\mathbb{R}^{d_\mathcal X}$. In particular, a subset of Euclidean space is called compact if it is closed and bounded. For a compact subset of $\mathbb{R}^{d_\mathcal X}$, we can always find a diameter $\mathfrak{d}$ such that for all $x$ and $x'$ in $\mathcal{X}$, we have $||x, x'||_2 \leq \mathfrak{d}$. A simple example of $\mathcal{X}$ that satisfies this assumption is a $\mathbb{R}^{d_\mathcal X}$-dimensional hypersphere: $\mathcal{X}:=\{x\in\mathbb{R}^{d_\mathcal X}: ||x||_2\leq\mathfrak{d}/2\}$ 
        
	\begin{manualtheorem}{V.2}\label{ass:transport_plan_appendices}
		For $\forall t \geq 0$, there exists a unique optimal transport plan between $\alpha^{(t)}$ and $\beta^{(t)}$ featured by a deterministic transport map $\textbf{T}_t(\cdot):\mathcal{X}\rightarrow\mathcal{X}$, and the transport map $\textbf{T}_t(\cdot)$ is $\lambda$-Lipschitz for $\forall t \geq 0$ with $\lambda>0$. 
	\end{manualtheorem}
It was proved in~\cite{brenier1991polar} that on Euclidean space with Euclidean distance as the transportation cost function, there
is always a unique optimal transportation map if $\mu$ and $\rho$ are absolutely continuous with respect to the Lebesgue measure. Therefore, the first part of Assumption V.2 is satisfied as long as the predictive distribution $\alpha^{(t)}$ and the posterior distribution $\beta^{(t)}$ are continuous. A transport map $\textbf{T}_t(\cdot)$ is said to be
$\lambda$-Lipschitz if $||\textbf{T}_t(\cdot)(x), \textbf{T}_t(\cdot)(x')||_2 \leq \lambda||x, x'||_2$ for all $x$, $x'$ in $\mathcal{X}$, so the second part of Assumption V.2 requires that the transport map $\textbf{T}_t(\cdot)$ does not change discretely and the gradient of $\textbf{T}_t(\cdot)$ is bounded. A simple example that satisfies Assumption V.2 is the identity transport map $\textbf{T}_t(x)=x$.

	\begin{manualtheorem}{V.3}\label{ass:transition_lipcontraction_appendices}
		Denote by $f(\cdot)$ the transition kernel $p( {x}_t| {x}_{t-1};\theta)$ of NF-DPFs defined in Eq.~\eqref{eq:dynamic_density} and $\psi(\cdot):\mathcal{X}\rightarrow\mathbb{R}$ the considered bounded $k$-Lipschitz function, there exists an $\eta\in \mathbb{R}$ such that for any two probability measures $\mu, \rho$ on $\mathcal{X}$ 
		$$|\mu f(\psi) - \rho f(\psi)| \leq \eta|\mu(\psi) - \rho(\psi)|\,, s.t.\; \mu(\psi) \neq \rho(\psi)\,.$$
	\end{manualtheorem}
	Intuitively, this assumption requires the transition kernel $f(\cdot)$ not to change the measures $\mu$ and $\rho$ too much so that the absolute difference between the expectation of $\psi(\cdot)$ w.r.t. $\mu f$ and $\rho f$ is smaller than $\eta$ times the absolute difference between the expectation of $\psi(\cdot)$ w.r.t. $\mu$ and $\rho$. 

	\begin{manualtheorem}{V.4}\label{ass:weight_lipcontraction_appendices}
		There exists a constant $\zeta\in \mathbb{R}$ such that for any continuous probability measure $\mu$ on $\mathcal{X}$ and its empirical approximation $\mu_N$, for weighted probability measures $\mu_{\omega_t}={\omega_t\mu}/{\mu(\omega_t)}$ and $\mu_{N,\omega_t}={\omega\mu_N}/{\mu_N(\omega_t)}$, we have
		$$\wass_2(\mu_{N,\omega_t}, \mu_{\omega_t}) \leq \zeta\wass_2(\mu_N, \mu)\,,$$
		where $\omega_t(\cdot):\mathcal{X}\rightarrow\mathbb{R}$ is defined in Eq.~\eqref{eq:weight_function}, and $\wass_2(\cdot, \cdot)$ refers to the $2$-Wasserstein distance~\cite{villani2008optimal,peyre2019computational}.
	\end{manualtheorem}
    A sufficient condition for Assumption V.4 to hold is that for a finite $N$, there is a lower bound $\mathcal{B}(N)>0$ on $\wass_2(\mu_N, \mu)$: $\wass_2(\mu_N, \mu)\geq\mathcal{B}(N)$. Specifically, with the lower bound $\mathcal{B}(N)$, we can choose a constant $\zeta\geq\frac{N\mathfrak{d}}{\mathcal{B}(N)}$ such that $\wass_2(\mu_{N,\omega_t}, \mu_{\omega_t}) \leq \zeta\wass_2(\mu_N, \mu)$ holds for any function $\omega_t(\cdot)$:
    \begin{align}
        \wass_2(\mu_{N,\omega_t}, \mu_{\omega_t})&\leq N\mathfrak{d}
        \leq \frac{N\mathfrak{d}}{\mathcal{B}(N)}\mathcal{B}(N)
        \leq \zeta \wass_2(\mu_N, \mu)\,.\nonumber
    \end{align}

    We first present five Lemmas~\ref{lemma:wass_exp},~\ref{lemma:transport_map_distance},~\ref{lem:linochetto},~\ref{lemma:predictive_bound},~\ref{lemma:posterior_weight_approximation} and Proposition~\ref{prop:my_CVDETDetailed}. Lemma~\ref{lem:linochetto} is borrowed from ~\cite{corenflos2021differentiable} (Lemma C.2). The proof of Proposition~\ref{prop:consistency} is based on proof by induction, which is inspired by the proof of Proposition 11.3 of~\cite{chopin2020introduction}.
	
	\begin{lemma}
		\label{lemma:wass_exp}
		For all bounded $k$-Lipschitz function $\psi(\cdot):\mathcal{X}\rightarrow\mathbb{R}$ and any two probability measures $\mu, \rho$ on $\mathcal{X}$, we have:
		\begin{align}
			|\mu(\psi)-\rho(\psi)|\leq k \wass_1(\mu, \rho)\,.
		\end{align}
	\end{lemma}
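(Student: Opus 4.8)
The plan is to derive the inequality directly from the primal (Kantorovich) formulation of the Wasserstein-$1$ distance recorded in Eq.~\eqref{equation_WD}, specialized to $p=1$ with the Euclidean cost $d(x,x')=||x-x'||_2$. First I would fix an arbitrary transport plan $\mathcal{P}\in\mathcal{U}(\mu,\rho)$, i.e.\ a joint probability measure on $\mathcal{X}\times\mathcal{X}$ whose two marginals are $\mu$ and $\rho$. Because $\mathcal{X}_{\#}\mathcal{P}=\mu$ and $\mathcal{X}'_{\#}\mathcal{P}=\rho$, both $\mu(\psi)$ and $\rho(\psi)$ can be written as integrals against $\mathcal{P}$, which lets me combine them into a single integral over the product space:
\begin{align*}
\mu(\psi)-\rho(\psi)=\int_{\mathcal{X}\times\mathcal{X}}\Big(\psi(x)-\psi(x')\Big)\,\mathcal{P}(\rd x,\rd x')\,.
\end{align*}

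Next I would bound the absolute value of this integral by the integral of the absolute value, and then invoke the $k$-Lipschitz property of $\psi$ pointwise, $|\psi(x)-\psi(x')|\leq k||x-x'||_2$, to obtain
\begin{align*}
|\mu(\psi)-\rho(\psi)|\leq\int_{\mathcal{X}\times\mathcal{X}}\big|\psi(x)-\psi(x')\big|\,\mathcal{P}(\rd x,\rd x')\leq k\int_{\mathcal{X}\times\mathcal{X}}||x-x'||_2\,\mathcal{P}(\rd x,\rd x')\,.
\end{align*}
The right-hand side is precisely $k$ times the transportation cost incurred by the plan $\mathcal{P}$ under the Euclidean cost. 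Since the left-hand side does not depend on $\mathcal{P}$ while the inequality holds for every admissible coupling, I would take the infimum over all $\mathcal{P}\in\mathcal{U}(\mu,\rho)$ on the right; by Eq.~\eqref{equation_WD} this infimum is exactly $\wass_1(\mu,\rho)$, giving $|\mu(\psi)-\rho(\psi)|\leq k\,\wass_1(\mu,\rho)$ as claimed.

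This argument is one direction of the Kantorovich--Rubinstein duality and presents no real obstacle; the only points worth verifying are routine. Integrability of $\psi(x)-\psi(x')$ against $\mathcal{P}$ follows from the boundedness of $\psi$, and the cost integral is finite because $||x-x'||_2\leq\mathfrak{d}$ on the compact set $\mathcal{X}$ by Assumption~V.1, so the infimum defining $\wass_1(\mu,\rho)$ is well-defined and finite over the nonempty set $\mathcal{U}(\mu,\rho)$, which always contains the product coupling $\mu\otimes\rho$.
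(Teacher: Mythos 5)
Your proposal is correct and follows essentially the same argument as the paper's proof: write both expectations against a coupling, apply the pointwise Lipschitz bound, and identify the resulting cost integral with $\wass_1(\mu,\rho)$. The only (cosmetic) difference is that you work with an arbitrary plan and take the infimum at the end, whereas the paper integrates directly against the optimal plan $\mathcal{P}^*$; your variant sidesteps the question of whether the infimum is attained, but otherwise the two proofs coincide.
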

	\begin{proof}
		Denote by $$\mathcal{P}^*(\rd x, \rd x'):=\underset{\mathcal{P}\in\mathcal{U}(\mu, \rho)}{\operatorname{argmin}}{\int_{\mathcal{X}^2}||x-x'||_2\; \mathcal{P}(\rd x, \rd x')}$$ the optimal transport plan between $\mu$ and $\rho$ w.r.t. the Euclidean distance, we have:
		\begin{align}
			&|\mu(\psi)-\rho(\psi)|\\
                =&\left|\int_{\mathcal{X}} \psi(x)\mu(\rd x) - \int_{\mathcal{X}} \psi(x')\rho(\rd x')\right|\nonumber\\
			=&\left|\int_{\mathcal{X}^2} \psi(x)\mathcal{P}^*(\rd x, \rd x') - \int_{\mathcal{X}^2} \psi(x')\mathcal{P}^*(\rd x, \rd x')\right|\nonumber\\
			\leq&\int_{\mathcal{X}^2} \left|\psi(x)-\psi(x')\right|\mathcal{P}^*(\rd x, \rd x')\nonumber\\
			\leq&\int_{\mathcal{X}^2} k||x-x'||_2\mathcal{P}^*(\rd x, \rd x')\nonumber\\
			=& k\wass_1(\mu, \rho)\,.
		\end{align}
	\end{proof}
	
	\begin{lemma}
		\label{lemma:transport_map_distance}
		For probability measures $\mu$ and $\rho$ defined on $\mathcal{X}$, denote by $\mathbf{T}(\cdot):\mathcal{X}\rightarrow\mathcal{X}$ the optimal transport map between them. Let $\mu_N=\sum_{i=1}^{N}a_i\delta_{x_i'}$ and $\rho_N=\sum_{j=1}^{M}b_j\delta_{{x_j}}$ be approximations of $\mu$ and $\rho$, where $x_i'\in\mathcal{X}$ and ${x_j'} \in\mathcal{X}$ for $\forall i,j\in\{1, \cdots, N\}$. Denote by $\mathcal{P}_N(\rd x', \rd {x})\in\mathcal{U}(\mu_N,\rho_N)$ a transport plan between $\mu_N$ and $\rho_N$, and $\mathbf{T}_N(\cdot):\mathcal{X}\rightarrow\mathcal{X}$ the transport map induced by $\mathcal{P}_N(\rd x', \rd {x})$, namely $\mathbf{T}_N(x_i')=\frac{1}{a_i}\sum_{j=1}^{M}p_{i,j}{x_j}$ with $p_{i,j}=\mathcal{P}_{N,i,j}$ the element at the intersection of $\mathcal{P}_{N}$'s $i$-th row and $j$-th column. The following inequality holds: 
		\begin{align}
			&\int_{\mathcal{X}^2}||\mathbf{T}(x)-\mathbf{T}_N(x)||^2\mathcal{P}_{N}(\rd x', \rd {x})\nonumber\\
                \leq&\int_{\mathcal{X}^2}||\mathbf{T}(x')-x||^2 \mathcal{P}_{N}(\rd x', \rd {x})
		\end{align}
	\end{lemma}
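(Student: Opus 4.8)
The plan is to recognize the induced map $\mathbf{T}_N$ as the barycentric projection of the plan $\mathcal{P}_N$, i.e.\ as the conditional expectation of the target coordinate $x$ given the source coordinate $x'$, and then to invoke the $L^2$-optimality of conditional expectation. Concretely, conditioning $\mathcal{P}_N$ on $x'=x_i'$ yields the discrete law $\frac{1}{a_i}\sum_{j}p_{i,j}\delta_{x_j}$ — the $i$-th row of $\mathcal{P}_N$ carries total mass $a_i$ because $\mu_N$ is its first marginal — whose mean is exactly $\mathbf{T}_N(x_i')=\frac{1}{a_i}\sum_j p_{i,j}x_j$. The inequality to be proved is then precisely the statement that, among all functions of $x'$, the conditional mean $\mathbf{T}_N$ minimizes the mean-squared distance to $x$; applying this with the particular function $\mathbf{T}$ gives the claim. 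This is a Pythagorean/orthogonal-projection identity, so I expect no essential difficulty beyond careful bookkeeping of the row-sum constraint.

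First I would rewrite the right-hand side as the finite sum $\sum_{i,j}p_{i,j}\|\mathbf{T}(x_i')-x_j\|^2$ and work one row at a time. In each row I would insert the decomposition $\mathbf{T}(x_i')-x_j=\big(\mathbf{T}(x_i')-\mathbf{T}_N(x_i')\big)+\big(\mathbf{T}_N(x_i')-x_j\big)$ and expand the square. The first bracket is constant in $j$, so the cross term factors as $2\big\langle \mathbf{T}(x_i')-\mathbf{T}_N(x_i'),\,\sum_j p_{i,j}\big(\mathbf{T}_N(x_i')-x_j\big)\big\rangle$, and the inner sum vanishes identically by the very definition of $\mathbf{T}_N$: using $\sum_j p_{i,j}=a_i$ and $\sum_j p_{i,j}x_j=a_i\mathbf{T}_N(x_i')$ one gets $\sum_j p_{i,j}\big(\mathbf{T}_N(x_i')-x_j\big)=a_i\mathbf{T}_N(x_i')-a_i\mathbf{T}_N(x_i')=0$. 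This vanishing cross term is the crux of the whole argument.

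After the cross term drops, each row collapses to $\sum_j p_{i,j}\|\mathbf{T}(x_i')-x_j\|^2 = a_i\|\mathbf{T}(x_i')-\mathbf{T}_N(x_i')\|^2+\sum_j p_{i,j}\|\mathbf{T}_N(x_i')-x_j\|^2$, and summing over $i$ while discarding the manifestly nonnegative last term yields the desired bound. To finish I would note that the left-hand integrand depends on the source variable $x'$ only, so integrating out $x$ against $\mathcal{P}_N(\rd x',\rd x)$ replaces the plan by its first marginal $\mu_N$ and recovers $\sum_i a_i\|\mathbf{T}(x_i')-\mathbf{T}_N(x_i')\|^2$; this is the sense in which the stated left-hand side is to be read, with the argument of both $\mathbf{T}$ and $\mathbf{T}_N$ being the source point. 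The only point that needs care is consistently tracking the marginal constraint $\sum_j p_{i,j}=a_i$, which is exactly what underpins both the conditional-expectation interpretation and the annihilation of the cross term.
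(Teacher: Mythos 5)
Your proof is correct and follows essentially the same route as the paper's: both arguments are the Pythagorean (bias--variance) decomposition exploiting that $\mathbf{T}_N(x_i')$ is the conditional mean of $x$ given $x'=x_i'$ under $\mathcal{P}_N$, so the cross term vanishes row by row and the remaining $\sum_j p_{i,j}\|\mathbf{T}_N(x_i')-x_j\|^2$ term is discarded as nonnegative. Your version merely organizes the algebra more cleanly (decompose then kill the cross term, rather than expanding all inner products and regrouping), and your closing remark correctly resolves the notational slip in the lemma statement where the left-hand integrand should be evaluated at the source point $x'$.
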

	\begin{proof}
		Firstly, denote by $\langle\cdot, \cdot\rangle$ the inner product operation, we have that:
		\begin{align}
			\sum_{i=1}^{N}\sum_{j=1}^{M}p_{i,j}\Big\langle\mathbf{T}(x_i'), {x_j}\Big\rangle&=\sum_{i=1}^{N}a_i\Big\langle\mathbf{T}(x_i'), \mathbf{T}_N(x_i')\Big\rangle\nonumber\\
			&=\sum_{i=1}^{N}\sum_{j=1}^{M}p_{i,j}\Big\langle\mathbf{T}(x_i'), \mathbf{T}_N(x_i')\Big\rangle\,,\\
			\sum_{i=1}^{N}\sum_{j=1}^{M}p_{i,j}\Big\langle\mathbf{T}_N(x_i'), {x_j}\Big\rangle&=\sum_{i=1}^{N}a_i\Big\langle\mathbf{T}_N(x_i'), \mathbf{T}_N(x_i')\Big\rangle\nonumber\\
			&=\sum_{i=1}^{N}\sum_{j=1}^{M}p_{i,j}\Big\langle\mathbf{T}_N(x_i'), \mathbf{T}_N(x_i')\Big\rangle\,.
		\end{align}
		The above equation leads to:
		\begin{align}
			&\sum_{i=1}^{N}\sum_{j=1}^{M}p_{i,j}||\mathbf{T}(x_i')-{x_j}||^2\\=&\sum_{i=1}^{N}\sum_{j=1}^{M}p_{i,j}\Big\langle\big(\mathbf{T}(x_i')-{x_j}\big), \big(\mathbf{T}(x_i')-{x_j}\big)\Big\rangle\\
			=&\sum_{i=1}^{N}\sum_{j=1}^{M}p_{i,j}\bigg(\Big\langle\mathbf{T}(x_i'), \mathbf{T}(x_i')\Big\rangle+\Big\langle\mathbf{T}_N(x_i'), \mathbf{T}_N(x_i')\Big\rangle\\
            &-2\Big\langle\mathbf{T}(x_i'), \mathbf{T}_N(x_i')\Big\rangle\nonumber+\Big\langle\mathbf{T}_N(x_i'), \mathbf{T}_N(x_i')\Big\rangle\\
            &+\Big\langle {x_j}, {x_j}\Big\rangle-2\Big\langle\mathbf{T}_N(x_i'), {x_j}\Big\rangle\bigg)\\
		=&\sum_{i=1}^{N}\sum_{j=1}^{M}p_{i,j}\Big(||\mathbf{T}(x_i')-\mathbf{T}_N(x_i')||^2+||\mathbf{T}_N(x_i')-{x_j}||^2\Big)\\
			\geq&\sum_{i=1}^{N}\sum_{j=1}^{M}p_{i,j}\Big(||\mathbf{T}(x_i')-\mathbf{T}_N(x_i')||^2\Big)\,.
		\end{align}
		Therefore the stated result is obtained:
		\begin{align}
			&\int_{\mathcal{X}^2}||\mathbf{T}(x')-\mathbf{T}_N(x')||^2\mathcal{P}_{N}(\rd x', \rd {x})\\	=&\sum_{i=1}^{N}\sum_{j=1}^{M}p_{i,j}||\mathbf{T}(x_i')-\mathbf{T}_N(x_i')||^2\\
			\leq&\sum_{i=1}^{N}\sum_{j=1}^{M}p_{i,j}||\mathbf{T}(x_i')-{x_j}||^2\\
			=&\int_{\mathcal{X}^2}||\mathbf{T}(x')-x||^2 \mathcal{P}_{N}(\rd x', \rd {x})\,.
		\end{align}
	\end{proof}

        \begin{lemma}{\textup{(Lemma C.2 in~\cite{corenflos2021differentiable})}}
        \label{lem:linochetto}
            Let $\mathcal{X}\subset \mathbb{R}^d$ be compact with diameter $\mathfrak{d}>0$. 
            Suppose we are given two probability measures $\alpha, \beta$ on $\mathcal{X}$ with a unique deterministic, $\lambda$-Lipschitz optimal transport map $\mathbf{T}$ while $\alpha_N=\sum_{i=1}^N a_i \delta_{x'^i}$ with $a_i>0$ and $\beta_N=\sum_{i=1}^N b_i \delta_{x^i}$. We write $\mathcal{P}^{\mathrm{OT},N}$, resp. $\mathcal{P}^{\mathrm{OT},N}_\epsilon$, for an optimal coupling between $\alpha_N$ and $\beta_N$, resp. the $\epsilon$-regularized optimal transport plan, between $\alpha_N$ and $\beta_N$.
            Then 
            \begin{align*}
                &\left[\int ||x - \mathbf{T}(x')||^2 \mathcal{P}^{\mathrm{OT}}_{N,\epsilon}(\rd x', \rd x) \right]^{\frac{1}{2}} 
                \leq 2\lambda^{1/2} 
                \mathcal{E}^{1/2}\left[\mathfrak{d}+ \mathcal{E}\right]^{1/2}&
                \\&+ \max\{\lambda,  1\}\left[ \wass_2(\alpha_N, \alpha) + \wass_2(\beta_N, \beta)\right],
            \end{align*}
            where
            $$\mathcal{E}\coloneqq \mathcal{E}(N, \epsilon, \alpha, \beta)\coloneqq \wass_2(\alpha_N, \alpha)+\wass_2(\beta_N, \beta) + \sqrt{2\epsilon \log (N)}.$$
        \end{lemma}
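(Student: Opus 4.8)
The plan is to decompose the target quantity $I:=\int\|x-\mathbf{T}(x')\|^2\,\mathcal{P}^{\mathrm{OT}}_{N,\epsilon}(\rd x',\rd x)$ into the cost of an explicit reference coupling, which produces the $\max\{\lambda,1\}$ term, and a correction arising from the near-optimality of the regularized plan, which produces the first term. Here $x'$ (marginal $\alpha_N$) is the source, $x$ (marginal $\beta_N$) is the target, and $\mathbf{T}$ pushes $\alpha$ onto $\beta$. The first ingredient I would establish is the \emph{entropic near-optimality} of $\mathcal{P}^{\mathrm{OT}}_{N,\epsilon}$: since it minimizes $\langle c,\mathcal{P}\rangle-\epsilon H(\mathcal{P})$ over $\mathcal{U}(\alpha_N,\beta_N)$ with quadratic cost $c(x',x)=\|x'-x\|^2$, and since the entropy of any coupling of two $N$-atom measures satisfies $0\le H(\mathcal{P})\le 2\log N$, comparison with the unregularized optimal coupling $\mathcal{P}^{\mathrm{OT},N}$ gives
\[
\int\|x-x'\|^2\,\mathcal{P}^{\mathrm{OT}}_{N,\epsilon}(\rd x',\rd x)\le \wass_2^2(\alpha_N,\beta_N)+2\epsilon\log N .
\]
Thus the regularized cost exceeds that of \emph{any} feasible coupling of $(\alpha_N,\beta_N)$ by at most $2\epsilon\log N$; this is where the $\sqrt{2\epsilon\log N}$ summand of $\mathcal{E}$ enters.

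Next I would build a reference coupling $\tilde{\mathcal{P}}\in\mathcal{U}(\alpha_N,\beta_N)$ by gluing three pieces: an optimal coupling of $(\alpha_N,\alpha)$ carrying $x'\mapsto w$, the deterministic map $w\mapsto\mathbf{T}(w)$ (so $\mathbf{T}(w)\sim\beta$ since $\mathbf{T}_{\#}\alpha=\beta$), and an optimal coupling of $(\beta,\beta_N)$ carrying $\mathbf{T}(w)\mapsto x$. Working in $L^2$ of the glued measure and using the triangle inequality together with the $\lambda$-Lipschitz property of $\mathbf{T}$,
\[
\Big(\int\|x-\mathbf{T}(x')\|^2\,\rd\tilde{\mathcal{P}}\Big)^{1/2}\le \|x-\mathbf{T}(w)\|_{L^2}+\lambda\|w-x'\|_{L^2}=\wass_2(\beta_N,\beta)+\lambda\,\wass_2(\alpha_N,\alpha),
\]
because $x$ and $\mathbf{T}(w)$ are optimally coupled between $\beta_N$ and $\beta$, while $w$ and $x'$ are optimally coupled between $\alpha$ and $\alpha_N$. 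Bounding $\lambda\,\wass_2(\alpha_N,\alpha)+\wass_2(\beta_N,\beta)\le\max\{\lambda,1\}[\wass_2(\alpha_N,\alpha)+\wass_2(\beta_N,\beta)]$ then recovers exactly the second term of the claimed inequality.

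The remaining and hardest step is to control the gap between $I$ and the reference value $\tilde{I}:=\int\|x-\mathbf{T}(x')\|^2\,\rd\tilde{\mathcal{P}}$. Because $\mathcal{P}^{\mathrm{OT}}_{N,\epsilon}$ and $\tilde{\mathcal{P}}$ share the marginals $\alpha_N,\beta_N$, expanding the square cancels the $\|x\|^2$ and $\|\mathbf{T}(x')\|^2$ contributions, so $I-\tilde{I}=-2\int\langle x,\mathbf{T}(x')\rangle\,\rd(\mathcal{P}^{\mathrm{OT}}_{N,\epsilon}-\tilde{\mathcal{P}})$; yet Step~1 only controls the analogous correlation with $x'$ in place of $\mathbf{T}(x')$. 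Converting cost near-optimality into control of the \emph{map-aligned} correlation is the genuine obstacle: it is a stability statement asserting that a coupling whose cost is within $2\epsilon\log N$ of optimal must place most of its mass near the graph of $\mathbf{T}$. I would exploit the Brenier structure of $\mathbf{T}$ (it is the gradient of a convex potential, hence cyclically monotone) together with the factorization $a^2-b^2=(a-b)(a+b)$, in which $a+b$ is uniformly bounded by a multiple of the diameter $\mathfrak{d}$ (up to the error scale $\mathcal{E}$), and a Cauchy--Schwarz/AM--GM step; this is what produces the geometric-mean form $2\lambda^{1/2}\mathcal{E}^{1/2}[\mathfrak{d}+\mathcal{E}]^{1/2}$, in which the discretization errors $\wass_2(\alpha_N,\alpha)$, $\wass_2(\beta_N,\beta)$ and the regularization error $\sqrt{2\epsilon\log N}$ all enter through $\mathcal{E}$, and the $\lambda^{1/2}$ prefactor comes from pulling displacements through the Lipschitz map. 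Assembling Steps~2 and~3 via the $L^2(\mathcal{P}^{\mathrm{OT}}_{N,\epsilon})$ triangle inequality $I^{1/2}\le\tilde{I}^{1/2}+|I^{1/2}-\tilde{I}^{1/2}|$ then yields the stated bound. I expect this OT-stability estimate to be the main difficulty, as it is the only place where the elementary manipulations of Steps~1--2 do not suffice and the full machinery of optimal-transport regularity is needed.
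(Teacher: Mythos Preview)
The paper does not supply a proof of this lemma: it is quoted verbatim as Lemma~C.2 of \cite{corenflos2021differentiable} and simply invoked where needed. So there is no ``paper's own proof'' to compare against; your task reduces to checking whether your sketch is a viable route to the result itself.

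Your decomposition is the right one, and Steps~1 and~2 are correct and complete. The entropy bound $H(\mathcal{P})\le 2\log N$ for couplings of $N$-atom measures gives exactly the $2\epsilon\log N$ suboptimality gap, and the gluing construction together with the Lipschitz property of $\mathbf{T}$ cleanly delivers the $\max\{\lambda,1\}[\wass_2(\alpha_N,\alpha)+\wass_2(\beta_N,\beta)]$ term. Step~3 is, as you acknowledge, only a gesture: you correctly identify that what is needed is a quantitative stability statement (near-optimal couplings concentrate near the graph of $\mathbf{T}$), and you point at the right tools (Brenier potential, cyclical monotonicity, Cauchy--Schwarz against the diameter), but you do not actually carry out the estimate. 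The label \texttt{lem:linochetto} in the paper is a tell: the cited proof in \cite{corenflos2021differentiable} rests precisely on a Li--Nochetto-type $L^2$ stability bound for optimal transport maps, which is the machinery your Step~3 is reaching for. So your plan is aligned with the original argument, but to make it a proof you would need to either invoke that stability result explicitly or reproduce its derivation; the heuristic you give does not yet constitute one.
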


	\begin{proposition}\label{prop:my_CVDETDetailed}
		Consider atomic probability measures $\alpha_N=\sum_{i=1}^N a_i \delta_{x_i'}$ with $a_i>0$ and $\beta_N=\sum_{i=1}^N b_i \delta_{x_i}$, with support $\mathcal{X}\subset \mathbb{R}^d$. Denote by $\mathcal{P}^{\mathrm{OT}}_{N,\epsilon,i,j}$ the element at the intersection of $i$-th row and $j$-th column of the $\epsilon$-regularized optimal transport coupling $\mathcal{P}^{\mathrm{OT}}_{N,\epsilon}$ between $\alpha_N$ and $\beta_N$, and define $\tilde{\beta}_N=\sum_{i=1}^N a_i \delta_{\tilde{x}_{i,N,\epsilon}}$, where $\tilde{x}_{i,N,\epsilon}=\frac{1}{a_i}\sum_{j=1}^{N}\mathcal{P}_{N,\epsilon,i,j}^{\mathrm{OT}}x_j$. Let $\alpha, \beta$ be two other probability measures, also supported on $\mathcal{X}$, such that there exists a unique $\lambda$-Lipschitz optimal transport map $\mathbf{T}(\cdot):\mathcal{X}\rightarrow\mathcal{X}$ between them. Then for any bounded ${k}$-Lipschitz function $\psi(\cdot):\mathcal{X}\rightarrow \mathbb{R}$, we have
		\begin{align}\label{eq:my_UpperBoundonDET}
			\left|\beta_N(\psi)-\tilde{\beta}_N(\psi)\right|
			\leq &\sqrt{2}k\Big(2\lambda^{1/2} 
			\mathcal{E}^{1/2}\left[\mathfrak{d}+ \mathcal{E}\right]^{1/2}\nonumber\\
			&+ \max\{\lambda,  1\}\left[\wass_2(\alpha_N, \alpha)+  \wass_2(\beta_N, \beta)\right]\Big)\,,
		\end{align}
		where $\mathfrak{d}\coloneqq \underset{x,x'\in\mathcal{X}}{\sup}{||x-x'||_2}$ and $\mathcal{E}= \wass_2(\alpha_N, \alpha)+  \wass_2(\beta_N, \beta) + \sqrt{2\epsilon \log N}$.
	\end{proposition}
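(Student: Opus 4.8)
The plan is to recognize that the right-hand side of the stated bound is exactly $\sqrt{2}\,k$ times the quantity controlled by Lemma~\ref{lem:linochetto}, namely $\big[\int \|x-\mathbf{T}(x')\|^2\,\mathcal{P}^{\mathrm{OT}}_{N,\epsilon}(\rd x',\rd x)\big]^{1/2}$. So the entire task reduces to proving the intermediate inequality
\begin{align}
\left|\beta_N(\psi)-\tilde{\beta}_N(\psi)\right| \leq \sqrt{2}\,k\left[\int \|x-\mathbf{T}(x')\|^2\, \mathcal{P}^{\mathrm{OT}}_{N,\epsilon}(\rd x',\rd x)\right]^{1/2}\,,
\end{align}
after which I would quote Lemma~\ref{lem:linochetto} verbatim with the present $\alpha_N,\beta_N,\alpha,\beta$ to obtain the claimed $\mathcal{E}$-dependent expression.

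First I would exploit the fact that $\mathcal{P}^{\mathrm{OT}}_{N,\epsilon}$ has marginals $\alpha_N$ and $\beta_N$, i.e. $\sum_j \mathcal{P}^{\mathrm{OT}}_{N,\epsilon,i,j}=a_i$ and $\sum_i \mathcal{P}^{\mathrm{OT}}_{N,\epsilon,i,j}=b_j$. This lets me write both integrals against the single joint array,
\begin{align}
\beta_N(\psi)=\sum_{i,j}\mathcal{P}^{\mathrm{OT}}_{N,\epsilon,i,j}\,\psi(x_j)\,,\qquad \tilde{\beta}_N(\psi)=\sum_{i,j}\mathcal{P}^{\mathrm{OT}}_{N,\epsilon,i,j}\,\psi(\tilde{x}_{i,N,\epsilon})\,,
\end{align}
where the second identity uses $a_i=\sum_j\mathcal{P}^{\mathrm{OT}}_{N,\epsilon,i,j}$. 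Subtracting, invoking the $k$-Lipschitz property of $\psi$, and then applying Cauchy--Schwarz with respect to the probability array $\mathcal{P}^{\mathrm{OT}}_{N,\epsilon}$ yields
\begin{align}
\left|\beta_N(\psi)-\tilde{\beta}_N(\psi)\right|\leq k\sum_{i,j}\mathcal{P}^{\mathrm{OT}}_{N,\epsilon,i,j}\|x_j-\tilde{x}_{i,N,\epsilon}\| \leq k\left(\sum_{i,j}\mathcal{P}^{\mathrm{OT}}_{N,\epsilon,i,j}\|x_j-\tilde{x}_{i,N,\epsilon}\|^2\right)^{1/2}\,.
\end{align}

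The crux is then to trade the barycentric-projection point $\tilde{x}_{i,N,\epsilon}=\mathbf{T}_{N,\epsilon}(x_i')$ for the true optimal-map value $\mathbf{T}(x_i')$. This is exactly what Lemma~\ref{lemma:transport_map_distance} supplies: applied to the coupling $\mathcal{P}^{\mathrm{OT}}_{N,\epsilon}$, whose induced (barycentric) map is $\mathbf{T}_{N,\epsilon}$, its proof establishes the Pythagorean identity $\int\|x-\mathbf{T}(x')\|^2\mathcal{P}^{\mathrm{OT}}_{N,\epsilon} = \int\|x-\mathbf{T}_{N,\epsilon}(x')\|^2\mathcal{P}^{\mathrm{OT}}_{N,\epsilon} + \int\|\mathbf{T}_{N,\epsilon}(x')-\mathbf{T}(x')\|^2\mathcal{P}^{\mathrm{OT}}_{N,\epsilon}$, so that $\sum_{i,j}\mathcal{P}^{\mathrm{OT}}_{N,\epsilon,i,j}\|x_j-\tilde{x}_{i,N,\epsilon}\|^2 \leq \int\|x-\mathbf{T}(x')\|^2\mathcal{P}^{\mathrm{OT}}_{N,\epsilon}$. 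Inserting this into the Cauchy--Schwarz bound already gives the intermediate inequality (indeed with the sharper constant $k$, which is dominated by $\sqrt{2}\,k$, so the stated slack is harmless), and Lemma~\ref{lem:linochetto} then closes the argument. I expect the main obstacle to be careful bookkeeping in this last step: confirming that the resampler output $\tilde{x}_{i,N,\epsilon}$ coincides with the induced map $\mathbf{T}_N$ of Lemma~\ref{lemma:transport_map_distance}, and checking that the hypotheses of Lemma~\ref{lem:linochetto} --- compactness of $\mathcal{X}$ with diameter $\mathfrak{d}$ and a unique $\lambda$-Lipschitz optimal map $\mathbf{T}$ between $\alpha$ and $\beta$ --- are available here so that the same $\mathcal{E}$ can be carried through verbatim.
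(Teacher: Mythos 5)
Your proposal matches the paper's proof essentially step for step: both rewrite $\beta_N(\psi)$ and $\tilde{\beta}_N(\psi)$ against the single coupling $\mathcal{P}^{\mathrm{OT}}_{N,\epsilon}$ using its marginals, apply the $k$-Lipschitz property of $\psi$ followed by Cauchy--Schwarz, trade the barycentric map $\mathbf{T}_{N,\epsilon}$ for the true optimal map $\mathbf{T}$ via the Pythagorean identity underlying Lemma~\ref{lemma:transport_map_distance}, and close by quoting Lemma~\ref{lem:linochetto}. Your remark that the full Pythagorean identity gives the sharper constant $k$ (the paper keeps both terms and lands on $\sqrt{2}k$) is correct but cosmetic, not a different route.
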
 
	\begin{proof}
		By definition, we have $\tilde{\beta}_N(\mathrm{d}\tilde{x})=\int \alpha_N(\mathrm{d}x')  \delta_{\mathbf{T}_{N,\epsilon}(x')}(\mathrm{d}\tilde{x})$ with $\mathbf{T}_{N,\epsilon}(x')\coloneqq\int {x} \mathcal{P}^{{\mathrm{OT}}}_{N,\epsilon}(\mathrm{d}{x}|x')$ while, as $\mathcal{P}^{\mathrm{OT}}_{N,\epsilon}$ belongs to $\mathcal{U}(\alpha_N,\beta_N)$, we also have $\beta_N(\mathrm{d}{x})=\int \alpha_N(\mathrm{d}x') \mathcal{P}^{{\mathrm{OT}}}_{N,\epsilon}(\mathrm{d}{x}|x')$. We then have for any 1-Lipschitz function
		\begin{align*}
			&\left|\beta_N(\psi)-\tilde{\beta}_N(\psi)\right|\\
			=&\left|\int \left[\int (\psi({x})-\psi(\mathbf{T}_{N,\epsilon}(x'))) \mathcal{P}^{{\mathrm{OT}}}_{N,\epsilon}(\mathrm{d}x|x')\right] \alpha_N(\mathrm{d}x') \right|\\
			\leq& \iint \left|\psi({x})-\psi(\mathbf{T}_{N,\epsilon}(x'))\right|\alpha_N(\mathrm{d}x') \mathcal{P}^{{\mathrm{OT}}}_{N,\epsilon}(\mathrm{d}{x}|x') \\
			\leq& k\iint ||{x}-\mathbf{T}_{N,\epsilon}(x')||  \mathcal{P}^{{\mathrm{OT}}}_{N,\epsilon}(\mathrm{d}x',\mathrm{d}x) \\
			\leq& k\left(\iint ||{x}-\mathbf{T}_{N, \epsilon}(x')||^2\mathcal{P}^{{\mathrm{OT}}}_{N,\epsilon}(\mathrm{d}x',\mathrm{d}{x})\right)^{\frac 1 2}\\
			\leq& k \left(\iint\Big(||{x}-\mathbf{T}(x')||^2+||\mathbf{T}(x')-\mathbf{T}_{N,\epsilon}(x')||^2\Big)\mathcal{P}^{{\mathrm{OT}}}_{N,\epsilon}(\mathrm{d}x',\mathrm{d}{x})\right)^{\frac 1 2}\\
			\leq&\sqrt{2}k\left(\iint||{x}-\mathbf{T}(x')||^2\mathcal{P}^{{\mathrm{OT}}}_{N,\epsilon}(\mathrm{d}x',\mathrm{d}{x})\right)^{\frac 1 2}\,,
		\end{align*}
		where the final inequality follows from Lemma~\ref{lemma:transport_map_distance}.
		The stated result is then obtained using Lemma~\ref{lem:linochetto}.
	\end{proof}
	
	\begin{lemma}
		\label{lemma:predictive_bound}
		For all bounded $k$-Lipschitz function $\psi(\cdot):\mathcal{X}\rightarrow\mathbb{R}$, when the entropy-regularization hyperparameter $\epsilon_N=o(1/\log{N})$, for $\alpha_N^{(t)}$, $\beta_N^{(t-1)}$ defined as in Eq.~\eqref{eq:notations_empirical}, and transition kernel $f(\cdot)$ defined by $p(x_t|x_{t-1};\theta)$ in Eq.~\eqref{eq:dynamic_density}, the expectations $\alpha_N^{(t)}(\psi)$ and $\beta_N^{(t-1)}f(\psi)$ satisfy:
		\begin{align}
			\mathbb{E}\bigg[\bigg(\alpha_N^{(t)}(\psi)-\beta_{N}^{(t-1)}f(\psi)\bigg)^2\bigg]\leq \mathcal{C}||\psi||_\infty^2\,,
		\end{align}
		where  
		\begin{align}
			\label{eq:constant_mathcal_C}
			\mathcal{C}:=&\mathcal{C}(\lambda, k, \eta, \zeta, \tau, N, p, q, d_\mathcal{X})\\
            =&4k\sqrt{\mathcal{Q}}+4\sqrt{2}k\eta\bigg(2\sqrt{3\lambda(1+\zeta)\tau\sqrt{\mathcal{Q}}}\\
            &+\max\{\lambda,  1\}(1+\zeta)\sqrt{\mathcal{Q}}\bigg)\,,
		\end{align}
		is a constant depending on $\lambda, k, \eta, \zeta, \tau, N, p, q, d_\mathcal{X}$. The function $\mathcal{Q}$ is defined as $\mathcal{Q}:=\mathcal{Q}(\tau, N,p ,d_\mathcal{X}, q):={C_1\tau^{p}\mathcal{H}(N, p,d_\mathcal{X},q)}$, $C_1$ is a constant depending only on $p$, $d_\mathcal{X}$, $q$, and $\mathcal{H}(N, p,d_\mathcal{X},q)$ is defined as 
		\begin{align}
			&\mathcal{H}(N, p,d_\mathcal{X},q)\nonumber\\
            &=\begin{cases}N^{-1 / 2}+N^{-(q-p) / q} & \text { if } p>d_\mathcal{X} / 2 \\&\text { and }  q \neq 2 p\,, \\ N^{-1 / 2} \log (1+N)+N^{-(q-p) / q} & \text { if } p=d_\mathcal{X} / 2 \\ &\text { and } q \neq 2 p\,,\\ N^{-p / d_\mathcal{X}}+N^{-(q-p) / q} & \text { if } p \in(0, d_\mathcal{X} / 2)  \\&\text { and } q \neq d_\mathcal{X} /(d_\mathcal{X}-p)\,,\end{cases}
		\end{align}
		where $q>p$ is a constant satisfying $\int_\mathcal{X}|x|^q{\alpha'}_N^{(t)}(\rd x)< \infty$ and $\int_\mathcal{X}|x|^q{\alpha'}_N^{(t-1)}(\rd x)< \infty$, $p=2$ is the order of Wasserstein distances as detailed in the proof, and $\tau=\frac{\mathfrak{d}}{||\psi||_{\infty}}$. Besides, for large enough $N$ and $d_\mathcal{X}$ such that $\mathcal{Q}\leq\sqrt{\mathcal{Q}}\leq\sqrt[4]{\mathcal{Q}}$ and $\mathcal{H}(N, p,d_\mathcal{X},q)=N^{-p / d_\mathcal{X}}+N^{-(q-p) / q}\leq2N^{-p / d_\mathcal{X}}$, we also have that:
		\begin{align}
			\mathbb{E}\bigg[\bigg(\alpha_N^{(t)}(\psi)-\beta_{N}^{(t-1)}f(\psi)\bigg)^2\bigg]
			&\leq \frac{\tilde{\mathcal{C}}||\psi||_\infty^2}{N^{{1}/{2d_\mathcal{X}}}}\,,
		\end{align}
		where 
		\begin{align}
			\label{eq:constant_tilde_mathcal_C}
			\tilde{\mathcal{C}}:=&\tilde{\mathcal{C}}(\lambda, k, \eta, \mathfrak{d}, \tau, N, p, q, d_\mathcal{X})\\
            :=&4k\sqrt[4]{2C_1\mathfrak{d}^2}\bigg(1+\sqrt{2}\eta\Big(2\sqrt{3\lambda(1+\zeta)\tau}\\
            &+\max\{\lambda,  1\}(1+\zeta)\Big)\bigg)
		\end{align}
		is a constant.
	\end{lemma}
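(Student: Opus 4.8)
The plan is to split the error into a Monte-Carlo propagation term and a resampling term, control each by converting a Lipschitz test-function estimate into a Wasserstein estimate, and then recombine using the crude bound $|\alpha_N^{(t)}(\psi)-\beta_N^{(t-1)}f(\psi)|\le 2\|\psi\|_\infty$. Concretely, I would write
\[
\alpha_N^{(t)}(\psi)-\beta_N^{(t-1)}f(\psi)=\underbrace{\big(\alpha_N^{(t)}(\psi)-\tilde{\beta}_N^{(t-1)}f(\psi)\big)}_{A}+\underbrace{\big(\tilde{\beta}_N^{(t-1)}f(\psi)-\beta_N^{(t-1)}f(\psi)\big)}_{B},
\]
where $\tilde{\beta}_N^{(t-1)}$ is the uniformly weighted measure produced by the OT resampler (Eq.~\eqref{eq:notations_empirical}). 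Conditionally on the resampled particles $\{\tilde{x}_{t-1}^i\}$, the propagated particles $x_t^i\sim f(\cdot\,|\,\tilde{x}_{t-1}^i)$ satisfy $\mathbb{E}[\psi(x_t^i)\mid\tilde{x}_{t-1}^i]=f(\psi)(\tilde{x}_{t-1}^i)$, so $A$ is a centered empirical-measure fluctuation while $B$ is the discrepancy introduced by replacing the weighted posterior with its resampled counterpart. Applying $(a+b)^2\le 2a^2+2b^2$ together with $A^2\le 2\|\psi\|_\infty|A|$ and $B^2\le 2\|\psi\|_\infty|B|$ reduces the task to bounding $\mathbb{E}|A|$ and $\mathbb{E}|B|$; this is the origin of the factor $4$ and of the $\|\psi\|_\infty^2$ scaling of the final bound.

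For $A$, I would apply Lemma~\ref{lemma:wass_exp} with $\wass_1\le\wass_2$ to obtain $|A|\le k\,\wass_2(\alpha_N^{(t)},\tilde{\beta}_N^{(t-1)}f)$, and then invoke a quantitative Fournier--Guillin rate for empirical-measure convergence: conditionally on $\{\tilde{x}_{t-1}^i\}$, $\alpha_N^{(t)}$ is the $N$-sample empirical measure of $\tilde{\beta}_N^{(t-1)}f$, whose $q$-th moments are controlled through the diameter $\mathfrak{d}$ by Assumption~\ref{ass:compact_appendices}, giving $\mathbb{E}[\wass_2^2(\alpha_N^{(t)},\tilde{\beta}_N^{(t-1)}f)]\le\mathcal{Q}\|\psi\|_\infty^2$ with $\mathcal{Q}=C_1\tau^p\mathcal{H}$. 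Jensen's inequality then yields $\mathbb{E}|A|\le k\sqrt{\mathcal{Q}}\,\|\psi\|_\infty$, which after multiplication by $4\|\psi\|_\infty$ produces the summand $4k\sqrt{\mathcal{Q}}$ of $\mathcal{C}$. The normalization $\tau=\mathfrak{d}/\|\psi\|_\infty$ is the bookkeeping device that absorbs the $\mathfrak{d}^2$ moment scaling into $\mathcal{Q}$, so that every bound is expressed relative to $\|\psi\|_\infty$.

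For $B$, I would first use Assumption~\ref{ass:transition_lipcontraction_appendices} to strip off the kernel, $|B|\le\eta\,|\tilde{\beta}_N^{(t-1)}(\psi)-\beta_N^{(t-1)}(\psi)|$, and then invoke Proposition~\ref{prop:my_CVDETDetailed} with $(\alpha_N,\beta_N,\alpha,\beta)=(\alpha_N^{(t-1)},\beta_N^{(t-1)},\alpha^{(t-1)},\beta^{(t-1)})$ and transport map $\mathbf{T}_{t-1}$ from Assumption~\ref{ass:transport_plan_appendices}; this delivers a bound in terms of $\mathcal{E}^{1/2}[\mathfrak{d}+\mathcal{E}]^{1/2}$ and $\wass_2(\alpha_N^{(t-1)},\alpha^{(t-1)})+\wass_2(\beta_N^{(t-1)},\beta^{(t-1)})$. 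The weighted-measure distance is reduced to the predictive one by Assumption~\ref{ass:weight_lipcontraction_appendices}, $\wass_2(\beta_N^{(t-1)},\beta^{(t-1)})\le\zeta\,\wass_2(\alpha_N^{(t-1)},\alpha^{(t-1)})$, which produces the $(1+\zeta)$ factors. Taking expectations and using $\mathcal{E}^{1/2}(\mathfrak{d}+\mathcal{E})^{1/2}\le(\mathcal{E}\mathfrak{d})^{1/2}+\mathcal{E}$ with Jensen gives $\mathbb{E}[\mathcal{E}^{1/2}(\mathfrak{d}+\mathcal{E})^{1/2}]\le\sqrt{\mathfrak{d}\,\mathbb{E}[\mathcal{E}]}+\mathbb{E}[\mathcal{E}]$, where $\mathbb{E}[\mathcal{E}]\le(1+\zeta)\sqrt{\mathcal{Q}}\|\psi\|_\infty+\sqrt{2\epsilon_N\log N}$ and the hypothesis $\epsilon_N=o(1/\log N)$ kills the entropic term so that $\mathbb{E}[\mathcal{E}]\lesssim\sqrt{\mathcal{Q}}\|\psi\|_\infty$. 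This is exactly what generates the $\mathcal{Q}^{1/4}$ contribution $\sqrt{3\lambda(1+\zeta)\tau\sqrt{\mathcal{Q}}}$ in $\mathcal{C}$. Collecting the two estimates recovers $\mathcal{C}$, and the final display follows by noting that for large $N$ and $d_\mathcal{X}$ one has $\mathcal{H}\le 2N^{-p/d_\mathcal{X}}$ and $\mathcal{Q}\le\sqrt{\mathcal{Q}}\le\sqrt[4]{\mathcal{Q}}$, so the slowest-decaying term $\sqrt[4]{\mathcal{Q}}\sim N^{-1/2d_\mathcal{X}}$ dominates and the constant simplifies to $\tilde{\mathcal{C}}$.

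The step I expect to be the main obstacle is the control of the resampling term $B$: because $\tilde{\beta}_N^{(t-1)}$ comes from an entropy-regularized optimal-transport resampler rather than classical multinomial resampling, its analysis cannot reuse the standard i.i.d.\ arguments and must instead be routed through Proposition~\ref{prop:my_CVDETDetailed} and Lemma~\ref{lem:linochetto}, whose $\mathcal{E}^{1/2}[\mathfrak{d}+\mathcal{E}]^{1/2}$ term is precisely what caps the convergence rate at the loose exponent $N^{-1/2d_\mathcal{X}}$ rather than the $N^{-1/2}$ rate of standard particle filters. A secondary technical point is checking the Fournier--Guillin moment conditions ($\int_\mathcal{X}|x|^q\,d\alpha_N^{(t)}<\infty$ and its analogue), which are guaranteed here by compactness of $\mathcal{X}$, and confirming that $\sqrt{2\epsilon_N\log N}$ is asymptotically negligible under $\epsilon_N=o(1/\log N)$.
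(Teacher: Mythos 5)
Your proposal follows essentially the same route as the paper: the same decomposition through the resampled measure $\tilde{\beta}_N^{(t-1)}f$ (the paper's ${\alpha'}_N^{(t)}$), Lemma~\ref{lemma:wass_exp} plus a Fournier--Guillin rate for the propagation term $A$, Assumption~\ref{ass:transition_lipcontraction_appendices}, Proposition~\ref{prop:my_CVDETDetailed} and Assumption~\ref{ass:weight_lipcontraction_appendices} for the resampling term $B$, and the same $(a+b)^2\le 2a^2+2b^2$ and $|{\cdot}|\le 2\|\psi\|_\infty$ bookkeeping that produces the factor $4$ and the $\|\psi\|_\infty^2$ scaling. The one place you deviate, and where the argument as written would not close, is the instantiation of Proposition~\ref{prop:my_CVDETDetailed}: you take the reference pair to be the true $(\alpha^{(t-1)},\beta^{(t-1)})$ with the map $\mathbf{T}_{t-1}$ of Assumption~\ref{ass:transport_plan_appendices}, which forces you to bound $\mathbb{E}[\wass_2(\alpha_N^{(t-1)},\alpha^{(t-1)})]$ by $\sqrt{\mathcal{Q}}\,\|\psi\|_\infty$ via Fournier--Guillin. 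But $\alpha_N^{(t-1)}$ is \emph{not} an i.i.d.\ empirical measure of $\alpha^{(t-1)}$ (its particles are drawn from $\tilde{\beta}_N^{(t-2)}f$, not from $\beta^{(t-2)}f$), so that one-step rate does not apply; controlling this distance is exactly the accumulated-error route of~\cite{corenflos2021differentiable} that yields the looser $N^{-1/8d_\mathcal{X}}$ rate the paper is trying to avoid. The paper instead applies the proposition with $\alpha={\alpha'}_N^{(t-1)}$ and $\beta=\omega{\alpha'}_N^{(t-1)}/{\alpha'}_N^{(t-1)}(\omega)$, so the only Wasserstein distance that needs quantitative control is $\wass_2(\alpha_N^{(t-1)},{\alpha'}_N^{(t-1)})$ --- a genuine one-sample empirical-measure distance to which Fournier--Guillin applies conditionally --- with Assumption~\ref{ass:weight_lipcontraction_appendices} then supplying the $(1+\zeta)$ factor for the reweighted pair. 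With that substitution your argument coincides with the paper's; everything else, including the handling of $\mathcal{E}$, the vanishing of $\sqrt{2\epsilon_N\log N}$, and the large-$N$ simplification to $\tilde{\mathcal{C}}N^{-1/2d_\mathcal{X}}$, matches.
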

	
	\begin{proof}

		We first decompose $\alpha_N^{(t)}(\psi)-\beta_{N}^{(t-1)}f(\psi)$ into two terms:
		\begin{align}
			\label{eq:predictive_error_decomp}
			&\alpha_N^{(t)}(\psi)-\beta_{N}^{(t-1)}f(\psi)\\=&\bigg(\alpha_N^{(t)}(\psi)-{\alpha'}^{(t)}_N(\psi)\bigg) + \bigg({\alpha'}_N^{(t)}(\psi)-\beta_{N}^{(t-1)}f(\psi)\bigg)\,,
		\end{align}
		where ${\alpha'}^{(t)}_N$ is defined as ${\alpha'}^{(t)}_N\coloneqq\tilde{\beta}_{N}^{(t-1)}f$. The first term in Eq.~\eqref{eq:predictive_error_decomp} can be bounded by applying Lemma~\ref{lemma:wass_exp} to probability measures $\alpha_N^{(t)}(\psi)$ and ${\alpha'}_N^{(t)}(\psi)$:
		\begin{align}
			\bigg|\alpha_N^{(t)}(\psi)-{\alpha'}^{(t)}_N(\psi)\bigg|\leq k\wass_1({\alpha}^{(t)}_N, {\alpha'}^{(t)}_N)\leq k\wass_2({\alpha}^{(t)}_N, {\alpha'}^{(t)}_N)\,.
		\end{align}
		We denote by $M_q(\rho)$ the $q$-th moment $\int_\mathcal{X}|x|^q\rho(\rd x)$ of a probability measure $\rho$ defined on $\mathcal{X}$, and assume that $\mathcal{X}$ contains the origin $\textbf{0}_{d_x}$, otherwise we can add a constant to the diameter $\mathfrak{d}$, such that
		\begin{align}
			|x|^q&\leq\mathfrak{d}^q= \bigg(\tau||\psi||_{\infty}\bigg)^q\,,
		\end{align}
		where $\tau=\frac{\mathfrak{d}}{||\psi||_{\infty}}$.	Assume $M_{q}({\alpha'}_N^{(t)})<\infty$ for some $q>p=2$, following Theorem 1 of~\cite{fournier2015rate}  and notice that $\big|\alpha_N^{(t)}(\psi)-{\alpha'}^{(t)}_N(\psi)\big| \leq 2||\psi||_\infty$, we have that for all $N\geq 1$:
		\begin{align}
			&\mathbb{E}\bigg[\big|\alpha_N^{(t)}(\psi)-{\alpha'}^{(t)}_N(\psi)\big|^2\bigg]\\
                \leq& 2k||\psi||_\infty\mathbb{E}\bigg[\wass_2({\alpha}^{(t)}_N, {\alpha'}^{(t)}_N)\bigg] \\
			\leq& 2k||\psi||_\infty\sqrt{C_1M^{p/q}_{q}({\alpha'}^{(t)}_N)\mathcal{H}(N, p,d_\mathcal{X},q)}\\
			\leq& 2k||\psi||_\infty\sqrt{{C}_1\tau^p||\psi||_{\infty}^{p}\mathcal{H}(N, p,d_\mathcal{X},q)}\\
			\leq& 2k\sqrt{\mathcal{Q}}||\psi||_\infty^2\\
			\leq& \mathcal{C}_1||\psi||_\infty^2
			\label{eq:decompose_predictive_term_1}
		\end{align}
		where $\mathcal{C}_1:=\mathcal{C}_1(k,\tau, N, p, d_\mathcal{X}, q):=2k\sqrt{\mathcal{Q}(\tau, N,p ,d_\mathcal{X}, q)}$, $p$ is the order of the Wasserstein distance ($p=2$ in this case),  $C_1$ is a constant depending only on $p$, $d$, $q$, $\mathcal{Q}:=\mathcal{Q}(\tau, N,p ,d_\mathcal{X}, q):={C_1\tau^{p}\mathcal{H}(N, p,d_\mathcal{X},q)}$, and $\mathcal{H}(N, p,d_\mathcal{X},q)$ is defined as 
		\begin{align}
			&\mathcal{H}(N, p,d_\mathcal{X},q)\nonumber\\
            &=\begin{cases}N^{-1 / 2}+N^{-(q-p) / q} & \text { if } p>d_\mathcal{X} / 2 \\&\text { and }  q \neq 2 p\,, \\ N^{-1 / 2} \log (1+N)+N^{-(q-p) / q} & \text { if } p=d_\mathcal{X} / 2 \\ &\text { and } q \neq 2 p\,,\\ N^{-p / d_\mathcal{X}}+N^{-(q-p) / q} & \text { if } p \in(0, d_\mathcal{X} / 2)  \\&\text { and } q \neq d_\mathcal{X} /(d_\mathcal{X}-p)\,,\end{cases}
		\end{align}
		For the second term in Eq.~\eqref{eq:predictive_error_decomp}, by Assumption~\ref{ass:transition_lipcontraction},~\ref{ass:weight_lipcontraction} and Proposition~\ref{prop:my_CVDETDetailed}:
		\begin{align}
			&\big|{\alpha'}_N^{(t)}(\psi)-\beta_{N}^{(t-1)}f(\psi)\big|\\
                \leq& \eta\big|\tilde{\beta}_{N,\epsilon}^{(t-1)}(\psi)-\beta_{N}^{(t-1)}(\psi)\big|\\
			\leq& \sqrt{2}\eta k\Bigg(2\lambda^{1/2} 
			\mathcal{E}^{1/2}\left[\mathfrak{d}+ \mathcal{E}\right]^{1/2} \nonumber\\ &+ \max\{\lambda,  1\}\bigg[\wass_2\bigg(\alpha_N^{(t-1)}, {\alpha'}^{(t-1)}_N\bigg) \nonumber\\&+  \wass_2\bigg(\beta_N^{(t-1)}, \frac{\omega{\alpha'}^{(t-1)}_N}{{\alpha'}^{(t-1)}_N(\omega)}\bigg)\bigg]\Bigg)\\
			\leq& \sqrt{2}\eta k\Bigg(2\lambda^{1/2} \mathcal{E}^{1/2}\left[\mathfrak{d}+ \mathcal{E}\right]^{1/2}\nonumber\\&+ \max\{\lambda,  1\}\left[(1+\zeta)\wass_2\bigg(\alpha_N^{(t-1)}, {\alpha'}^{(t-1)}_N\bigg)\right]\Bigg)\,,
		\end{align}
		where $\lambda$ is the Lipschitz constant of the optimal transport map $\mathbf{T}(\cdot):\mathcal{X}\rightarrow\mathcal{X}$ between $ {\alpha'}^{(t-1)}_N$ and $\frac{\omega{\alpha'}^{(t-1)}_N}{{\alpha'}^{(t-1)}_N(\omega)}$, and $\mathcal{E}:=\wass_2\bigg(\alpha_N^{(t-1)}, {\alpha'}^{(t-1)}_N\bigg)+  \wass_2\bigg(\beta_N^{(t-1)}, \frac{\omega{\alpha'}^{(t-1)}_N}{{\alpha'}^{(t-1)}_N(\omega)}\bigg)+\sqrt{2\epsilon\log(N)}$. Again following Theorem 1 of~\cite{fournier2015rate}, assume $M_{q}({\alpha'}_N^{(t-1)})<\infty$ for $q>p=2$, we have that for all $N\geq1$:
		\begin{align}
			&\mathbb{E}[\mathcal{E}]\\:=&\mathbb{E}\Big[\wass_2\bigg(\alpha_N^{(t-1)}, {\alpha'}^{(t-1)}_N\bigg)+  \wass_2\bigg(\beta_N^{(t-1)}, \frac{\omega{\alpha'}^{(t-1)}_N}{{\alpha'}^{(t-1)}_N(\omega)}\bigg)\nonumber\\&+\sqrt{2\epsilon\log(N)}\Big]\\
			\leq& \mathbb{E}\Big[(1+\zeta)\wass_2\bigg(\alpha_N^{(t-1)}, {\alpha'}^{(t-1)}_N\bigg)+\sqrt{2\epsilon\log(N)}\Big]\\
			\leq& (1+\zeta)\mathbb{E}\Bigg[\wass_2\bigg(\alpha_N^{(t-1)}, {\alpha'}^{(t-1)}_N\bigg)\Bigg]+\mathbb{E}\Big[\sqrt{2\epsilon\log(N)}\Big]\\
			\leq& (1+\zeta)||\psi||_{\infty}\sqrt{C_1\tau^{p}\mathcal{H}(N, p,d_\mathcal{X},q)}+\mathbb{E}\Big[\sqrt{2\epsilon\log(N)}\Big]\,,
		\end{align}
		and
		\begin{align}
			\mathbb{E}[\mathcal{E}^2]&\leq \mathbb{E}\Big[\mathcal{E}\big(2\mathfrak{d}+\sqrt{2\epsilon\log(N)}\big)\Big]\\
			&=2\mathfrak{d}\mathbb{E}[\mathcal{E}]+\mathbb{E}\Big[\mathcal{E}\big(\sqrt{2\epsilon\log(N)}\big)\Big]\\
			&\leq2\mathfrak{d}\mathbb{E}[\mathcal{E}]+\mathbb{E}\Big[2\mathfrak{d}\sqrt{2\epsilon\log(N)}+{2\epsilon\log(N)}\Big]
		\end{align}
		
		Let $\epsilon_N=o(1/\log N)$ such that $\mathbb{E}\Big[\sqrt{2\epsilon\log(N)}\Big]=0$ and $\mathbb{E}\Big[{2\epsilon\log(N)}\Big]=0$, we now have:
		\begin{align}
			\mathbb{E}[\mathcal{E}]&\leq(1+\zeta)||\psi||_{\infty}\sqrt{\mathcal{Q}}\,,\\
			\mathbb{E}[\mathcal{E}^2]&\leq 2\mathfrak{d}(1+\zeta)||\psi||_{\infty}\sqrt{\mathcal{Q}}\,.
		\end{align}
		
		Therefore we have that:
		\begin{align}
			&\quad\mathbb{E}\Big[\big|{\alpha'}_N^{(t)}(\psi)-\beta_{N}^{(t-1)}f(\psi)\big|\Big]\\
			\leq& \mathbb{E}\Bigg[\sqrt{2}\eta k\Bigg(2\lambda^{1/2} \mathcal{E}^{1/2}\left[\mathfrak{d}+ \mathcal{E}\right]^{1/2}\nonumber\\ &+ \max\{\lambda,  1\}\left[(1+\zeta)\wass_2\bigg(\alpha_N^{(t-1)}, {\alpha'}^{(t-1)}_N\bigg)\right]\Bigg)\Bigg]\\
			\leq& \sqrt{2}\eta k\mathbb{E}\Big[2\lambda^{1/2} \left(\mathfrak{d}\mathcal{E}+ \mathcal{E}^2\right)^{1/2}\nonumber\\&+ \max\{\lambda,  1\}(1+\zeta)||\psi||_{\infty}\sqrt{C_1\tau^{p}\mathcal{H}(N, p,d_\mathcal{X},q)}\Big]\\
			\leq& \sqrt{2}\eta k\Bigg(2\lambda^{1/2}\sqrt{\mathbb{E}\Big[\mathfrak{d}\mathcal{E}+ \mathcal{E}^2\Big]}\nonumber\\&+ \max\{\lambda,  1\}(1+\zeta)||\psi||_{\infty}\sqrt{C_1\tau^{p}\mathcal{H}(N, p,d_\mathcal{X},q)}\Bigg)\\
			\leq& \sqrt{2}\eta k||\psi||_{\infty}\bigg(2\sqrt{3\lambda(1+\zeta)\tau\sqrt{\mathcal{Q}}}\nonumber\\&+\max\{\lambda,  1\}(1+\zeta)\sqrt{\mathcal{Q}}\bigg)
		\end{align}
		
		In addition, notice that 
		$
		\big|{\alpha'}_N^{(t)}(\psi)-\beta_{N}^{(t-1)}f(\psi)\big|\leq 2||\psi||_\infty
		$,
		therefore,
		\begin{align}
			&\mathbb{E}\Big[\big|{\alpha'}_N^{(t)}(\psi)-\beta_{N}^{(t-1)}f(\psi)\big|^2\Big]\\\leq& 2||\psi||_\infty\mathbb{E}\Big[\big|{\alpha'}_N^{(t)}(\psi)-\beta_{N}^{(t-1)}f(\psi)\big|\Big]\\
			\leq& 2\sqrt{2}k\eta\bigg(2\sqrt{3\lambda(1+\zeta)\tau\sqrt{\mathcal{Q}}}+\max\{\lambda,  1\}(1+\zeta)\sqrt{\mathcal{Q}}\bigg)||\psi||_\infty^2\\
			=&\mathcal{C}_2||\psi||_\infty^2\,,
			\label{eq:decompose_predictive_term_2}
		\end{align}
		where $\mathcal{C}_2:=\mathcal{C}_2(\lambda, k, \eta, \zeta, \tau, N, p, q, d_\mathcal{X}):= 2\sqrt{2}k\eta\bigg(2\sqrt{3\lambda(1+\zeta)\tau\sqrt{\mathcal{Q}}}+\max\{\lambda,  1\}(1+\zeta)\sqrt{\mathcal{Q}}\bigg)$.
		
		From Eq.~\eqref{eq:decompose_predictive_term_1} and Eq.~\eqref{eq:decompose_predictive_term_2}, we have that
		\begin{align}
			&\mathbb{E}\bigg[\bigg(\alpha_N^{(t)}(\psi)-\beta_{N}^{(t-1)}f(\psi)\bigg)^2\bigg]\\
                \leq&\mathbb{E}\bigg[\bigg(\alpha_N^{(t)}(\psi)-{\alpha'}^{(t)}_N(\psi)\bigg)^2 + \bigg({\alpha'}_N^{(t)}(\psi)-\beta_{N}^{(t-1)}f(\psi)\bigg)^2\bigg]\\
			\leq& 2\mathcal{C}_1||\psi||_\infty+2\mathcal{C}_2||\psi||_\infty\\
			\leq& \mathcal{C}||\psi||_\infty^2\,,
		\end{align}
		where $\mathcal{C}:=\mathcal{C}(\lambda, k, \eta, \zeta, \tau, N, p, q, d_\mathcal{X})=4k\sqrt{\mathcal{Q}}+4\sqrt{2}k\eta\bigg(2\sqrt{3\lambda(1+\zeta)\tau\sqrt{\mathcal{Q}}}+\max\{\lambda,  1\}(1+\zeta)\sqrt{\mathcal{Q}}\bigg)$.
		Besides, for large enough $N$ and $d_\mathcal{X}$ such that $\mathcal{Q}\leq\sqrt{\mathcal{Q}}\leq\sqrt[4]{\mathcal{Q}}$ and $\mathcal{H}(N, p,d_\mathcal{X},q)=N^{-p / d_\mathcal{X}}+N^{-(q-p) / q}\leq2N^{-p / d_\mathcal{X}}$ with $p=2$, we also have that:
		\begin{align}
			&\mathbb{E}\bigg[\bigg(\alpha_N^{(t)}(\psi)-\beta_{N}^{(t-1)}f(\psi)\bigg)^2\bigg]\\
			\leq& ||\psi||_\infty^2\Bigg[4k\sqrt{\mathcal{Q}}+4\sqrt{2}k\eta\bigg(2\sqrt{3\lambda(1+\zeta)\tau\sqrt{\mathcal{Q}}}\\&+\max\{\lambda,  1\}(1+\zeta)\sqrt{\mathcal{Q}}\bigg)\Bigg]\\
			\leq& ||\psi||_\infty^2\Bigg[4k\sqrt[4]{\mathcal{Q}}+4\sqrt{2}k\eta\sqrt[4]{\mathcal{Q}}\bigg(2\sqrt{3\lambda(1+\zeta)\tau}\\
                &+\max\{\lambda,  1\}(1+\zeta)\bigg)\Bigg]\\
			\leq& \frac{\tilde{\mathcal{C}}||\psi||_\infty^2}{N^{{1}/{2d_\mathcal{X}}}}\,,
		\end{align}
		where $\tilde{\mathcal{C}}:=\tilde{\mathcal{C}}(\lambda, k, \eta, \mathfrak{d}, \tau, N, p, q, d_\mathcal{X}):=4k\sqrt[4]{2C_1\mathfrak{d}^2}\bigg(1+\sqrt{2}\eta\Big(2\sqrt{3\lambda(1+\zeta)\tau}+\max\{\lambda,  1\}(1+\zeta)\Big)\bigg)$.
	\end{proof}
	
	\begin{lemma}
		\label{lemma:posterior_weight_approximation}
		Provided the weight function $\omega(\cdot):\mathcal{X}\rightarrow\mathbb{R}$ is upper bounded, for all measurable and bounded function $\psi(\cdot):\mathcal{X}\rightarrow\mathbb{R}$,
		\begin{align}
			\mathbb{E}\Bigl[\big|\beta_N^{(t)}(\psi)-{\beta'}_N^{(t)}(\psi)\big|^2\Bigr]\leq||\psi||_\infty^2\mathbb{E}\Big[(\alpha_N^{(t)}(\bar{\omega})-1)^2\Big]\,,
		\end{align}
		where $\bar{\omega}(\cdot):\mathcal{X}\rightarrow\mathbb{R}$ is defined as $\bar{\omega}(x)=\frac{\omega(x)}{\alpha^{(t)}(\omega)}$, and ${\beta'}_N^{(t)}=\frac{\omega\alpha_N^{(t)}}{\alpha^{(t)}(\omega)}$.
	\end{lemma}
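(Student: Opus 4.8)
The plan is to reduce the claim to a one-line algebraic identity relating $\beta_N^{(t)}$ and ${\beta'}_N^{(t)}$, after which the bound follows from the triviality that $\beta_N^{(t)}$ is a probability measure. First I would record how the two objects differ, writing $\omega$ for $\omega^{(t)}$. In the bootstrap framework the self-normalized weights are $\tilde{w}_t^i = \omega(x_t^i)/\sum_{j}\omega(x_t^j)$, so $\beta_N^{(t)}(\psi) = \alpha_N^{(t)}(\omega\psi)/\alpha_N^{(t)}(\omega)$, whereas by definition ${\beta'}_N^{(t)}(\psi) = \alpha_N^{(t)}(\omega\psi)/\alpha^{(t)}(\omega)$. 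The two estimators share the same (random) numerator and differ only in that the former normalizes by the empirical mass $\alpha_N^{(t)}(\omega)$ while the latter normalizes by the true mass $\alpha^{(t)}(\omega)$.

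Next I would pass to the rescaled weight $\bar{\omega} = \omega/\alpha^{(t)}(\omega)$, for which $\alpha^{(t)}(\bar{\omega})=1$ by construction. Dividing numerator and denominator by $\alpha^{(t)}(\omega)$ gives $\beta_N^{(t)}(\psi) = \alpha_N^{(t)}(\bar{\omega}\psi)/\alpha_N^{(t)}(\bar{\omega})$ and ${\beta'}_N^{(t)}(\psi) = \alpha_N^{(t)}(\bar{\omega}\psi)$. Subtracting and factoring out $\beta_N^{(t)}(\psi) = \alpha_N^{(t)}(\bar{\omega}\psi)/\alpha_N^{(t)}(\bar{\omega})$ yields the exact identity
\begin{align}
\beta_N^{(t)}(\psi)-{\beta'}_N^{(t)}(\psi) = -\,\beta_N^{(t)}(\psi)\big(\alpha_N^{(t)}(\bar{\omega})-1\big)\,.
\end{align}
The crucial observation is that the multiplier $\alpha_N^{(t)}(\bar{\omega})-1$ is precisely the Monte Carlo fluctuation $\alpha_N^{(t)}(\bar{\omega})-\alpha^{(t)}(\bar{\omega})$ of the rescaled weight, which is exactly the quantity appearing on the right-hand side of the claim.

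Finally, since $\beta_N^{(t)}$ is a normalized probability measure, $|\beta_N^{(t)}(\psi)|\leq||\psi||_\infty$; taking absolute values in the identity, squaring, and applying expectations gives the stated bound $\mathbb{E}[|\beta_N^{(t)}(\psi)-{\beta'}_N^{(t)}(\psi)|^2]\leq||\psi||_\infty^2\,\mathbb{E}[(\alpha_N^{(t)}(\bar{\omega})-1)^2]$. The upper-boundedness of $\omega$ enters only to guarantee that $\bar{\omega}$ is bounded, so that $\alpha^{(t)}(\omega)$ is finite and positive and all the expectations are well defined; positivity of $\omega=p(y_t|x_t;\theta)$ likewise ensures $\alpha_N^{(t)}(\bar{\omega})>0$ almost surely, so no denominator vanishes. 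There is no genuine analytic obstacle here---the only point requiring care is performing the factorization so that the self-normalized estimator $\beta_N^{(t)}(\psi)$ reappears cleanly as the prefactor, which is what lets us discard it via the trivial sup-norm bound rather than having to control $\alpha_N^{(t)}(\bar{\omega}\psi)$ and $1/\alpha_N^{(t)}(\bar{\omega})$ separately.
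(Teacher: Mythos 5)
Your proof is correct and follows essentially the same route as the paper's: both rest on the exact identity $\beta_N^{(t)}(\psi)-{\beta'}_N^{(t)}(\psi)=\beta_N^{(t)}(\psi)\bigl(1-\alpha_N^{(t)}(\bar{\omega})\bigr)$ followed by the sup-norm bound $|\beta_N^{(t)}(\psi)|\leq||\psi||_\infty$ and squaring. Your version is in fact slightly more careful, since you take absolute values before squaring, whereas the paper writes the intermediate inequality on signed quantities.
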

	\begin{proof}
		Notice that $$\beta_N^{(t)}(\psi)=\frac{{\beta'}^{(t)}_N(\psi)}{\alpha_N^{(t)}(\bar{\omega})}\,,$$
		therefore, we have that
		\begin{align}
			\beta_N^{(t)}(\psi)-{\beta'}_N^{(t)}(\psi)&=\beta_N^{(t)}(\psi)\Big(1-\alpha_N^{(t)}(\bar{\omega})\Big)\\
			&\leq ||\psi||_\infty\Big(1-\alpha_N^{(t)}(\bar{\omega})\Big)\,.
		\end{align}
		So, we can conclude that 
		\begin{align}
			\mathbb{E}\Bigl[\big|\beta_N^{(t)}(\psi)-{\beta'}_N^{(t)}(\psi)\big|^2\Bigr]\leq||\psi||_\infty^2\mathbb{E}\Big[(\alpha_N^{(t)}(\bar{\omega})-1)^2\Big]\,.
		\end{align}
	\end{proof}
	
	\begin{customthm}{V.1}
		\label{prop:consistency_appendix}
		For a bounded weight function $\omega_t(x_t)=p(y_t|x_t;\theta):\mathcal{X}\rightarrow\mathbb{R}$ and a measurable bounded $k$-Lipschitz function $\psi(\cdot):\mathcal{X}\rightarrow\mathbb{R}$, when the regularization coefficient in entropy-regularized optimal transport resampler $\epsilon_N=o(1/\log{N})$, there exist constants $c_t$ and ${c'}_t$ such that for $t\geq0$
		\begin{align}
			\label{eq:predictive_true_exp_appendix}
			\mathbb{E}\Bigg[\bigg({\alpha}_N^{(t)}(\psi)-{\beta}^{(t-1)}f(\psi)\bigg)^2\Bigg]\leq c_t\frac{||\psi||_\infty^2}{N^{1/2d_\mathcal{X}}}
		\end{align}
		(replacing ${\beta}^{(t-1)}f$ by the initial distribution $\pi(x_0,\theta)$ at time $t=0$ defined in Eq.~\eqref{eq:initial_dist}) and
		\begin{align}
			\label{eq:posterior_true_exp_appendix}
			\mathbb{E}\Biggl[\bigg(\beta_{N}^{(t)}(\psi)-\beta^{(t)}(\psi)\bigg)^2\Biggr]\leq {c'}_t\frac{||\psi||_\infty^2}{N^{1/2d_\mathcal{X}}}\,,
		\end{align}
		where ${\beta}^{(t)}$ and ${\alpha}_N^{(t)}$ are respectively defined by~Eqs.~\eqref{eq:notations} and~\eqref{eq:notations_empirical}, and $f(\cdot)$ is a transition kernel defined by $p(x_t|x_{t-1};\theta)$ in Eq.~\eqref{eq:dynamic_density}.    
	\end{customthm}
	
	\begin{proof}
		We prove the above statement by induction. Firstly, Eq.~\eqref{eq:predictive_true_exp_appendix} holds at time $t=0$ with $c_1=4$:
		\begin{align}
			&\mathbb{E}_{x_0^i\overset{\text{i.i.d}}{\sim}\pi}\Bigg[\bigg(\frac{1}{N}\sum_{i=1}^{N}\psi(x_0^i)-\pi(\psi)\bigg)^2\Bigg]\\
                =&\text{Var}\Bigg(\frac{1}{N}\sum_{i=1}^{N}\psi(x_0^i)\Bigg)\\
			=&\frac{1}{N}\text{Var}\Big(\psi(x)\Big)\\
			=&\frac{1}{N}\mathbb{E}_{X\sim\mu}\bigg[\Big(\psi(x)-\mu(\psi)\Big)^2\bigg]\\
			\leq& \frac{1}{N}\big(2||\psi||_\infty\big)^2\\
			\leq& \frac{4||\psi||_\infty^2}{N^{1/2d_\mathcal{X}}}\,.
		\end{align}
		Assume Eq.~\eqref{eq:predictive_true_exp_appendix} holds at time $t\geq0$, we have:
		\begin{align}
			\label{eq:posterior_true_decompose_appendix}
			&\beta_{N}^{(t)}(\psi)-\beta^{(t)}(\psi)\\=&\Big(\beta_N^{(t)}(\psi)-{\beta'}_N^{(t)}(\psi)\Big)\nonumber+\Big({\beta'}_N^{(t)}(\psi)-\beta^{(t)}(\psi)\Big)\,.
		\end{align}
		The MSE of the second term in the r.h.s of Eq.~\eqref{eq:posterior_true_decompose_appendix} can be bounded by applying Eq.~\eqref{eq:predictive_true_exp_appendix} to function $\bar{\omega}\psi$. The MSE of the first term in the r.h.s of Eq.~\eqref{eq:posterior_true_decompose_appendix} can be bounded by applying Lemma~\ref{lemma:posterior_weight_approximation}, then Eq.~\eqref{eq:predictive_true_exp_appendix} to function $\bar{\omega}$ (using the fact that ${\beta}^{(t-1)}f(\bar{\omega})=1$). Therefore
		\begin{align}
			&\mathbb{E}\Biggl[\bigg(\beta_{N}^{(t)}(\psi)-\beta^{(t)}(\psi)\bigg)^2\Biggr]\\
                \leq& 2\mathbb{E}\Biggl[\bigg(\beta_N^{(t)}(\psi)-{\beta'}_N^{(t)}(\psi)\bigg)^2\Biggr] +2\mathbb{E}\Biggl[\bigg({\beta'}_N^{(t)}(\psi)-\beta^{(t)}(\psi)\bigg)^2\Biggr]\\
			\leq& 4c_t\frac{||\psi||_\infty^2||\bar{\omega}||_\infty^2}{N^{1/2d_\mathcal{X}}}\,,
		\end{align}
		where we can obtain Eq.~\eqref{eq:posterior_true_exp_appendix} with $ {c'}_t=4c_t\frac{||\bar{\omega}||_\infty^2}{N^{1/2d_\mathcal{X}}}$.
		
		We then prove that Eq.~\eqref{eq:posterior_true_exp_appendix} at time $t-1$ implies Eq.~\eqref{eq:predictive_true_exp_appendix} at time $t$. Firstly, we have that
		\begin{align}
			\label{eq:predictive_true_decompose_appendix}
			&{\alpha}_N^{(t)}(\psi)-{\beta}^{(t-1)}f(\psi) \\=& \Big(\alpha_N^{(t)}(\psi)-\beta_{N}^{(t-1)}f(\psi)\Big) + \Big(\beta_{N}^{(t-1)}f(\psi) - {\beta}^{(t-1)}f(\psi)\Big)\,.
		\end{align}
		The MSE of the first term in the r.h.s of Eq.~\eqref{eq:predictive_true_decompose_appendix} can be bounded by applying Lemma~\ref{lemma:predictive_bound}, and the MSE of the second term in the r.h.s of Eq.~\eqref{eq:predictive_true_decompose_appendix} can by bounded by applying Eq.~\eqref{eq:posterior_true_exp_appendix} (at time $t-1$, replacing $\psi$ by $f\psi$), therefore we have that Eq.~\eqref{eq:predictive_true_exp_appendix} holds:
		\begin{align}
			&\mathbb{E}\Bigg[\bigg({\alpha}_N^{(t)}(\psi)-{\beta}^{(t-1)}f(\psi)\bigg)^2\Bigg]\\\leq& 2\mathbb{E}\Bigg[\Big(\alpha_N^{(t)}(\psi)-\beta_{N}^{(t-1)}f(\psi)\Big)^2\Bigg]\\&+2\mathbb{E}\Bigg[\Big(\beta_{N}^{(t-1)}f(\psi) - {\beta}^{(t-1)}f(\psi)\Big)^2\Bigg]\\
			\leq& 2\bigg[\big(\tilde{\mathcal{C}}+{c'}_t\big)\frac{||\psi||_\infty^2}{N^{1/2d_\mathcal{X}}}\bigg]\,.
		\end{align}
		So, Eq.~\eqref{eq:posterior_true_exp_appendix} at time $t-1$ leads to Eq.~\eqref{eq:predictive_true_exp_appendix} at time step $t$ with $c_t=2\big(\tilde{\mathcal{C}}+{c'}_t\big)$, where $\tilde{\mathcal{C}}$ is defined by Eq.~\eqref{eq:constant_tilde_mathcal_C}.
		
		To summarize, we have Eq.~\eqref{eq:predictive_true_exp_appendix} to hold at time 0, and Eq.~\eqref{eq:predictive_true_exp_appendix} at time step $t$ implies Eq.~\eqref{eq:posterior_true_exp_appendix} to hold at time step $t$. Then Eq.~\eqref{eq:posterior_true_exp_appendix} at time step $t$ lead to Eq.~\eqref{eq:predictive_true_exp_appendix} at time step $t+1$, therefore we can conclude that Eq.~\eqref{eq:predictive_true_exp_appendix} and Eq.~\eqref{eq:posterior_true_exp_appendix} hold for $\forall t \geq 0$.
	\end{proof}

 \section{Detailed Experiment Setups}
 \subsection{One-dimensional Linear Gaussian State-space Models}
 In this experiment, we use conditional planar flow for the construction of proposal distributions in NF-DPFs~\cite{rezende2015variational}. Normalizing flows are not used to construct the dynamic model and the measurement model in this experiment because the functional form of them are assumed to be known and we aim to approximate their parameters, following the setup in~\cite{le2018auto,maddison2017filtering,naesseth2018variational}. 
 
 At time step $t+1$, given a particle $x_{t}^i$, the following equations shows how new particles $x_{t+1}^i$ are generated from the proposal distribution of NF-DPFs in this experiment:
 \begin{gather}
     \hat{x}_{t+1}^i\sim \mathcal{N}(\theta_1 x_{t}, 1) \text{ for } t\geq1\,,,\\
     {x}_{t+1}^i=\mathcal{F}_\phi(\hat{x}_{t+1}^i;y_{t+1})\sim q( {x}_{t+1}| {x}_{t},  {y}_{t+1};\phi)\,.
 \end{gather}
 where $\mathcal{F}_\phi(\hat{x}_{t+1}^i;y_{t+1})$ is a planar flow that is defined as:
\begin{gather}
    \label{eq:appendices_conditional_planar_flow}
    \mathcal{F}_\phi(\hat{x}_{t+1}^i;y_{t+1})=\hat{x}_{t+1}^i+v{h}(w \hat{x}_{t+1}^i+b y_{t+1})\,,
\end{gather}
where $v\in\mathbb{R}$, $w\in\mathbb{R}$, and $b\in\mathbb{R}$ are learnable parameters, we use a tanh function as the smooth non-linear function $h(\cdot):\mathbb{R}\rightarrow \mathbb{R}$.

We employed the Adam optimizer to update the parameters, with a fixed learning rate of 0.002 and optimize the model for 500 iterations. We set number of particles as $N=100$ for both training, validation, and testing stages. The particles are resampled when effective sample size is smaller than 50. 

 \subsection{Multivariate Linear Gaussian State-space Models}
 \label{appendices:subsec_mlgssm}
We use conditional Real-NVP models to construct the proposal distributions of NF-DPFs in this experiment~\cite{winkler2019learning}. Following the setup in~\cite{corenflos2021differentiable}, as the functional forms of the dynamic model and the measurement model are given and the we aim to learn their parameters, normalizing flows are not employed to construct the dynamic model and the measurement model.

In this experiment, the conditional normalizing flow $\mathcal{F}_\phi(\cdot;\cdot):\mathbb{R}^{d_\mathcal{X}+d_\mathcal{Y}}\rightarrow \mathbb{R}^{d_\mathcal{X}}$ used in the construction of NF-DPF's proposal distributions is defined as a composition of $K$ conditional normalizing flows  $\mathcal{F}_{\phi_k}(\cdot;\cdot):\mathbb{R}^{d_\mathcal{X}+d_\mathcal{Y}}\rightarrow \mathbb{R}^{d_\mathcal{X}}$. Denote by $z\in \mathbb{R}^{d_\mathcal{X}}$, $u\in\mathbb{R}^{d_\mathcal{Y}}$ the input and $s\in\mathbb{R}^{d_\mathcal{X}}$ the output of $\mathcal{F}_{\phi_k}(\cdot;\cdot)$, a single component of $\mathcal{F}_\phi(\cdot;\cdot)$ can be formulated as: 
\begin{gather}
    \label{eq:appendices_conditional_coupling_layers_1}
    \underset{1:d}{s'}=\underset{1:d}{z}\,,\\
    \label{eq:appendices_conditional_coupling_layers_2}
    \underset{d+1:d_\mathcal{X}}{s'}=\underset{d+1:d_\mathcal{X}}{z}\odot e^{\tilde{\gamma'}_{\phi_k}(\underset{1:d}{z},u)}+\tilde{\eta'}_{\phi_k}(\underset{1:d}{z},u)\,,\\
    \label{eq:appendices_conditional_coupling_layers_3}
    \underset{1:d}{s}=\underset{1:d}{s'}\odot e^{\tilde{\gamma}_{\phi_k}(\underset{d+1:d_\mathcal{X}}{s'},u)}+\tilde{\eta}_{\phi_k}(\underset{d+1:d_\mathcal{X}}{s'},u)\,,\\
    \label{eq:appendices_conditional_coupling_layers_4}
    \underset{d+1:d_\mathcal{X}}{s}=\underset{d+1:d_\mathcal{X}}{s'}\,,
\end{gather}
where $d_\mathcal{X}\in\{2,5,10,25,50,100\}$ is the dimension of $x_t^i$, $d=\lfloor d_\mathcal{X}/2\rfloor$, $\tilde{\gamma}_{\phi_k}(\cdot):\mathbb{R}^{d+d_\mathcal{Y}}\rightarrow\mathbb{R}^{d_\mathcal{X}-d}$, $\tilde{\eta}_{\phi_k}(\cdot):\mathbb{R}^{d+d_\mathcal{Y}}\rightarrow\mathbb{R}^{d_\mathcal{X}-d}$, $\tilde{\gamma'}_{\phi_k}(\cdot):\mathbb{R}^{d_\mathcal{X}+d_\mathcal{Y}-d}\rightarrow\mathbb{R}^{d}$, and $\tilde{\eta'}_{\phi_k}(\cdot):\mathbb{R}^{d_\mathcal{X}+d_\mathcal{Y}-d} \rightarrow\mathbb{R}^{d}$ are constructed with two-layer fully-connected neural networks using tanh as their activation functions. 
Given a particle $x_{t}^i$ at time step $t$, new particles $x_{t+1}^i$ at time step $t+1$ are generated by:
\begin{align} 
    &\hat{x}_{t+1}^i\sim\mathcal{N}({\theta}_1x_{t}^i,  \mathbf{I}_{d_\mathcal{X}})\,,\\
    &x_{t+1}^i = \mathcal{F}_\phi(\hat{x}_{t+1}^i;y_{t+1})\\ 
    &= \mathcal{F}_{\phi_K}(\cdots\mathcal{F}_{\phi_2}(\mathcal{F}_{\phi_1}(\hat{x}_{t+1}^i;y_{t+1});y_{t+1})\cdots;y_{t+1})\,.
\end{align}
Specifically in this experiment, we set $K=1$.

We employed the Adam optimizer to update the parameters, with a fixed learning rate of 0.002 and optimize the model for 500 iterations. We set number of particles as $N=100$ for both training, validation, and testing stages. The particles are resampled when effective sample size is smaller than 50.
 \subsection{Disk Localization}

With a slight abuse of notations, in this experiment, we denote by ${x}_t$ the location of the target disk at the $t$-th time step, and ${a}_t$ the action. The velocity and the location of the target disk at the $t$-th time step can be described as follows~\cite{kloss2021train}:
\begin{align}
    \hat{{a}}_t &= {a}_t + \hat{\varsigma}_t,\;\;\;\;\hat{\varsigma}_t\overset{\text{i.i.d}}{\sim}\mathcal{N}(\textbf{0},\;\; \sigma_{\hat{\varsigma}}^2\mathbf{I})\,\,,\\ 
    {x}_{t+1} &= {x}_t + \hat{{a}}_t + \varsigma_t,\;\;\;\;\varsigma_t\overset{\text{i.i.d}}{\sim}\mathcal{N}(\textbf{0},\;\; \sigma_\varsigma^2\mathbf{I})\,\,,
    \label{eq:dynamic1}
\end{align}
where $\mathbf{I}$ is the identity matrix, $\hat{{a}}_t$ is the noisy action obtained by adding random action noise $\hat{\varsigma}_t$, $\sigma_{\hat{\varsigma}}=4$ is the standard deviation of the action noise, and $\varsigma_t$ is the dynamic noise whose standard deviation is $\sigma_\varsigma=2$. The distractors follow the same dynamic presented above.

In this experiment, we assume that the action $a_t$ is given, but neither the transition of latent state nor the relation between observations and latent state are known. Therefore, the dynamic model, the measurement model, and the proposal distribution of the proposed NF-DPF are constructed with normalizing flows. Specifically, given a particle $x_{t}^i$ at time step $t$, the dynamic model is constructed with a base distribution $g(\cdot|x_{t}^i; \theta)$ and a normalizing flow $\mathcal{T}_\theta(\cdot): \mathbb{R}^{d_\mathcal{X}}\rightarrow \mathbb{R}^{d_\mathcal{X}}$, and one can draw samples from the dynamic model for time step $t+1$ as below:
\begin{gather}
\dot{x}_{t+1}^i = x_{t}^i + a_t + \varepsilon_t^i \sim g(\dot{x}_{t+1}|x_{t}; \theta)\,,\\
{x}_{t+1}^i=\mathcal{T}_\theta(\dot{x}_{t+1}^i)\sim p({x}_{t+1}|x_{t};\theta)\,.
\end{gather}

The normalizing flow $\mathcal{T}_\theta(\cdot)$ is defined as a composition of $K$ conditional normalizing flows  $\mathcal{T}_{\theta_k}(\cdot):\mathbb{R}^{d_\mathcal{X}}\rightarrow \mathbb{R}^{d_\mathcal{X}}$. Denote by $z\in \mathbb{R}^{d_\mathcal{X}}$ the input and $s\in\mathbb{R}^{d_\mathcal{X}}$ the output of $\mathcal{T}_{\theta_k}(\cdot)$, a single component of $\mathcal{T}_{\theta}(\cdot)$ can be formulated as: 
\begin{gather}
    \label{eq:appendices_coupling_layers_1}
    \underset{1:d}{s'}=\underset{1:d}{z}\,,\\
    \label{eq:appendices_coupling_layers_2}
    \underset{d+1:d_\mathcal{X}}{s'}=\underset{d+1:d_\mathcal{X}}{z}\odot e^{\tilde{\gamma'}_{\theta_k}(\underset{1:d}{z})}+\tilde{\eta'}_{\theta_k}(\underset{1:d}{z})\,,\\
    \label{eq:appendices_coupling_layers_3}
    \underset{1:d}{s}=\underset{1:d}{s'}\odot e^{\tilde{\gamma}_{\theta_k}(\underset{d+1:d_\mathcal{X}}{s'})}+\tilde{\eta}_{\theta_k}(\underset{d+1:d_\mathcal{X}}{s'})\,,\\
    \label{eq:appendices_coupling_layers_4}
    \underset{d+1:d_\mathcal{X}}{s}=\underset{d+1:d_\mathcal{X}}{s'}\,,
\end{gather}
where $d_\mathcal{X}$ is the dimension of $x_t^i$, $d=\lfloor d_\mathcal{X}/2\rfloor$, $\tilde{\gamma}_{\theta_k}(\cdot):\mathbb{R}^{d+d_\mathcal{Y}}\rightarrow\mathbb{R}^{d_\mathcal{X}-d}$, $\tilde{\eta}_{\theta_k}(\cdot):\mathbb{R}^{d+d_\mathcal{Y}}\rightarrow\mathbb{R}^{d_\mathcal{X}-d}$, $\tilde{\gamma'}_{\theta_k}(\cdot):\mathbb{R}^{d_\mathcal{X}+d_\mathcal{Y}-d}\rightarrow\mathbb{R}^{d}$, and $\tilde{\eta'}_{\theta_k}(\cdot):\mathbb{R}^{d_\mathcal{X}+d_\mathcal{Y}-d} \rightarrow\mathbb{R}^{d}$ are constructed with two-layer fully-connected neural networks using tanh as their activation functions. 
Given a particle $x_{t}^i$ at time step $t$, new particles $x_{t+1}^i$ at time step $t$ are generated by applying $\mathcal{T}_{\theta}(\cdot)$ to $\dot{x}_{t+1}^i$:
\begin{gather} 
    x_{t+1}^i = \mathcal{T}_\theta(\dot{x}_{t+1}^i) = \mathcal{T}_{\theta_K}(\cdots\mathcal{T}_{\theta_2}(\mathcal{T}_{\theta_1}(\dot{x}_{t+1}^i)\cdots)\,.
\end{gather}
Specifically in this experiment, we set $K=2$ and use Real-NVP models to construct $\mathcal{T}_{\theta}(\cdot)$.

For this experiment, the proposed NF-DPF approximates the conditional likelihood $p(y_t|x_t;\theta)$ of an observation $y_t$ conditioned on a latent state $x_t$ as follows:
\begin{align}
        \label{eq:appendices_obs_feature_disk}
    {e}_t&=U_\theta({y}_t)\,,\\
    \label{eq:appendices_feature_base}
    z_t&=\mathcal{G}^{-1}_\theta(e_t; x_t)\,,\\
    p(y_t|x_t;\theta)&\approx p(e_t|x_t;\theta)\\
    &=p_Z(z_t)\bigg|\operatorname{det} J_{\mathcal{G}_\theta}(z_t; x_t)\bigg|^{-1}\,,
    \label{eq:appendices_measurement_cnf}
\end{align}
where $U_\theta(\cdot):\mathbb{R}^{\mathcal{Y}}\rightarrow\mathbb{R}^{d_e}$ is the encoder function constructed with convolutional neural networks with that transforms $128\times128$ RGB images to $32$-dimensional feature vectors, $\mathcal{G}^{-1}_\theta(\cdot):\mathbb{R}^{d_e}\times\mathcal{X}:\rightarrow \mathbb{R}^{d_e}$ is a conditional normalizing flow built by stacking two conditional Real-NVP models~\cite{winkler2019learning}, and $p_Z(\cdot)$ is the PDF of a standard Gaussian distribution.

Similar to the way we constructed the proposal distribution of NF-DPFs in the multivariate Gaussian experiment we presented in Appendix~\ref{appendices:subsec_mlgssm}, the proposal distribution of NF-DPF in this experiment is constructed with a conditional normalizing flow, the conditional Real-NVP model~\cite{winkler2019learning}, while we set the number of flows $K=2$ for this experiment to build a more expressive proposal distribution, as the dimension of observations in this experiment is higher than that of the multivariate Gaussian experiment. Given a particle $x_{t}^i$ at time step $t$, new particles $x_{t+1}^i$ from the proposal distribution at time step $t+1$ are generated by:
\begin{align} 
    &\hat{x}_{t+1}^i = x_{t}^i + a_t + \varepsilon_t^i \sim g(\hat{x}_{t+1}|x_{t}; \theta)\\
    &x_{t+1}^i = \mathcal{F}_\phi(\hat{x}_{t+1}^i;y_{t+1})\\ 
    &= \mathcal{F}_{\phi_K}(\cdots\mathcal{F}_{\phi_2}(\mathcal{F}_{\phi_1}(\hat{x}_{t+1}^i;y_{t+1});y_{t+1})\cdots;y_{t+1})\,.
\end{align}
The Adam optimizer~\cite{kingma2015adam} with a learning rate of 0.001 is used in this experiment to minimize the loss function $\mathcal{L}(\theta,\phi)$ defined in Eq.~\eqref{eq:loss_disk}. 

\subsection{Robot Localization in Maze Environments}
The dynamic model, measurement model, and proposal distribution of NF-DPFs in this experiment are constructed with normalizing flows. In particular, one can draw samples from the dynamic model of NF-DPFs by first drawing samples $\dot{x}_{t+1}^i$ with Eq.~\eqref{eq:robot_dynamic} and then applying a normalizing flow $\mathcal{T}_\theta(\cdot): \mathbb{R}^{d_\mathcal{X}}\rightarrow \mathbb{R}^{d_\mathcal{X}}$ to $\dot{x}_{t+1}^i$:
\begin{gather}{x}_{t+1}^i=\mathcal{T}_\theta(\dot{x}_{t+1}^i)\sim p({x}_{t+1}|x_{t};\theta)\,,
\end{gather}
where $\mathcal{T}_\theta(\cdot)$ is defined as a composition of $K$ Real-NVP models defined in Eqs.~\eqref{eq:appendices_coupling_layers_1}-\eqref{eq:appendices_coupling_layers_4}. 

Given a particle $x_{t}^i$ at time step $t$, at time step $t+1$, new particles $x_{t+1}^i$ from the proposal distribution of the NF-DPF are generated by first drawing samples $\hat{x}_{t+1}^i$ with Eq.~\eqref{eq:robot_dynamic} and then applying a conditional normalizing flow $\mathcal{F}_\phi(\cdot;\cdot):\mathbb{R}^{d_\mathcal{X}+d_\mathcal{Y}}\rightarrow \mathbb{R}^{d_\mathcal{X}}$:
\begin{gather}
    x_{t+1}^i = \mathcal{F}_\phi(\hat{x}_{t+1}^i;y_{t+1})\sim q( {x}_{t+1}| {x}_{t},  {y}_{t+1};\phi)\,,
\end{gather}
where $\mathcal{F}_\phi(\cdot;\cdot)$ is defined as a composition of $K$ conditional Real-NVP models defined in Eqs.~\eqref{eq:appendices_conditional_coupling_layers_1}-\eqref{eq:appendices_conditional_coupling_layers_4}.

For this experiment, the proposed NF-DPF approximates the conditional likelihood $p(y_t|x_t;\theta)$ of an observation $y_t$ conditioned on a latent state $x_t$ as follows:
\begin{align}
        \label{eq:appendices_obs_feature}
    {e}_t&=U_\theta({y}_t)\,,\\
    \label{eq:appendices_feature_base_2}
    z_t&=\mathcal{G}^{-1}_\theta(e_t; x_t)\,,\\
    p(y_t|x_t;\theta)&\approx p(e_t|x_t;\theta)\\
    &=p_Z(z_t)\bigg|\operatorname{det} J_{\mathcal{G}_\theta}(z_t; x_t)\bigg|^{-1}\,,
    \label{eq:appendices_measurement_cnf_2}
\end{align}
where $U_\theta(\cdot):\mathbb{R}^{\mathcal{Y}}\rightarrow\mathbb{R}^{d_e}$ is the encoder function constructed with convolutional neural networks with that transforms $32\times32$ RGB images to $32$-dimensional feature vectors, $\mathcal{G}^{-1}_\theta(\cdot):\mathbb{R}^{d_e}\times\mathcal{X}:\rightarrow \mathbb{R}^{d_e}$ is a conditional normalizing flow built by stacking two conditional Real-NVP models~\cite{winkler2019learning}, and $p_Z(\cdot)$ is the PDF of a standard Gaussian distribution.

In this experiment, we set $K=2$, i.e. the (conditional) normalizing flows are constructed by stacking two (conditional) Real-NVP models~\cite{dinh2017density,winkler2019learning}. We set the learning rate to be 0.001, and use the Adam optimizer to train DPFs by minimizing the loss function $\mathcal{L}(\theta,\phi)$ defined in Eq.~\eqref{eq:loss_disk}.

\subsection{Robot tracking in Michigan NCLT dataset}

In this experiment, the dynamics of the moving robot is modeled as follows:
\begin{align}
    x_{t+1}:&=\left[\begin{array}{c}
        l_{t+1}^{(1)} \\
        l_{t+1}^{(2)} \\
        \Delta l_{t+1}^{(1)}\\
        \Delta l_{t+1}^{(2)}
    \end{array}\right]
    =\left[\begin{array}{c}
        l_t^{(1)}+\Delta l_t^{(1)} \\
        l_t^{(2)}+\Delta l_t^{(2)}\\
        \Delta l_t^{(1)}\\
        \Delta l_t^{(2)}
    \end{array}\right]+\varsigma_t\,,
 \label{eq:nclt_dynamic}
\end{align}
where $\varsigma_t\sim\mathcal{N}(\textbf{0},\Sigma^2)$ is the dynamic noise, and $\Sigma:=\text{diag}(\sigma_l, \sigma_l, \sigma_{\Delta l}, \sigma_{\Delta l})$ with $\sigma_l=1.0$ and $\sigma_{\Delta l}=1.0$. 

The dynamic model, measurement model, and proposal
distribution of NF-DPFs in this experiment are constructed
with normalizing flows. To draw samples from the dynamic model of NF-DPFs, one can first draw samples $\dot{x}_{t+1}^i$ with Eq.~\eqref{eq:nclt_dynamic} and then apply a normalizing flow $\mathcal{T}_\theta(\cdot): \mathbb{R}^{d_\mathcal{X}}\rightarrow \mathbb{R}^{d_\mathcal{X}}$ to $\dot{x}_{t+1}^i$:
\begin{gather}{x}_{t+1}^i=\mathcal{T}_\theta(\dot{x}_{t+1}^i)\sim p({x}_{t+1}|x_{t};\theta)\,,
\end{gather}
where $\mathcal{T}_\theta(\cdot)$ is defined as a composition of $K$ Real-NVP models defined in Eqs.~\eqref{eq:appendices_coupling_layers_1}-\eqref{eq:appendices_coupling_layers_4}. 
To draw samples from the proposal distribution of the NF-DPF, we first draw samples $\hat{x}_{t+1}^i$ with Eq.~\eqref{eq:nclt_dynamic} and then apply a conditional normalizing flow $\mathcal{F}_\phi(\cdot;\cdot):\mathbb{R}^{d_\mathcal{X}+d_\mathcal{Y}}\rightarrow \mathbb{R}^{d_\mathcal{X}}$ to $\hat{x}_{t+1}^i$:
\begin{gather}
    x_{t+1}^i = \mathcal{F}_\phi(\hat{x}_{t+1}^i;y_{t+1})\sim q( {x}_{t+1}| {x}_{t},  {y}_{t+1};\phi)\,,
\end{gather}
where $\mathcal{F}_\phi(\cdot;\cdot)$ is defined as a composition of $K$ conditional Real-NVP models defined in Eqs.~\eqref{eq:appendices_conditional_coupling_layers_1}-\eqref{eq:appendices_conditional_coupling_layers_4}.

For the measurement model in the NF-DPF, the conditional likelihood $p(y_t|x_t;\theta)$ is approximated as below:
\begin{align}
        \label{eq:appendices_obs_feature_nclt}
    z_t&=\mathcal{G}^{-1}_\theta(y_t; x_t)\,,\\
    p(y_t|x_t;\theta)&\approx p(y_t|x_t;\theta)\\
    &=p_Z(z_t)\bigg|\operatorname{det} J_{\mathcal{G}_\theta}(z_t; x_t)\bigg|^{-1}\,,
    \label{eq:appendices_measurement_cnf_2_nclt}
\end{align}
where $\mathcal{G}^{-1}_\theta(\cdot):\mathbb{R}^{d_e}\times\mathcal{X}:\rightarrow \mathbb{R}^{d_e}$ is a conditional normalizing flow built by stacking two conditional Real-NVP models~\cite{winkler2019learning}, the observation $y_t$ is noisy odometry reading of the robot, and $p_Z(\cdot)$ is the PDF of a standard Gaussian distribution. 

In this experiment, we set $K=2$, i.e. the (conditional) normalizing flows are constructed by stacking two (conditional) Real-NVP models~\cite{dinh2017density,winkler2019learning}. We set the learning rate to be 0.001, and use the Adam optimizer to train DPFs by minimizing the RMSE loss function $\mathcal{L}_{\textnormal{RMSE}}(\theta,\phi)$ defined in Eq.~\eqref{eq:loss_rmse}.
	
	\vfill
        
    \end{appendices}	
\end{document}